\newtheorem{theorem}{Theorem}
\newtheorem{proof}{Proof}
\newtheorem{proposition}{Proposition}
\title{AdvAD: Exploring Non-Parametric Diffusion for Imperceptible Adversarial Attacks}
\author{
  Jin Li$^{1}$, Ziqiang He$^{1}$, Anwei Luo$^{1}$, Jian-Fang Hu$^{1}$, Z. Jane Wang$^{2}$, Xiangui Kang$^{1}$\thanks{Corresponding author.} \\
  $^{1}$Guangdong Key Lab of Information Security, \\
  School of Computer Science and Engineering, Sun Yat-Sen University \\
  $^{2}$Electrical and Computer Engineering Dept, University of British Columbia \\
}
\begin{document}

\renewcommand{\arraystretch}{0.6}
\setlength{\aboverulesep}{1.25pt} 
\setlength{\belowrulesep}{1.25pt} 

\maketitle

\begin{abstract}
Imperceptible adversarial attacks aim to fool DNNs by adding imperceptible perturbation to the input data. Previous methods typically improve the imperceptibility of attacks by integrating common attack paradigms with specifically designed perception-based losses or the capabilities of generative models. In this paper, we propose Adversarial Attacks in Diffusion (AdvAD), a novel modeling framework distinct from existing attack paradigms. AdvAD innovatively conceptualizes attacking as a non-parametric diffusion process by theoretically exploring basic modeling approach rather than using the denoising or generation abilities of regular diffusion models requiring neural networks. At each step, much subtler yet effective adversarial guidance is crafted using only the attacked model without any additional network, which gradually leads the end of diffusion process from the original image to a desired imperceptible adversarial example. Grounded in a solid theoretical foundation of the proposed non-parametric diffusion process, AdvAD achieves high attack efficacy and imperceptibility with intrinsically lower overall perturbation strength. Additionally, an enhanced version AdvAD-X is proposed to evaluate the extreme of our novel framework under an ideal scenario. Extensive experiments demonstrate the effectiveness of the proposed AdvAD and AdvAD-X. Compared with state-of-the-art imperceptible attacks, AdvAD achieves an average of 99.9$\%$ (+17.3$\%$) ASR with 1.34 (-0.97) $l_2$ distance, 49.74 (+4.76) PSNR and 0.9971 (+0.0043) SSIM against four prevalent DNNs with three different architectures on the ImageNet-compatible dataset. 
Code is available at \url{https://github.com/XianguiKang/AdvAD}.

\end{abstract}

\section{Introduction}
Deep Neural Networks (DNNs) are shown to be vulnerable to adversarial attacks \cite{42503,goodfellow2014explaining} (i.e., add maliciously crafted perturbations to the input data), posing serious security concerns to real-world applications \cite{yuan2019adversarial}.
The research of adversarial attacks also plays an important role in proactively exposing potential threats, as well as promoting model robustness and corresponding defense methods \cite{naseer2020self, lee2023robust, singh2024revisiting, luo2023beyond, tramèr2018ensemble, salman2020adversarially, luo2021capsule, cohen2019certified}.
Many attacks \cite{madry2018towards,dong2018boosting,zhang2022enhancing,wei2023enhancing} focus on maximizing the attack success rate and transferability under relatively lenient restrictions (i.e., $l_{\infty}$ or $l_{2}$ norm) of adversarial perturbation, but they could have poor stealthiness and imperceptibility since the crafted adversarial examples can be easily detected by the Human Visual System (HVS) \cite{sharif2018suitability}.
Therefore, imperceptible adversarial attacks 
\cite{carlini2017towards,luo2018towards,zhao2020towards,laidlaw2021perceptual,duan2021advdrop,chen2023imperceptible,jia2022exploring}, 
aiming to maintain attacking efficacy while improving imperceptibility, have attracted considerable attention.  

Current imperceptible adversarial attacks could be summarized into two categories: 1) {perturbation-based} attacks devised on perceptual characteristics, and 2) unrestricted attacks. 
The first one is motivated by the fact that adding adversarial perturbations to different components of an image has varying perceptual quality levels to the HVS. By studying components such as image color \cite{zhao2020towards}, texture complexity \cite{duan2021advdrop}, frequency spectrum \cite{jia2022exploring, luo2022frequency}, etc., these methods design corresponding perceptual-based loss functions and incorporate them to the optimization process to craft adversarial examples where the adversarial perturbation is constrained and hidden within specific image regions.
Instead of injecting noise-like adversarial perturbations, unrestricted attacks heavily but reasonably modify attributes of images like semantic content to perform attacks. Apart from early work that adopts GANs \cite{song2018constructing}, some recent methods combine the prevalent diffusion models \cite{ho2020denoising, song2020denoising, rombach2022high} into the adversarial optimization process in an image edition-like way of repeatedly adding noise and denoising to eliminate the noise pattern within the final adversarial examples \cite{chen2023advdiffuser, xue2023diffusion} or optimize the embedding of latent diffusion models \cite{chen2023diffusion, chen2024content}. However, due to the uncertainty of generative models and the unrestricted setting itself, some unrestricted adversarial examples inevitably exhibit obvious unnatural texture or semantic changes and lose the imperceptibility, especially for images with complex content. Although previous methods have equipped attacks with imperceptibility utilizing various designs mentioned above, it remains an essential challenge of achieving imperceptible adversarial attacks: \textit{How to attack with inherently {minimal perturbation strength} from a modeling perspective?}

To address this fundamental challenge, we propose \textbf{Adversarial Attacks in Diffusion} (\textbf{AdvAD}), a brand new modeling framework distinct from common attack paradigms of gradient ascending \cite{goodfellow2014explaining} or optimization with adversarial losses \cite{carlini2017towards}. The proposed AdvAD explores a novel \textit{non-parametric} diffusion process for attacks, which fully inherits two key merits of diffusion models: \textbf{i)} the modeling philosophy of converting a difficult task into a series of simple sub-tasks, and \textbf{ii)} solid theoretical foundation. Specifically, AdvAD achieves high attack efficacy with intrinsically lower perturbation strength by innovatively modeling the attack process as a decomposed diffusion trajectory from an initialized noise to an adversarial example. At each step, a much subtler {(for imperceptibility)} yet more effective {(for attack performance) \textit{adversarial guidance}} is calculated and injected with two cooperating, theoretically grounded {non-parametric modules} called Attacked Model Guidance (AMG) and Pixel-level Constraint (PC), which gradually leads the end of this trajectory from the original image distribution to a desired adversarially conditioned distribution based on the theory of diffusion models (e.g., deterministic diffusion\cite{song2020denoising}, conditional sampling \cite{dhariwal2021diffusion, song2020score}, etc.).

{Here, we would like to clarify that the proposed diffusion process for attacks is considered as non-parametric since it does not require additional networks as needed in regular diffusion models for noise estimation. AdvAD firstly initializes a fixed diffusion noise, which is then ingeniously manipulated at each step via the adversarial guidance crafted by the proposed AMG and PC modules using only the attacked model with theoretically derived equations. In this way, the proposed AdvAD is facilitated with the modeling approach of diffusion models rather than their denoising or generative capabilities, which avoids the negative impact like semantic content changes caused by the uncertainty of generative models and also promises relatively low computational complexity.}
Based on AdvAD, we further propose an enhanced version \textbf{AdvAD-X} (`X' for `eXtreme') with two extra strategies to squeeze the extreme performance in an ideal scenario of the proposed new modeling framework with unique properties, which also possesses theoretical significance and provides new insights for revealing the robustness of DNNs. In summary, our main contributions are:

\begin{itemize}[leftmargin=1.2em]
\item Addressing the essential challenge of imperceptible adversarial attacks from a novel modeling perspective for the first time, we theoretically explore and derive the basic modeling of diffusion models to perform attacks with inherently lower perturbation strength through a {non-parametric} diffusion process that requires no additional networks.  
\item We propose two attack versions, AdvAD and AdvAD-X. For the basic AdvAD, the AMG and PC modules cooperate to craft much subtler yet effective adversarial guidance which is progressively injected via initialized diffusion noise at each step, and AdvAD-X further reduces the perturbation strength to an extreme level in an ideal scenario with theoretical significance.
\item Extensive experiments are conducted to evaluate the effectiveness of our methods in terms of attack success rate, imperceptibility, and robustness. Experimental results demonstrate the superiority of the novel modeling approach for imperceptible adversarial attacks. 
\end{itemize}

\section{Preliminaries}
\label{sec:3.1}
\textbf{Adversarial Attacks.} Given an original image $\boldsymbol{x}_{ori}$ with ground-truth label $y_{gt}$ and a classifier $f(\cdot)$ satisfying $f(\boldsymbol{x}_{ori})=y_{gt}$, normal untargeted attacks aim to craft the adversarial example $\boldsymbol{x}_{adv}$ that misleads the classifier, formulated as:
\begin{equation}\tag{1}
f(\boldsymbol{x}_{adv}) \neq y_{gt}, \quad s.t. \|\boldsymbol{x}_{adv}-\boldsymbol{x}_{ori}\|_p \leq \zeta,
\label{maineq:1}
\end{equation}
where $\|\cdot\|_p$ represents $l_p$-norm that is usually implemented with $l_\infty$-norm to limit the distance between $\boldsymbol{x}_{adv}$ and $\boldsymbol{x}_{ori}$ within an upper bound of budget $\zeta$. In this paper,
we focus on the more general setting of untargeted attacks.
More information on related works is provided in \textbf{Appendix}~\ref{app:A}.

\textbf{Deterministic Diffusion Process.} In the deterministic situation of DDIM \cite{song2020denoising} with $\sigma_t=0$, for an image $\boldsymbol{x}_0$ and pre-defined diffusion coefficients $\alpha_{0:T}\in(0,1]^T$ for step $t\in [0:T]$, $\boldsymbol{x}_t$ in the \textit{Forward} process of adding noise to $\boldsymbol{x}_0$ is given by $\boldsymbol{x}_t = \sqrt{\alpha_t}\boldsymbol{x}_0 + \sqrt{1-\alpha_t}\boldsymbol{\epsilon}$, where $\boldsymbol{\epsilon} \sim \mathcal{N}(\boldsymbol{0}, \boldsymbol{\mathit{I}})$ represents Gaussian noise.
For the \textit{Backward} denoising steps, unlike the DDPM \cite{ho2020denoising} based on Markov
chains that each state directly depends on the previous one, DDIM employs a non-Markovian approach. In the backward process, each step first involves calculating a "prediction" of final step $\boldsymbol{x}_t^0$ from current $\boldsymbol{x}_t$, then adding noise to it again to obtain $\boldsymbol{x}_{t-1}$, expressed as:
\begin{equation}\tag{2}
    \boldsymbol{x}_{t-1} = \sqrt{\alpha_{t-1}}(\frac{\boldsymbol{x}_t-\sqrt{1-\alpha_t}\boldsymbol{\epsilon}_\theta(\boldsymbol{x}_t)}{\sqrt{\alpha_t}}) + \sqrt{1-\alpha_{t-1}}\boldsymbol{\epsilon}_\theta(\boldsymbol{x}_t),
\label{maineq:2}
\end{equation}
where $\boldsymbol{\epsilon}_\theta(\boldsymbol{x}_t)$ is a estimated diffusion noise using a pretrained neural network $\theta$ for current step, and the term in the first parenthesis represents  the predicted $\boldsymbol{x}_t^0$, derived by a simple variation of Eq.~\eqref{maineq:2}.

\textbf{Conditional sampling.} Song et al. \cite{song2020score} propose the conditional sampling technique for the \textit{score-based} generative models with \textit{score function} $\nabla_{\boldsymbol{x}_t}\text{log}\,p(\boldsymbol{x}_t)$ \cite{song2019generative}, a kind of generative model has close relationship to diffusion models. 
Without loss of generality, for a condition $y$ (e.g., class label, mask, etc.) and corresponding conditional distribution $p(\boldsymbol{x}|y)$, a score-based model can sample from $p(\boldsymbol{x}|y)$ by modifying the score function at each step of $t$ to $\nabla_{\boldsymbol{x}_t}\text{log}(p(\boldsymbol{x}_t)p(y|\boldsymbol{x}_t))$ if $p(y|\boldsymbol{x}_t)$ is known. Subsequently, with the connection between the score function and the noise $\boldsymbol{\epsilon}_t$ of diffusion models as $\nabla_{\boldsymbol{x}_t}\text{log}\,p(\boldsymbol{x}_t)=-{1}/{\sqrt{1-\alpha_t}}\boldsymbol{\epsilon}_t$ \cite{song2020score},
this joint distribution could be expanded to the deterministic process of DDIM, achieved by updating the noise $\boldsymbol{\epsilon}_t$ to $\boldsymbol{\epsilon}_t'$ at each step as  \cite{dhariwal2021diffusion}:
\begin{equation}\tag{3}
    \boldsymbol{\epsilon}_t' = \boldsymbol{\epsilon}_t - \sqrt{1-\alpha_t}\nabla_{\boldsymbol{x}_t}\text{log}\,p(y|\boldsymbol{x}_{t}).
\label{maineq:3}
\end{equation}

\section{Proposed Adversarial Attacks in Diffusion}
\subsection{Overview} 
From a novel modeling perspective, we propose \textbf{Adversarial Attacks in Diffusion (AdvAD)} to attack with inherently smaller perturbation strength through a non-parametric diffusion process for the first time. As shown in Figure \ref{fig:1}, different from previous attack paradigms {that employ gradient ascending or optimization with varying kinds of adversarial losses,}  
{AdvAD innovatively performs attack within a decomposed non-parametric diffusion trajectory starting from an initialized noise, in which very subtle yet effective adversarial guidance is crafted and injected to gradually push the end of this trajectory to a desired adversarially conditioned distribution from the original image.} 

Intuitively, given the original image $\boldsymbol{x}_{ori}$ with an initialized Gaussian noise $\boldsymbol{\epsilon}_0 \sim \mathcal{N}(\boldsymbol{0}, \boldsymbol{\mathit{I}})$, a fixed diffusion trajectory from $\boldsymbol{\Bar{x}}_{T}$ to $\boldsymbol{\Bar{x}}_{0}$ ($\boldsymbol{\Bar{x}}_{0}=\boldsymbol{x}_{ori}$) can be easily obtained using DDIM \textit{Backward} for the deterministic diffusion process as:
\begin{equation}\tag{4}
    \boldsymbol{\Bar{x}}_T = \sqrt{\alpha_T}\boldsymbol{x}_{ori} + \sqrt{1-\alpha_T}\boldsymbol{\epsilon}_0,
\label{maineq:4}
\end{equation}
\begin{equation}\tag{5}
    \boldsymbol{\Bar{x}}_{t-1} = \sqrt{\alpha_{t-1}}(\frac{\boldsymbol{\Bar{x}}_t-\sqrt{1-\alpha_t}\boldsymbol{\epsilon}_0}{\sqrt{\alpha_t}}) + \sqrt{1-\alpha_{t-1}}\boldsymbol{\epsilon}_0.
\label{maineq:5}
\end{equation}
With this deterministic diffusion trajectory of the original image, performing adversarial attacks within it requires solving two main problems: \textbf{i)} directing the final result of this diffusion process to a desired adversarial example rather than the original image; \textbf{ii)} ensuring the modified trajectory (denoted as $\boldsymbol{\hat{x}}_t$, $\boldsymbol{\hat{\epsilon}}_t$ for step $t$) close to the original trajectory ($\boldsymbol{\bar{x}}_t$, $\boldsymbol{{\epsilon}}_0$ for step $t$) of the clean image to achieve the imperceptibility of attacks. To fulfill the dual purposes, we propose two theoretically grounded modules, called \textbf{Attacked Model Guidance (AMG)} and \textbf{Pixel-level Constraint (PC)} to work together. At each step, AMG utilizes only the attacked model $f(\cdot)$ to produce the adversarial guidance without requiring any additional networks, synergistically collaborated with PC to constrain and streamline the diffusion process injected with the guidances.

\begin{figure*}[t]
\centering
\includegraphics[width=0.95\textwidth]{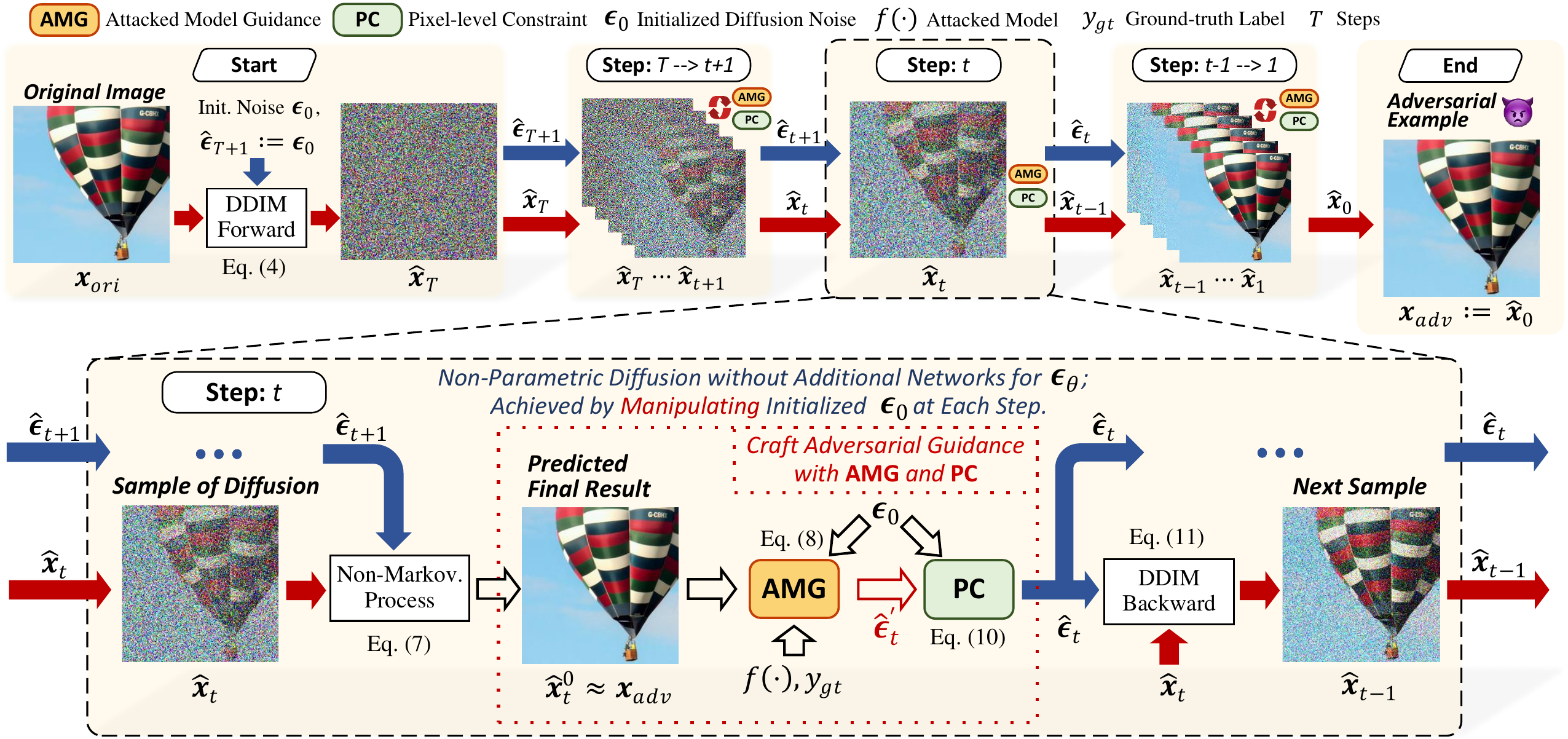}
\caption{
Overview of the proposed Adversarial Attacks in Diffusion (AdvAD) that models the attack {as a non-parametric diffusing process}. 
At each step, Attacked Model Guidance (AMG) module adopts the non-Markovian process for approximating $\boldsymbol{x}_{adv}$ using $\boldsymbol{\hat{x}}_{t}^0$ to craft adversarial guidance and injects it into the initialized diffusion noise, then Pixel-level Constraint (PC) module imposes restriction to produce the noise for the next step and serves to control the whole process precisely.
}
\label{fig:1}
\vspace{-0.5cm}
\end{figure*}

\subsection{Attacked Model Guidance Module}
\label{sec:3.2}

By viewing the attack process as a distribution-to-distribution transformation through a non-parametric diffusion process, the proposed AMG module theoretically integrates the conditional sampling technique of diffusion models to craft the adversarial guidance only using the attacked model $f(\cdot)$. 
For untargeted attacks, the ultimate goal is modifying $\boldsymbol{x}_{ori}$ with $f(\boldsymbol{x}_{ori}) = y_{gt}$ to $\boldsymbol{x}_{adv}$ so that $f(\boldsymbol{x}_{adv}) \neq y_{gt}$, 
which can be regarded as directing the determined distribution $p(\boldsymbol{x}_{ori})$ of the original diffusion trajectory to an distribution of $\boldsymbol{x}_{adv}$ with the attacked model as $p(\boldsymbol{x}_{adv} | f(\boldsymbol{x}_{adv}) \neq y_{gt})$. {Thus, we regard $f(\boldsymbol{x}_{adv}) \neq y_{gt}$ as an adversarial condition, and employ the conditional sampling technique to the original trajectory by manipulating the diffusion noise to achieve this, expressed as:}   

\begin{equation}\tag{6}
\begin{aligned}
    \boldsymbol{\hat{\epsilon}}_t' & = \boldsymbol{\epsilon}_0 - \sqrt{1-\alpha_t}\nabla_{\boldsymbol{\hat{x}}_t}\text{log}\,p(f(\boldsymbol{x}_{adv}) \neq y_{gt}|\boldsymbol{\hat{x}}_{t}) \\
                                  & = \boldsymbol{{\epsilon}}_{0} - \sqrt{1-\alpha_t}\nabla_{\boldsymbol{\hat{x}}_t}\text{log}(1-p(f(\boldsymbol{x}_{adv})=y_{gt}|\boldsymbol{\hat{x}}_{t})).
\end{aligned}
\label{maineq:6}
\end{equation}
However, Eq.~\eqref{maineq:6} is unsolvable since $\boldsymbol{x}_{adv}$ is unknown during the diffusing process. To address this, inspired by the properties of deterministic non-Markovian DDIM that a final diffusion result is firstly predicted at each step, we calculate $\boldsymbol{\Hat{x}}_t^0$ via the {equation of DDIM non-Markovian process} with $\boldsymbol{\hat{\epsilon}}_{t+1}$ from the previous step, and use it to approximate $\boldsymbol{x}_{adv}$, expressed as:
\begin{equation}\tag{7}
    \boldsymbol{x}_{adv} \approx \boldsymbol{\Hat{x}}_t^0 = \frac{\boldsymbol{\Hat{x}}_t - \sqrt{1-\alpha_t}\;\boldsymbol{\hat{\epsilon}}_{t+1}}{\sqrt{\alpha_t}}.
\label{maineq:7}
\end{equation}
The accurate error upper bound and convergence of this approximation are given in Proposition~\ref{mainprop2} in conjunction with the proposed PC module, and the validity of this approximation can also be explained intuitively from the premise of our method. That is, we have $\boldsymbol{\bar{x}}_t^0=\boldsymbol{x}_{ori}$ for all step $t$ in the original diffusion trajectory, and the modified trajectory should be very close to the original one, so that the relationship between $\boldsymbol{\Hat{x}}_t^0$ and $\boldsymbol{x}_{adv}$ should satisfy $\boldsymbol{\Hat{x}}_t^0 \approx \boldsymbol{x}_{adv}$.

With  Eq.~\eqref{maineq:7}, the term of $p(f(\boldsymbol{x}_{adv})=y_{gt}|\boldsymbol{\hat{x}}_{t})$ in Eq.~\eqref{maineq:6}  can be written as $p(f(\boldsymbol{\Hat{x}}_t^0)=y_{gt}|\boldsymbol{\hat{x}}_{t}) = p(f(\boldsymbol{\Hat{x}}_t^0)=y_{gt})$ since $\boldsymbol{\Hat{x}}_t^0$ is calculated from $\boldsymbol{\Hat{x}}_t$, which is exactly the output logits of $f(\boldsymbol{\Hat{x}}_t^0)$ with $Softmax(\cdot)$ function for the class $y_{gt}$. Denoting this term as the classification probability of the attacked model as $p_f(y_{gt}|\boldsymbol{\hat{x}}_t^0)$, we can obtain the solvable equation of AMG module that injects adversarial guidance to the initialized diffusion noise {using only $f(\cdot)$ without any additional network}:
\begin{equation}\tag{8}
    \boldsymbol{\hat{\epsilon}}'_t = \text{AMG}(\boldsymbol{\epsilon}_0, \boldsymbol{\Hat{x}}_{t}^{0}, f(\cdot), y_{gt}) =  \boldsymbol{\epsilon}_0 - \sqrt{1-\alpha_t}\nabla_{\boldsymbol{\Hat{x}}_t}\text{log}(1-p_f(y_{gt}|\boldsymbol{\Hat{x}}_{t}^{0})).
\label{maineq:8}
\end{equation}

At this point, in addition to the benefits from modeling, this calculation process of AMG also plays a role in endowing AdvAD with imperceptibility. As the attack progresses, the probability $p_f$, the term of $\text{log}(1-p_f)$ as well as coefficient $\sqrt{1-\alpha_t}$ gradually approach 0, which means the strength of injected adversarial guidance gradually converge to 0 in AdvAD, while common classification losses (e.g, Cross-Entropy, Log Loss, etc.) used in other attack paradigms may increase on the contrary. Further analysis and experiments on this property are provided in Proposition~\ref{prop1} and Sec.~\ref{sec:4.5}.

\begin{figure}[t]
\centering
\begin{minipage}{0.9\linewidth}
\begin{algorithm}[H]
    \footnotesize
    \caption{\textbf{AdvAD}}
    \label{alg:algorithm}
    \textbf{Input}: Attacked model $f(\cdot)$, image $\boldsymbol{x}_{ori}$ with label $y_{gt}$, budget $\xi$, step $T$;\\
    \textbf{Output}: Adversarial example $\boldsymbol{x}_{adv}$
    \begin{algorithmic}[1] 
        \STATE Initialize pre-defined diffusion coefficients $\alpha_{0:T}\in(0,1]^{T+1}$;
        \STATE Initialize $\boldsymbol{\epsilon}_{0} \sim \mathcal{N}(\boldsymbol{0}, \boldsymbol{\mathit{I}})$; \COMMENT{Initialize and fix diffusion noise $\boldsymbol{\epsilon}_{0}$.}
        \STATE Transform the range of $\boldsymbol{x}_{ori}$ to [-1, 1]; \COMMENT{Align with data range of diffusion process.}
        \STATE Calculate $\boldsymbol{\Bar{x}}_{T}$ via Eq.~\eqref{maineq:4}; \COMMENT{Forward process of adding noise $\boldsymbol{\epsilon}_{0}$ to $\boldsymbol{x}_{ori}$.}
        \STATE Set $\boldsymbol{\Hat{x}}_{T}:=\boldsymbol{\Bar{x}}_{T}$, $\boldsymbol{\Hat{\epsilon}}_{T+1}:=\boldsymbol{\epsilon}_{0}$; \COMMENT{Non-parametric diffusion process.}
        \FOR{$t=T$ to $1$} 
            \STATE Calculate $\boldsymbol{\Hat{x}}_{t}^{0}$ via Eq.~\eqref{maineq:7}; \COMMENT{Approximation of $\boldsymbol{\Hat{x}}_t^0 \approx \boldsymbol{x}_{adv}$.}
            \STATE Transform the range of $\boldsymbol{\Hat{x}}_t^0$ to [0, 255]; \COMMENT{Align with data range of image.}
                \STATE Calculate $\boldsymbol{\Hat{\epsilon}}_{t}'$ with AMG via Eq.~\eqref{maineq:8}; \COMMENT{Inject adversarial guidance.}
                \STATE Calculate $\boldsymbol{\Hat{\epsilon}}_{t}$ with PC  via Eq.~\eqref{maineq:10}; \COMMENT{Constraint modified diffusion noise.}
            \STATE Calculate $\boldsymbol{\Hat{x}}_{t-1}$ via Eq.~\eqref{maineq:11}; \COMMENT{One step backward from $t$ to $t-1$.\;}
        \ENDFOR
        \STATE Transform the range of $\boldsymbol{\Hat{x}}_{0}$ to [0, 255]; \COMMENT{Endpoint of the process.}
        \STATE \textbf{return} $\boldsymbol{x}_{adv} = \text{int8}(\text{round}(\boldsymbol{\Hat{x}}_{0}))$; \COMMENT{Return actual 8-bit image $\boldsymbol{x}_{adv}$.}
    \end{algorithmic}
\label{alg:1}
\end{algorithm}
\end{minipage}
\vspace{-0.4cm}
\end{figure}

\subsection{Pixel-level Constraint Module}
\label{sec:3.3}

Collaborating with AMG, the PC module is introduced to impose precise control and streamline the modified diffusion trajectory for attacks. 
A straightforward choice is to design PC for $\boldsymbol{\Hat{x}}_t$ that constrains each $\boldsymbol{\Hat{x}}_t$ using $\boldsymbol{\Bar{x}}_t$, thus ensuring $\boldsymbol{\Hat{x}}_{t}^{0}$ close to $\boldsymbol{\Bar{x}}_{t}^{0}$ and the final $\boldsymbol{x}_{adv}$ close to $\boldsymbol{x}_{ori}$. However, such a "hard" constraint directly applied to $\boldsymbol{\Hat{x}}_t$ will impair the effectiveness of AMG and disrupt {coherence of the transforming trajectory.} Therefore, we formulate a more suitable PC for $\boldsymbol{\Hat{\epsilon}}_{t}$ as in {Theorem} \ref{thm1}.

\begin{theorem}
\label{mainthm1}
    Given diffusion coefficients $\alpha_{T:0}\in(0,1]^T$, the $\boldsymbol{x}_{ori}$, $\boldsymbol{\Bar{x}}_{t}$, $\boldsymbol{\epsilon}_0$ from the original trajectory, $\boldsymbol{\Hat{x}}_{t}$, $\boldsymbol{\Hat{\epsilon}}_{t}$ from the modified trajectory, and a variable $\xi$, if $\boldsymbol{\Hat{\epsilon}}_{t}$ and $\boldsymbol{{\epsilon}}_{0}$ satisfies 
    \begin{equation}\tag{9}
        \|\boldsymbol{\Hat{\epsilon}}_{t} - \boldsymbol{\epsilon}_{0}\|_{\infty} \leq \frac{\sqrt{\alpha_T}}{\sqrt{1-\alpha_T}}\xi,
    \label{maineq:9}
    \end{equation}
    for all $t\in[T:1]$, then it follows that 
        $\|\boldsymbol{\Hat{x}}_{t} - \boldsymbol{\Bar{x}}_{t}\|_{\infty} \leq (\sqrt{\alpha_{t}} - \sqrt{1 - \alpha_{t}} \frac{\sqrt{\alpha_T}}{\sqrt{1-\alpha_T}})\xi, \ \|\boldsymbol{\Hat{x}}_{t}^{0} - \boldsymbol{x}_{ori}\|_{\infty}\leq\xi, \ \text{and}\ \|\boldsymbol{\Hat{x}}_{0} - \boldsymbol{x}_{ori}\|_{\infty}\leq\xi$
    hold true.
\end{theorem}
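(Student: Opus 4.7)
The plan is to introduce compact notation for the discrepancy between the modified and original trajectories, read off a simple recurrence from the DDIM backward step, and then run a backward induction on $t$ that exploits a sign cancellation specific to the standard monotone DDIM schedule ($\alpha_{t-1}>\alpha_t$).

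First, I would set $C:=\sqrt{\alpha_T}/\sqrt{1-\alpha_T}$, $\Delta_t:=\boldsymbol{\hat{x}}_t-\boldsymbol{\bar{x}}_t$, and $E_t:=\boldsymbol{\hat{\epsilon}}_t-\boldsymbol{\epsilon}_0$, so the hypothesis reads $\|E_t\|_\infty\le C\xi$ for $t\in[T:1]$; the algorithmic initialization $\boldsymbol{\hat{\epsilon}}_{T+1}:=\boldsymbol{\epsilon}_0$ also gives $E_{T+1}=\boldsymbol 0$. Iterating Eq.~\eqref{maineq:5}, combined with the fact that the clean trajectory always predicts $\boldsymbol{\bar{x}}_t^0=\boldsymbol{x}_{ori}$, yields the closed form $\boldsymbol{\bar{x}}_t=\sqrt{\alpha_t}\,\boldsymbol{x}_{ori}+\sqrt{1-\alpha_t}\,\boldsymbol{\epsilon}_0$. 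For the modified trajectory I would use the non-parametric analogue of Eq.~\eqref{maineq:2}, in which $\boldsymbol{\hat{\epsilon}}_t$ (the output of PC) plays the role of the estimated noise both inside and outside the parenthesis. Subtracting the two backward updates then gives
\begin{equation*}
\Delta_{t-1}=\frac{\sqrt{\alpha_{t-1}}}{\sqrt{\alpha_t}}\,\Delta_t+c_t\,E_t,\qquad c_t:=\sqrt{1-\alpha_{t-1}}-\frac{\sqrt{\alpha_{t-1}}\sqrt{1-\alpha_t}}{\sqrt{\alpha_t}}.
\end{equation*}

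The step I expect to be the main obstacle is pinning down the sign of $c_t$: for a decreasing schedule $\alpha_{t-1}>\alpha_t$ one has $\alpha_{t-1}(1-\alpha_t)>\alpha_t(1-\alpha_{t-1})$, hence $c_t<0$ with $|c_t|=\sqrt{\alpha_{t-1}}\sqrt{1-\alpha_t}/\sqrt{\alpha_t}-\sqrt{1-\alpha_{t-1}}$. With this in hand I would prove the first claim by backward induction. The base case $t=T$ is immediate from Eq.~\eqref{maineq:4}, since $\Delta_T=\boldsymbol 0$ matches $(\sqrt{\alpha_T}-\sqrt{1-\alpha_T}\,C)\xi=0$. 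For the inductive step, assuming $\|\Delta_t\|_\infty\le(\sqrt{\alpha_t}-\sqrt{1-\alpha_t}\,C)\xi$, the triangle inequality together with $\|E_t\|_\infty\le C\xi$ gives
\begin{equation*}
\|\Delta_{t-1}\|_\infty\le\frac{\sqrt{\alpha_{t-1}}}{\sqrt{\alpha_t}}\bigl(\sqrt{\alpha_t}-\sqrt{1-\alpha_t}\,C\bigr)\xi+\Bigl(\frac{\sqrt{\alpha_{t-1}}\sqrt{1-\alpha_t}}{\sqrt{\alpha_t}}-\sqrt{1-\alpha_{t-1}}\Bigr)C\xi,
\end{equation*}
and the two copies of $\sqrt{\alpha_{t-1}}\sqrt{1-\alpha_t}\,C\xi/\sqrt{\alpha_t}$ cancel exactly, leaving $(\sqrt{\alpha_{t-1}}-\sqrt{1-\alpha_{t-1}}\,C)\xi$. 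Without the negativity of $c_t$ this cancellation would flip to an addition and the induction would not close.

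The remaining two bounds follow cleanly. Rewriting Eq.~\eqref{maineq:7} as $\boldsymbol{\hat{x}}_t^0-\boldsymbol{x}_{ori}=(\Delta_t-\sqrt{1-\alpha_t}\,E_{t+1})/\sqrt{\alpha_t}$ and applying the triangle inequality with the $\Delta_t$ bound just proved and $\|E_{t+1}\|_\infty\le C\xi$ yields $\|\boldsymbol{\hat{x}}_t^0-\boldsymbol{x}_{ori}\|_\infty\le[(\sqrt{\alpha_t}-\sqrt{1-\alpha_t}\,C)+\sqrt{1-\alpha_t}\,C]\xi/\sqrt{\alpha_t}=\xi$. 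The final bound is the first inequality specialized to $t=0$ under the standard convention $\alpha_0=1$: the coefficient $\sqrt{\alpha_0}-\sqrt{1-\alpha_0}\,C$ collapses to $1$ and $\boldsymbol{\bar{x}}_0=\boldsymbol{x}_{ori}$, so $\|\boldsymbol{\hat{x}}_0-\boldsymbol{x}_{ori}\|_\infty\le\xi$.
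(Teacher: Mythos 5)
Your proof is correct and takes essentially the same route as the paper's: a backward induction on $t$ built on the recurrence $\Delta_{t-1}=\frac{\sqrt{\alpha_{t-1}}}{\sqrt{\alpha_t}}\Delta_t+c_t E_t$, the triangle inequality, and the exact cancellation of the two $\frac{\sqrt{\alpha_{t-1}}\sqrt{1-\alpha_t}}{\sqrt{\alpha_t}}\cdot\frac{\sqrt{\alpha_T}}{\sqrt{1-\alpha_T}}\xi$ terms, followed by the same one-line derivations of the $\boldsymbol{\hat{x}}_t^0$ and $\boldsymbol{\hat{x}}_0$ bounds. The only refinements are cosmetic but welcome: you actually prove $c_t<0$ from monotonicity of the schedule where the paper (Eq.~(23)--(24)) merely asserts the sign from the numerical values of $\alpha_t$, and your trivial base case $\Delta_T=\boldsymbol{0}$ at $t=T$ is slightly cleaner than the paper's separately worked initial step from $T$ to $T-1$.
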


According to {Theorem} \ref{mainthm1}, the PC for $\boldsymbol{\Hat{\epsilon}}_{t}$ is implemented as:
\begin{equation}\tag{10}
    \boldsymbol{\Hat{\epsilon}}_{t} = \text{PC}(\boldsymbol{\Hat{\epsilon}}'_{t}) = \mathcal{P}_{l_{\infty}({\boldsymbol{\epsilon}_{0}},{\frac{\sqrt{\alpha_T}}{\sqrt{1-\alpha_T}}\xi})}(\boldsymbol{\Hat{\epsilon}}'_{t}).
\label{maineq:10}
\end{equation}
where $\mathcal{P}_{l_{\infty}({\boldsymbol{\epsilon}},{\xi})}(\cdot)$ is a projection operation that constrains the output $\boldsymbol{\Hat{\epsilon}}'_{t}$ of AMG$(\cdot)$ to $\boldsymbol{\Hat{\epsilon}}_{t}$ based on a $l_{\infty}$-norm ball of $\boldsymbol{{\epsilon}}_{0}$ to satisfy Eq.~\eqref{maineq:9}. 
After PC, the diffusion noise $\boldsymbol{\Hat{\epsilon}}_{t}$ for next step is obtained, and the $\boldsymbol{\Hat{x}}_{t-1}$ can be calculated using the deterministic DDIM \textit{backward} equation as:
\begin{equation}\tag{11}
    \boldsymbol{\Hat{x}}_{t-1} = \sqrt{\alpha_{t-1}}(\frac{\boldsymbol{\Hat{x}}_t-\sqrt{1-\alpha_t}\boldsymbol{\Hat{\epsilon}}_t}{\sqrt{\alpha_t}}) + \sqrt{1-\alpha_{t-1}}\boldsymbol{\Hat{\epsilon}}_t.
\label{maineq:11}
\end{equation} 
The elaborate PC for $\boldsymbol{\Hat{\epsilon}}_{t}$ directly cooperates with AMG to constrain the diffusion noise, which streamlines the whole diffusion process and can serve to simultaneously control the terms of $\boldsymbol{\Hat{x}}_{t}$, $\boldsymbol{\Hat{x}}_{t}^{0}$, and $\boldsymbol{\Hat{x}}_{0}$, satisfying the premise that two trajectories are close and ensuring the effectiveness of AdvAD. The complete pseudo code of AdvAD is provided in Algorithm~\ref{alg:1}.

Subsquently, based on Theorem~\ref{mainthm1}, we further give two propositions about AdvAD as:
\begin{proposition}
\label{mainprop1}
Under the conditions of Theorem 1, by denoting constrained $\boldsymbol{\hat{\epsilon}}_t = \boldsymbol{\epsilon}_0 - \boldsymbol{\delta}_t$, we have
    \begin{equation}\tag{12}
    \boldsymbol{x}_{adv}=\boldsymbol{x}_{ori}+\sum_{t=1}^{T}\lambda_{t}\boldsymbol{\delta}_t,
    \label{maineq:12}
    \end{equation}
    where $\lambda_{t} = \frac{\sqrt{1-\alpha_t}}{\sqrt{\alpha_t}} - \frac{\sqrt{1-{\alpha_{t-1}}}}{\sqrt{\alpha_{t-1}}}$, and $\|\boldsymbol{\delta}_t\|_{\infty} \leq \frac{\sqrt{\alpha_T}}{\sqrt{1-\alpha_T}}\xi$.
\end{proposition}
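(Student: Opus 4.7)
The plan is to show that the DDIM backward recursion in Eq.~\eqref{maineq:11}, once divided by $\sqrt{\alpha_{t-1}}$, becomes a one‑step difference that telescopes cleanly from $t=T$ down to $t=0$, so that $\boldsymbol{\hat{x}}_0$ admits a closed form in terms of $\boldsymbol{x}_{ori}$, $\boldsymbol{\epsilon}_0$, and the constrained noises $\boldsymbol{\hat{\epsilon}}_{1:T}$. Substituting $\boldsymbol{\hat{\epsilon}}_t=\boldsymbol{\epsilon}_0-\boldsymbol{\delta}_t$ will then collapse two of those three contributions into the desired weighted sum of $\boldsymbol{\delta}_t$'s.

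Concretely, I would introduce the shorthand $\mu_t:=\sqrt{1-\alpha_t}/\sqrt{\alpha_t}$, so that $\lambda_t=\mu_t-\mu_{t-1}$, and rewrite Eq.~\eqref{maineq:11} as
\begin{equation*}
\frac{\boldsymbol{\hat{x}}_{t-1}}{\sqrt{\alpha_{t-1}}} \;=\; \frac{\boldsymbol{\hat{x}}_{t}}{\sqrt{\alpha_{t}}} \;+\; (\mu_{t-1}-\mu_t)\,\boldsymbol{\hat{\epsilon}}_t \;=\; \frac{\boldsymbol{\hat{x}}_{t}}{\sqrt{\alpha_{t}}} \;-\; \lambda_t\,\boldsymbol{\hat{\epsilon}}_t .
\end{equation*}
Telescoping from $t=T$ down to $t=1$ and using $\alpha_0=1$ (so $\boldsymbol{\hat{x}}_0=\boldsymbol{x}_{adv}$ and $\mu_0=0$) gives
\begin{equation*}
\boldsymbol{x}_{adv} \;=\; \frac{\boldsymbol{\hat{x}}_T}{\sqrt{\alpha_T}} \;-\; \sum_{t=1}^{T}\lambda_t\,\boldsymbol{\hat{\epsilon}}_t .
\end{equation*}
Next I would substitute $\boldsymbol{\hat{x}}_T=\boldsymbol{\bar{x}}_T=\sqrt{\alpha_T}\,\boldsymbol{x}_{ori}+\sqrt{1-\alpha_T}\,\boldsymbol{\epsilon}_0$ from Eq.~\eqref{maineq:4}, which yields $\boldsymbol{\hat{x}}_T/\sqrt{\alpha_T}=\boldsymbol{x}_{ori}+\mu_T\boldsymbol{\epsilon}_0$, and then use the identity $\mu_T=\sum_{t=1}^{T}\lambda_t$ (immediate from the telescoping definition $\lambda_t=\mu_t-\mu_{t-1}$ together with $\mu_0=0$) to rewrite the $\boldsymbol{\epsilon}_0$ term as $\sum_t \lambda_t \boldsymbol{\epsilon}_0$. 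Finally, plugging in $\boldsymbol{\hat{\epsilon}}_t=\boldsymbol{\epsilon}_0-\boldsymbol{\delta}_t$ collapses the two sums into $\sum_{t=1}^{T}\lambda_t\boldsymbol{\delta}_t$, giving exactly Eq.~\eqref{maineq:12}. The bound $\|\boldsymbol{\delta}_t\|_\infty\le \tfrac{\sqrt{\alpha_T}}{\sqrt{1-\alpha_T}}\xi$ is then a direct restatement of the PC hypothesis Eq.~\eqref{maineq:9} from Theorem~\ref{mainthm1}, since $\boldsymbol{\delta}_t=\boldsymbol{\epsilon}_0-\boldsymbol{\hat{\epsilon}}_t$.

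There is no real obstacle here; the only subtle point is noticing that dividing through by $\sqrt{\alpha_{t-1}}$ (rather than working directly with $\boldsymbol{\hat{x}}_{t-1}$) is what exposes the telescoping structure and simultaneously makes the coefficient of $\boldsymbol{\hat{\epsilon}}_t$ match the $\lambda_t$ defined in the statement. Everything else is bookkeeping, and the closed-form perturbation expression together with the elementwise bound on $\boldsymbol{\delta}_t$ follows immediately.
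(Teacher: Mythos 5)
Your proof is correct and takes essentially the same route as the paper's: both unroll the backward recursion Eq.~(11) into a closed form for $\boldsymbol{\hat{x}}_0$ in terms of $\boldsymbol{\hat{x}}_T=\boldsymbol{\bar{x}}_T$ and the noises $\boldsymbol{\hat{\epsilon}}_{1:T}$, then substitute $\boldsymbol{\hat{\epsilon}}_t=\boldsymbol{\epsilon}_0-\boldsymbol{\delta}_t$ so that the $\boldsymbol{\epsilon}_0$ contributions cancel via $\sum_{t=1}^{T}\lambda_t=\sqrt{1-\alpha_T}/\sqrt{\alpha_T}$ (using $\alpha_0=1$). Your normalized variable $\boldsymbol{\hat{x}}_t/\sqrt{\alpha_t}$ makes the telescoping explicit where the paper instead unrolls the recursion for three steps by hand and reads off the pattern, but the mathematical content is identical.
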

\begin{proposition}
\label{mainprop2}
Under the conditions of Theorem 1, the upper bound on the error of the approximation in Eq.~\eqref{maineq:7} can be expressed as
    \begin{equation}\tag{13}
        \left\|\boldsymbol{x}_{adv} - \boldsymbol{\hat{x}}_{t}^{0}\right\|_\infty \leq \;2 \cdot \frac{\sqrt{1 - \alpha_t}}{\sqrt{\alpha_t}}\frac{\sqrt{\alpha_T}}{\sqrt{1 - \alpha_T}}\xi.
    \label{maineq:13}
    \end{equation}
\end{proposition}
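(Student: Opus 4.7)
The plan is to leverage the algebraic scaffolding already built for Proposition~\ref{mainprop1} and apply it twice: once to unroll $\boldsymbol{x}_{adv}=\boldsymbol{\hat{x}}_0$ from step $t$ down to step $0$ along the modified trajectory, and once to rewrite $\boldsymbol{\hat{x}}_t^0$ via Eq.~\eqref{maineq:7} so that both expressions share the same $\boldsymbol{\hat{x}}_t$ and subtracting kills it. Concretely, I would first put the deterministic backward step Eq.~\eqref{maineq:11} into the normalized form
\[
\frac{\boldsymbol{\hat{x}}_{s-1}}{\sqrt{\alpha_{s-1}}} - \frac{\boldsymbol{\hat{x}}_s}{\sqrt{\alpha_s}} \;=\; -\lambda_s\,\boldsymbol{\hat{\epsilon}}_s,
\]
by dividing through by $\sqrt{\alpha_{s-1}}$ and collecting the $\boldsymbol{\hat{\epsilon}}_s$ terms. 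Telescoping from $s=1$ to $s=t$ (with the standard convention $\alpha_0=1$) yields $\boldsymbol{x}_{adv}=\boldsymbol{\hat{x}}_t/\sqrt{\alpha_t}-\sum_{s=1}^t\lambda_s\boldsymbol{\hat{\epsilon}}_s$, while Eq.~\eqref{maineq:7} directly gives $\boldsymbol{\hat{x}}_t^0=\boldsymbol{\hat{x}}_t/\sqrt{\alpha_t}-(\sqrt{1-\alpha_t}/\sqrt{\alpha_t})\,\boldsymbol{\hat{\epsilon}}_{t+1}$.

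Subtracting these two identities eliminates the $\boldsymbol{\hat{x}}_t/\sqrt{\alpha_t}$ term and leaves
\[
\boldsymbol{x}_{adv} - \boldsymbol{\hat{x}}_t^0 \;=\; \frac{\sqrt{1-\alpha_t}}{\sqrt{\alpha_t}}\,\boldsymbol{\hat{\epsilon}}_{t+1} \;-\; \sum_{s=1}^{t}\lambda_s\,\boldsymbol{\hat{\epsilon}}_s.
\]
Next I would substitute $\boldsymbol{\hat{\epsilon}}_s=\boldsymbol{\epsilon}_0-\boldsymbol{\delta}_s$ as in Proposition~\ref{mainprop1}. The resulting coefficient of $\boldsymbol{\epsilon}_0$ is $\sqrt{1-\alpha_t}/\sqrt{\alpha_t}-\sum_{s=1}^t\lambda_s$, which is exactly zero by the same telescoping identity $\sum_{s=1}^t\lambda_s=\sqrt{1-\alpha_t}/\sqrt{\alpha_t}$. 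Only the $\boldsymbol{\delta}$ contributions survive, giving
\[
\boldsymbol{x}_{adv} - \boldsymbol{\hat{x}}_t^0 \;=\; \sum_{s=1}^{t}\lambda_s\,\boldsymbol{\delta}_s \;-\; \frac{\sqrt{1-\alpha_t}}{\sqrt{\alpha_t}}\,\boldsymbol{\delta}_{t+1}.
\]

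Finally, applying the triangle inequality in $l_\infty$ and inserting the uniform per-step bound $\|\boldsymbol{\delta}_s\|_\infty\le(\sqrt{\alpha_T}/\sqrt{1-\alpha_T})\,\xi$ supplied by Theorem~\ref{mainthm1}, the scalar factor becomes $(\sqrt{\alpha_T}/\sqrt{1-\alpha_T})\,\xi\cdot\bigl(\sum_{s=1}^t\lambda_s+\sqrt{1-\alpha_t}/\sqrt{\alpha_t}\bigr)$, and one more use of the telescoping identity collapses this parenthesis into $2\,\sqrt{1-\alpha_t}/\sqrt{\alpha_t}$, producing exactly the claimed factor of $2$ in Eq.~\eqref{maineq:13}. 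The only step requiring real attention is the exact cancellation of the $\boldsymbol{\epsilon}_0$ coefficients; this is what prevents the error bound from inheriting the (potentially large) magnitude of the initial Gaussian noise and instead ties it only to the per-step budget $\xi$. Everything else is bookkeeping of telescoping sums and a routine triangle inequality.
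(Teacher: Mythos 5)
Your proposal is correct and follows essentially the same route as the paper: you arrive at exactly the paper's key identity $\boldsymbol{x}_{adv}-\boldsymbol{\hat{x}}_{t}^{0}=\sum_{s=1}^{t}\lambda_{s}\boldsymbol{\delta}_{s}-\frac{\sqrt{1-\alpha_t}}{\sqrt{\alpha_t}}\boldsymbol{\delta}_{t+1}$ (the paper's Eq.~(51)), with the same cancellation of the $\boldsymbol{\epsilon}_0$ coefficient and the same triangle-inequality plus telescoping-sum finish. The only difference is presentational — you telescope the normalized one-step recursion directly rather than quoting the explicitly unrolled trajectory formulas from the proof of Proposition~\ref{mainprop1} — which is, if anything, a slightly cleaner way to write the same algebra.
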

Proposition~\ref{mainprop1} explicitly states the much subtler and decreasing strength of the adversarial guidance injected at each step of AdvAD's non-parametric diffusion process, and also allows for a quantitative analysis (as in Sec. 5.5). 
Proposition~\ref{mainprop2} indicates the validity and convergence of the approximation of $\boldsymbol{x}_{adv} \approx \boldsymbol{\Hat{x}}_t^0$ in AMG (Eq.~\eqref{maineq:7}). It is evident that as $t$ goes from $T$ to $1$, $\alpha_t$ increases from $0$ to $1$, the upper bound on the approximation error rapidly converge from $2\xi$ to $0$. 
The detailed derivations of the mentioned PC for $\boldsymbol{\Hat{x}}_t$, proofs of  Theorem~\ref{mainthm1} and Proposition~\ref{mainprop1}, \ref{mainprop2} are provided in \textbf{Appendix}~\ref{app:B}.

\subsection{AdvAD to AdvAD-X: Extreme Version}
\label{sec:3.4}

Building upon AdvAD, we further propose a scheme called AdvAD-X (`X' for `eXtreme') with two extra strategies called Dynamic Guidance Injection (DGI) and CAM Assistance (CA), aiming to squeeze the extreme performance of our novel modeling framework in an ideal scenario that is usually overlooked but has theoretical significance. 
\paragraph{DGI and CA Strageties.} As aforementioned, the attack capability of AdvAD comes from the very subtle yet effective adversarial guidance crafted by AMG and PC, and the intensity of guidance will decrease to 0 as the process progresses. Thus, the DGI is naturally emerged as a dynamic skipping stragety to skip the unnecessary calculation and injection of adversarial guidance, especially for those steps in the later process. With DGI, AdvAD-X dynamically avoids the execution of AMG and PC and adopts original $\boldsymbol{\epsilon}_0$ as the diffusion noise for the steps where the $\boldsymbol{\Hat{x}}_{t}^{0} \approx \boldsymbol{x}_{adv}$ is already able to mislead the attacked model, reducing the accumulated guidance strength as well as the computational complexity. On the other hand, inspired by the Class Activation Mapping (CAM) \cite{zhou2016learning} identifies critical regions of an image about a decision made of a classifier, our CA strategy calculates a mask (if available) $\boldsymbol{m}$ ranging from $0$ to $1$ of $\boldsymbol{x}_{ori}$ with $f(\cdot)$ and $y_{gt}$ using GradCAM \cite{selvaraju2017grad} to further suppress the strength of adversarial guidance within the non-critical image regions in those steps that are not skipped. The equation of AMG with CA strategy can be modified as:
\begin{equation}\tag{14}
    \boldsymbol{\hat{\epsilon}}'_t = \boldsymbol{\epsilon}_0 - \boldsymbol{m} \cdot \sqrt{1-\alpha_t}\nabla_{\boldsymbol{\Hat{x}}_t}\text{log}(1-p_f(y_{gt}|\boldsymbol{\Hat{x}}_{t}^{0})).
\label{maineq:14}
\end{equation}
\paragraph{Ideal Scenario.} Equipped with DGI, AdvAD-X omits a large number of adversarial guidance that is injected by default in AdvAD, while the absolute strength of guidance in each of the remaining steps are also suppressed by CA, successfully reducing the final adversarial perturbation to an extreme level. This extreme case leads to a problem that in the default setting of attacking with 8-Bit RGB images, the adversarial perturbation of pixels where the intensity is less than $0.5$ will be erased due to the quantization. However, in practice, the input of DNNs is normalized as floating-point data type to avoid gradient problems during training \cite{krizhevsky2012imagenet, ioffe2015batch}, and white-box attack allows access to the entire of DNNs. Therefore, for AdvAD-X, we specifically consider an ideal scenario that directly input the raw final adversarial example in floating-point data to DNNs without quantization to evaluate the extreme performance of AdvAD-X. The pseudo code of AdvAD-X is provided in \textbf{Appendix}~\ref{app:Algo}.

\section{Experiments}
\setcounter{footnote}{0}
\subsection{Experimental Setup}
\textbf{Dataset.} In line with prior studies \cite{zhao2020towards,yuan2022natural,wei2023enhancing,chen2024content}, our experiments are conducted on the ImageNet-compatible Dataset \footnote{\scriptsize\url{https://github.com/cleverhans-lab/cleverhans/tree/master/cleverhans_v3.1.0/examples/nips17_adversarial_competition/dataset}},
containing 1,000 images of ImageNet \cite{russakovsky2015imagenet} classes with size of $299\times299$, and the images are resized to standard input size of $224\times224$ in all experiments. 
\textbf{Models.} We select
the widely used CNNs of ResNet-50 \cite{he2016deep} and enhanced ConvNeXt-Base \cite{liu2022convnet}, Swin Transformer-Base \cite{liu2021swin} with Transformer \cite{vaswani2017attention} architecture, and VisionMamba-Small \cite{zhu2024vision} with the recently emerged advanced Mamba \cite{gu2023mamba} architecture. 
\textbf{Attacks.} We choose classic PGD \cite{madry2018towards} and seven attacks that claim having imperceptibility as comparison methods, including normal imperceptible attacks of AdvDrop \cite{duan2021advdrop}, PerC-AL \cite{zhao2020towards}, SSAH \cite{luo2022frequency}, and unrestricted attacks of NCF \cite{yuan2022natural}, ACA \cite{chen2024content},  DiffAttack \cite{chen2023diffusion}, Diff-PGD \cite{xue2023diffusion}, and the generative capability of diffusion models are utilized the last three attacks.  For our proposed AdvAD and AdvAD-X, we set $\xi=8/255$ and $T=1000$ for all experiments unless specifically mentioned. All the other comparison methods are evaluated using their official open-source code with the default hyper-parameters.
The results of AdvAD-X are obtained in the ideal scenario with float-pointing raw data as described in Sec.~\ref{sec:3.4}.
\textbf{Evaluation Metrics.} Attack success rate (ASR) is used to evaluate the attack efficacy, and seven metrics are adopted to comprehensively assess the imperceptibility, including $l_2$ and $l_\infty$ distances for absolute perturbation strength; Peak Signal-to-Noise Ratio (PSNR), Structure Similarity (SSIM) \cite{wang2004image}, and three network-based metrics, i.e., Learned Perceptual Image Patch Similarity (LPIPS) \cite{zhang2018unreasonable}, Fréchet Inception Distance (FID) \cite{heusel2017gans}, and a non-reference metric MUSIQ \cite{ke2021musiq} for image quality.

\begin{table*}[t]
    \caption{Results of untargeted white-box attack success rate (ASR) and other evaluation metrics for imperceptibility when employing different attacks and attacked models. The reported running times are obtained using a RTX 3090 GPU on a same machine. \textcolor{blue}{$\boldsymbol{\dag}$} and \textcolor{blue}{blue} mean the results of AdvAD-X are obtained with floating-point data type in the ideal scenario as described in Sec 3.4.}
    \label{tab:tab01}
    \centering
    {
    \setlength{\tabcolsep}{3.8pt}
    \scriptsize\begin{tabular}{clccccccccc}
        \toprule
        \makecell{Model} & Attack Method    & Time (s) $\downarrow$  & ASR ($\%$) $\uparrow$ & $l_\infty$ $\downarrow$   & $l_2$ $\downarrow$     & \;PSNR $\uparrow$  & \;SSIM $\uparrow$   & \;FID $\downarrow$   & \;LPIPS $\downarrow$  & \;MUSIQ $\uparrow$   \\
        \midrule
        \multirow{10}{*}{\makecell{ResNet-50 \\\cite{he2016deep}}}  
                                    & PGD \cite{madry2018towards}           & 25        & 98.6      & \underline{0.031}     & 8.17      & 33.53     & 0.8830    & 35.25     & 0.0517    & 52.24 \\
                                    & NCF \cite{yuan2022natural}           & 2739      & 89.9      & 0.783     & 75.16     & 14.79     & 0.6374    & 58.99     & 0.3052    & 49.12 \\
                                    & ACA \cite{chen2024content}           & 82239     & 89.8      & 0.839     & 52.42     & 18.00     & 0.5659    & 69.57     & 0.3381    & 55.47 \\
                                    & DiffAttack \cite{chen2023diffusion}    & 34954     & 96.6      & 0.743     & 30.51     & 22.63     & 0.6750    & 55.29     & 0.1130    & 55.67 \\
                                    & DiffPGD \cite{xue2023diffusion}       & 6057      & 92.1      & 0.246     & 11.43     & 30.95     & 0.8902    & 22.18     & 0.0315    & 55.05 \\
                                    & AdvDrop \cite{duan2021advdrop}       & 193       & 96.8      & 0.062     & 3.17      & 41.91     & 0.9872    & 5.57      & 0.0061    & 54.96 \\
                                    & PerC-AL \cite{zhao2020towards}       & 4085      & \underline{98.8}      & 0.131     & \underline{2.05}      & \underline{46.35}     & 0.9894    & 8.62      & 0.0029    & \underline{55.84} \\
                                    & SSAH \cite{luo2022frequency}          & 428       & \textbf{99.7}      & 0.033     & 2.65      & 43.73     & \underline{0.9911}    & \underline{4.48}      & \underline{0.0021}    & 55.49 \\
        \cmidrule(lr){2-11}
                                    & AdvAD (ours)        & 2201      & \textbf{99.7}      & \textbf{0.010}     & \textbf{1.06}      & \textbf{51.84}     & \textbf{0.9980}    & \textbf{2.42}      & \textbf{0.0005}    & \textbf{56.35} \\
                                    & $\text{AdvAD-X}^{\textcolor{blue}{\boldsymbol{\dag}}} \text{(ours)}$ & 806     & \textcolor{blue}{\textbf{100.0}}     & \textcolor{blue}{\textbf{0.002}}     & \textcolor{blue}{\textbf{0.34}}      & \textcolor{blue}{\textbf{63.62}}     & \textcolor{blue}{\textbf{0.9997}}    & \textcolor{blue}{\textbf{0.23}}      & \textcolor{blue}{\textbf{0.0001}}    & \textcolor{blue}{\textbf{56.59}} \\

        \midrule
        \multirow{10}{*}{\makecell{\\ConvNeXt \\-Base \cite{liu2022convnet}}} 
                                    & PGD \cite{madry2018towards}           & 127       & \underline{99.9}      & 0.031     & 7.98      & 33.74     & 0.8845    & 32.03     & 0.0386    & 51.85 \\
                                    & NCF \cite{yuan2022natural}           & 5222      & 59.4      & 0.750     & 72.89     & 15.10     & 0.6616    & 50.52     & 0.2846    & 49.70 \\
                                    & ACA \cite{chen2024content}           & 83149     & 82.2      & 0.835     & 52.16     & 18.05     & 0.5676    & 68.45     & 0.3421    & 55.11 \\
                                    & DiffAttack \cite{chen2023diffusion}    & 35417     & 97.8      & 0.754     & 31.70     & 22.28     & 0.6610    & 72.22     & 0.1277    & 54.80 \\
                                    & DiffPGD \cite{xue2023diffusion}       & 6325      & 76.9      & 0.245     & 11.45     & 30.94     & 0.8908    & 21.05     & 0.0306    & 54.75 \\
                                    & AdvDrop \cite{duan2021advdrop}       & 838       & 96.9      & 0.057     & 3.26      & 41.69     & 0.9864    & 6.42      & 0.0055    & 54.80 \\
                                    & PerC-AL \cite{zhao2020towards}       & 18271     & \textcolor{red}{10.3}       & -         & -        & -          & -         & -         & -         & - \\
                                    & SSAH \cite{luo2022frequency}          & 3423      & 84.6      & \underline{0.026}     & \underline{2.24}      & \underline{45.19}     & \underline{0.9928}    & \textbf{3.04}      & \underline{0.0011}    & \underline{55.78} \\
        \cmidrule(lr){2-11}
                                    & AdvAD (ours)        & 15240     & \textbf{100.0}     & \textbf{0.016}     & \textbf{1.49}      & \textbf{48.61}     & \textbf{0.9964}    & \underline{5.07}      & \textbf{0.0009}    & \textbf{55.97} \\
                                    & $\text{AdvAD-X}^{\textcolor{blue}{\boldsymbol{\dag}}} \text{(ours)}$ & 5245    & \textcolor{blue}{\textbf{99.8}}      & \textcolor{blue}{\textbf{0.004}}     & \textcolor{blue}{\textbf{0.64}}      & \textcolor{blue}{\textbf{58.01}}     & \textcolor{blue}{\textbf{0.9993}}    & \textcolor{blue}{\textbf{0.62}}      & \textcolor{blue}{\textbf{0.0001}}    & \textcolor{blue}{\textbf{56.43}} \\

        \midrule
        \multirow{10}{*}{\makecell{\\Swin Trans.\\-Base \cite{liu2021swin}}} 
                                    & PGD \cite{madry2018towards}           & 93        & \underline{98.5}      & \underline{0.031}     & 7.85      & 33.88     & 0.8861    & 21.34     & 0.0378    & 51.91 \\
                                    & NCF \cite{yuan2022natural}           & 4690      & 63.7      & 0.733     & 69.92     & 15.48     & 0.6822    & 47.17     & 0.2709    & 49.77 \\
                                    & ACA \cite{chen2024content}           & 83706     & 79.6      & 0.831     & 50.70     & 18.31     & 0.5757    & 64.83     & 0.3341    & 55.65 \\
                                    & DiffAttack \cite{chen2023diffusion}    & 36736     & 89.7      & 0.741     & 30.45     & 22.67     & 0.6727    & 53.32     & 0.1143    & \underline{55.72} \\
                                    & DiffPGD \cite{xue2023diffusion}       & 6499      & 69.1      & 0.244     & 11.26     & 31.10     & 0.8945    & 16.19     & 0.0276    & 55.25 \\
                                    & AdvDrop \cite{duan2021advdrop}       & 673       & 97.2      & 0.063     & 3.37      & 41.43     & 0.9853    & 5.22      & 0.0065    & 54.73 \\
                                    & PerC-AL \cite{zhao2020towards}       & 15258     & 95.6      & 0.144     & \underline{2.15}      & \underline{45.93}     & 0.9882    & 3.53      & 0.0015    & 55.66 \\
                                    & SSAH \cite{luo2022frequency}          & 1737      & 96.3      & 0.035     & 2.41      & 44.60     & \underline{0.9927}    & \underline{2.57}      & \underline{0.0010}    & 55.53 \\
        \cmidrule(lr){2-11}
                                    & AdvAD (ours)        & 9729      & \textbf{100.0}     & \textbf{0.013}     & \textbf{1.19}      & \textbf{50.57}     & \textbf{0.9978}    & \textbf{1.70}      & \textbf{0.0004}    & \textbf{56.17} \\
                                    & $\text{AdvAD-X}^{\textcolor{blue}{\boldsymbol{\dag}}} \text{(ours)}$ & 5243    & \textcolor{blue}{\textbf{99.7}}      & \textcolor{blue}{\textbf{0.005}}     & \textcolor{blue}{\textbf{0.52}}      & \textcolor{blue}{\textbf{60.29}}     & \textcolor{blue}{\textbf{0.9995}}    & \textcolor{blue}{\textbf{0.25}}      & \textcolor{blue}{\textbf{0.0001}}    & \textcolor{blue}{\textbf{56.47}} \\

        \midrule
        \multirow{10}{*}{\makecell{\\VisionMamba \\-Small \cite{zhu2024vision}}} 
                                    & PGD \cite{madry2018towards}           & 63        & {95.7}      & {0.031}     & 7.99      & 33.73     & 0.8884    & 26.09     & 0.0503    & 52.37 \\
                                    & NCF \cite{yuan2022natural}           & 3919      & 71.7       & 0.738         & 68.71     & 15.68     & 0.6876   & 46.07   & 0.2629   & 50.05   \\
                                    & ACA \cite{chen2024content}           & 96851      & 84.2      & 0.831         & 50.88     & 18.28     & 0.5753   & 65.77   & 0.3329   & 55.28   \\
                                    & DiffAttack \cite{chen2023diffusion}    & 43043     & 90.9      & 0.749     & 30.94     & 22.52     & 0.6693    & 52.16     & 0.1179    & 55.66 \\
                                    & DiffPGD \cite{xue2023diffusion}       & 7638      & 83.4      & 0.248     & 11.75     & 30.68     & 0.8845    & 21.02     & 0.0378    & 54.19 \\
                                    & AdvDrop \cite{duan2021advdrop}       & 1311       & \underline{97.0}      & 0.076     & 4.42      & 39.30     & 0.9761    & 8.02      & 0.0086    & 54.34 \\
                                    & PerC-AL \cite{zhao2020towards}       & 10400      & \textcolor{red}{6.5}  & -         & -        & -          & -         & -         & -         & - \\
                                    & SSAH \cite{luo2022frequency}          & 1204      & 49.8      & \underline{0.028}     & \underline{1.95}      & \underline{46.41}     & \underline{0.9946}    & \textbf{2.08}      & \underline{0.0018}    & \underline{55.96} \\
        \cmidrule(lr){2-11}
                                    & AdvAD (ours)        & 6154      & \textbf{99.7}     & \textbf{0.016}     & \textbf{1.62}      & \textbf{47.94}     & \textbf{0.9960}    & \underline{3.67}      & \textbf{0.0017}    & \textbf{56.17} \\
                                    & $\text{AdvAD-X}^{\textcolor{blue}{\boldsymbol{\dag}}} \text{(ours)}$ & 4021    & \textcolor{blue}{\textbf{99.4}}      & \textcolor{blue}{\textbf{0.005}}     & \textcolor{blue}{\textbf{0.69}}      & \textcolor{blue}{\textbf{58.90}}     & \textcolor{blue}{\textbf{0.9989}}    & \textcolor{blue}{\textbf{0.51}}      & \textcolor{blue}{\textbf{0.0004}}    & \textcolor{blue}{\textbf{56.50}} \\
        \bottomrule
    \end{tabular}}
    \vspace{-0.4cm}
\end{table*}

\begin{figure}[t]
\centering
\includegraphics[width=0.9\columnwidth]{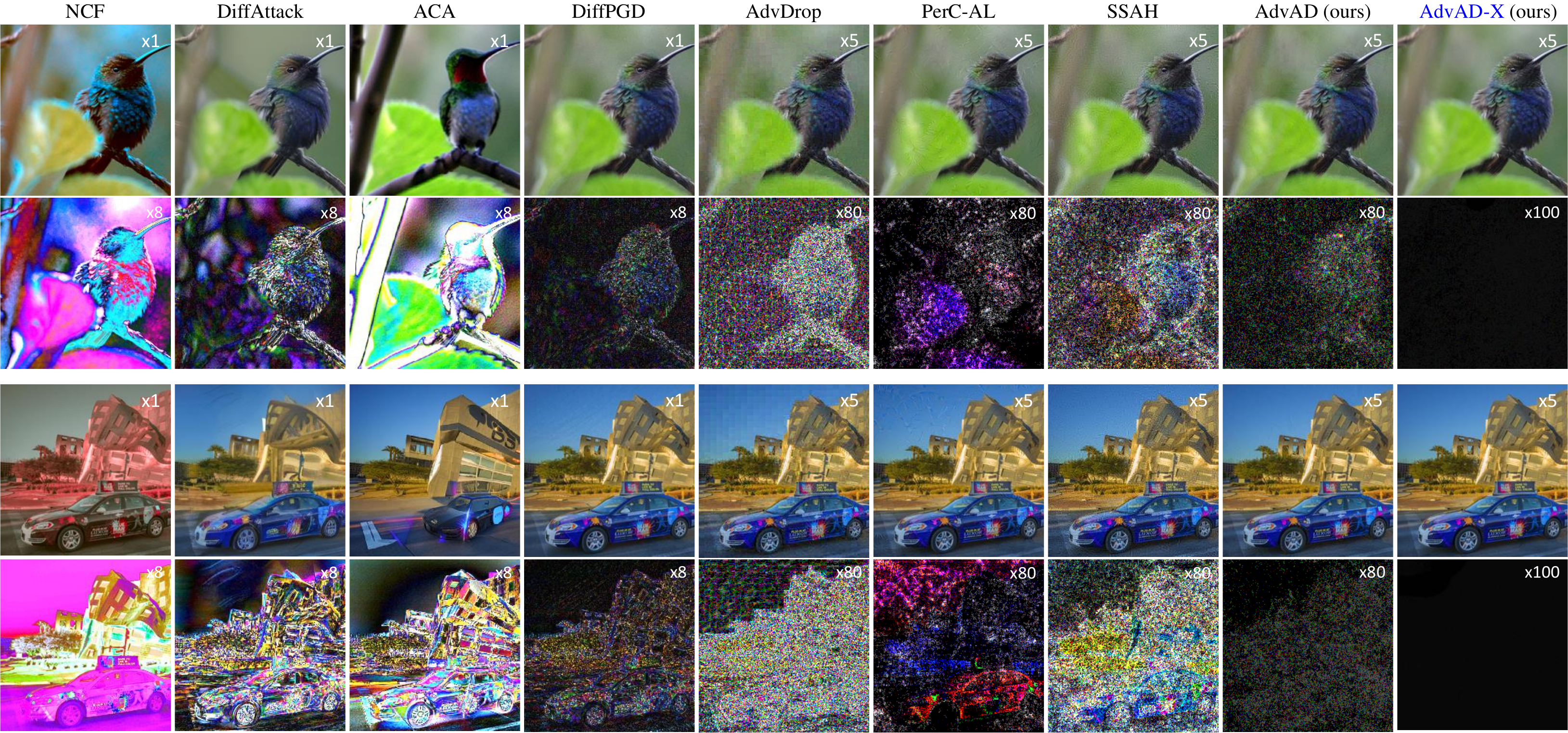}
\caption{Visualizations of adversarial examples and corresponding perturbations crafted by nine imperceptible attacks.
Perturbations are amplified as marked in top-right for the convenience of observation. Please zoom in to observe the details of the images with original resolution of $224\times224$.}
\label{fig:2}
\vspace{-0.2599cm}
\end{figure}

\subsection{Comparison with State-of-the-art Methods}
\paragraph{White-Box Attacks.} 
Table~\ref{tab:tab01} reports the untargeted attack performance and imperceptibility of ten methods against four attacked models. It is evident that the proposed AdvAD with novel modeling framework consistently demonstrates superior performance in terms of both ASR and imperceptibility. For the normal imperceptible adversarial attacks, the absolute adversarial perturbation strength of AdvAD in $l_\infty$ and $l_2$ distance are only {0.014} and {1.34} in average, which is about half of the state-of-the-art restricted imperceptible attack SSAH, and AdvAD maintains almost {99.9\%} ASR, supporting our key idea that it inherently reduces the strength of perturbation required for attacks from a modeling perspective. When attacking more advanced models from ResNet to VisionMamba, AdvAD always demonstrates the best ASR and imperceptibility, yet other methods tend to have some performance degradation (e.g., PerC-AL and SSAH for ConvNeXt and VisionMamba). For unrestricted attacks, it is expected for them to perform poorly in the quantitative metrics, but if the results are poor for all image quality metrics, it usually indicates that the images are damaged. Meanwhile, since the optimizer may not find the global optimal solution, the optimization-based methods tent to show sub-optimal ASR.
For AdvAD-X, surprisingly, the perturbation strength is reduced to an extremely low level with still high attack efficacy in the ideal scenario with floating-point raw data.

\paragraph{Visualization.}
The visualizations of adversarial examples againt ResNet-50 in Figure~\ref{fig:2} clearly show the characteristics of different imperceptible attacks against ResNet-50. For the first image with a relatively simple and clear object, the unrestricted attacks of NCF, DiffAttack and ACA perform attacks by modifying the semantics fairly, while 
DiffPGD uses denoising to avoid significant semantic modifications, but often has lower ASR as in Table~\ref{tab:tab01}.
However, for the image with complex content, the unrestricted attacks result in obvious unnatural color, texture, artifacts and semantic changes. For the normal attacks with perceptual-based restrictions, by amplifying the noises, it can be seen that AdvDrop has a obvious gridding effect due to the blocking operation in DCT operation, and the perturbation strength in PerC-AL and SSAH is also related to the edge or texture components of the image. In contrast, our AdvAD continuously maintains uniform and lower  perturbation which is very difficult to be seen even in the adversarial examples with $\times5$ noise. For AdvAD-X, the perturbations are very slight modifications to the decimal places of the floating-point  raw data for each pixel, thus it is still difficult to be seen even after $\times100$ magnification. More quantitative comparisons and visualizations are provided in \textbf{Appendix}~\ref{app:C1}, \ref{app:C2}.

\begin{table*}[t]
    \caption{Results of ASR against defenses for robustness evaluation, including three post-processing purification methods and four adversarial training white-box robust models.  }
    \label{tab:tab2}
    \centering
    {
    \setlength{\tabcolsep}{4pt}
    \scriptsize\begin{tabular}{lccccccccccc}
        \toprule
        \multirow{4}{*}{\makecell{Attack Method}} & \multicolumn{4}{c}{Post Purifications (Normal Res-50)} & & \multicolumn{5}{c}{Attack Adversarial Training Model} & \multirow{4}{*}{\makecell{All Avg.}} \\
        \cmidrule{2-5} \cmidrule{7-11}
        & \makecell{NRP \\ \cite{naseer2020self}} & \makecell{\quad DS \quad \\ \quad  \cite{salman2020denoised} \quad} & \makecell{Diffusion\\ \cite{lee2023robust}}  & \makecell{Avg.}  & & \makecell{Inc-V3 \\ \cite{tramèr2018ensemble}}  & \makecell{Res-50\\ \cite{salman2020adversarially}} & \makecell{Swin-B\\ \cite{liu2023comprehensive}} & \makecell{ConvNeXt-B\\ \cite{liu2023comprehensive}}  & \makecell{Avg.} & \\
        \midrule
        AdvDrop \cite{duan2021advdrop}                     & \underline{50.2}              & \quad \textbf{30.1}\quad      & \textbf{37.1}  & \textbf{39.1}  &  & 93.7      & 72.4      & \underline{31.2}      & 37.3    & 58.7    & \underline{50.3} \\
        PerC-AL \cite{zhao2020towards}                     & 30.3              & \quad28.8\quad      & 25.4  & 28.2   &  & \textbf{99.9}     & 46.1      & {8.2}       & {7.0}    & 40.3     & 35.1 \\
        SSAH \cite{luo2022frequency}                        & 25.6              & \quad28.0\quad      & 11.0  & 21.5   &    & 91.2       & \textbf{84.6}      & 16.8      & \underline{47.4 }   & \underline{60.0}    & 43.5 \\
        
        \midrule
        
        AdvAD (ours)                & \textbf{51.5}     & \quad\underline{29.5}\quad      & \underline{31.2}  & \underline{37.4}   &    & \underline{98.9}      & \underline{79.3}      & \textbf{60.2}      & \textbf{62.7}    & \textbf{75.3}    & \textbf{59.0} \\
        $\text{AdvAD-X}^{\textcolor{blue}{\boldsymbol{\dag}}} \text{(ours)}$              & \textcolor{blue}{13.4}   & \quad\textcolor{blue}{27.6}\quad      & \textcolor{blue}{10.2}  & 17.1   &    & \textcolor{blue}{57.2}       & \textcolor{blue}{45.2}      & \textcolor{blue}{18.0}      & \textcolor{blue}{16.2}   & 34.2   & \textcolor{blue}{26.8} \\
        \bottomrule
    \end{tabular}}
    \vspace{-0.5cm}
\end{table*}

\begin{figure}[t]
    \centering
    \begin{minipage}[t]{0.45\textwidth}
        \includegraphics[width=0.95\textwidth]{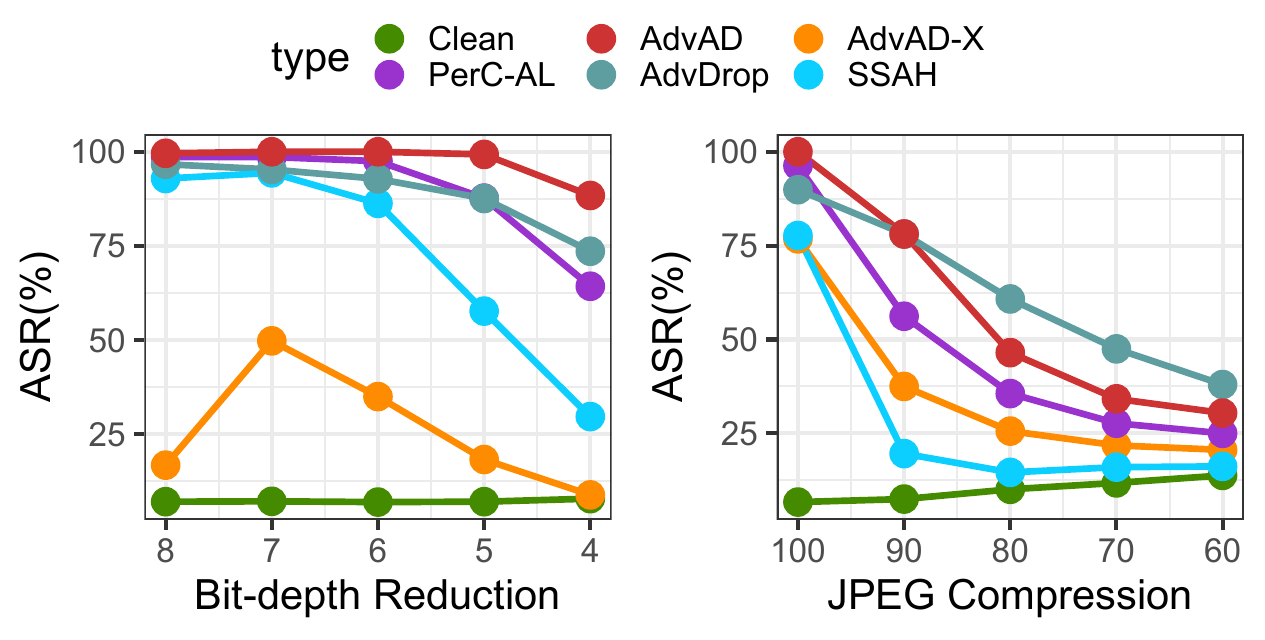}
        \caption{Rubostness on JPEG compression and Bit-depth reduction with different factors.}
        \label{fig:3}
    \vspace{-0.2599cm}
    \end{minipage}
    \hfill
    \begin{minipage}[t]{0.5\textwidth}
    \vspace{-3cm}
    \centering
    \captionsetup{type=table}
    \caption{Results of imperceptible attacks against random smoothing defense.  Adversarial examples are crafted using only the base model, and then 100 rounds of random smoothing are applied to obtain the final ASR. $\sigma$ is the variance of smoothing noise.}
    \setlength{\tabcolsep}{3pt}
    \label{tab:tab21}
    \scriptsize
    \begin{tabular}{lcccccccc}
    \toprule
                    &   \multicolumn{2}{c}{$\sigma=0.25$} & &  \multicolumn{2}{c}{$\sigma=0.50$} & &  \multicolumn{2}{c}{$\sigma=1.00$}        \\
    \cmidrule{2-3} \cmidrule{5-6} \cmidrule{8-9}
                    & ASR$\uparrow$ & $l_2$$\downarrow$ & & ASR$\uparrow$ & $l_2$$\downarrow$ & & ASR$\uparrow$ & $l_2$$\downarrow$  \\
    \midrule
        clean       &   17.3    &   -   & &   30.3    &   -   & &   46.8    &   -  \\
        AdvDrop     &   25.2    &   5.97   & &   33.5    &   6.21   & &   48.7    &   5.61  \\
        SSAH        &   21.8    &   13.84   & &   32.4    &   14.82   & &   46.9    &   13.68  \\
        AdvAD (ours)&   \textbf{28.2}    &   \textbf{2.41}   & &   \textbf{36.8}    &   \textbf{2.51}   & &   \textbf{50.4}    &   \textbf{2.08}  \\
    \bottomrule
    \end{tabular}
    \label{tab:my_label}
    \end{minipage}
    \vspace{-0.2599cm}
\end{figure}

\subsection{Robustness}

The robustness of attacks is also evaluated against defense methods, including purification methods of NRP \cite{naseer2020self}, DS \cite{salman2020denoised}, diffusion-based purification \cite{lee2023robust} and adversarial training robust models of Inc-V3 \cite{tramèr2018ensemble}, Res-50 \cite{salman2020adversarially}, Swin-B \cite{liu2023comprehensive}, ConvNeXt-B \cite{liu2023comprehensive}. Two classic image transformation defenses of JPEG compression \cite{das2018shield}, Bit-depth reduction \cite{guo2018countering}, and another type of defense, random smoothing \cite{cohen2019certified}, are also included. Considering the robustness and transferability of attacks are comparable only under close perturbation budget, the unrestricted attacks are not included in this and the next section.

As shown in Table~\ref{tab:tab2}, the proposed AdvAD demonstrates the best robustness in overall average
compared with other imperceptible attacks of AdvDrop, PerC-AL and SSAH. Specifically, when attacking robust models, AdvAD achieved an much higher average ASR of 75.3$\%$. For post-processing purifications aim at eliminating adversarial perturbations, despite the inherently lower perturbation strength, AdvAD still maintains the best or second-best ASR against different purifications, which is comparable to AdvDrop with much higher perturbation strength. Similarly, for the results of classic image transformation defenses in Figure~\ref{fig:3}, AdvAD also exhibits advantages in most of the factors.
In addition, since random smoothing is not a truly end-to-end method but a method that uses the base model to make multiple predictions on noise-augmented images, we adpot a semi-white-box setup to fully test the attack performance as described in the caption. Table~\ref{tab:tab21} shows the experimental results, and the PerC-AL is not included because it fails to attack in this setting. It can be seen that for all $\sigma$, our AdvAD continuously achieves the best ASR with smaller perturbation strength.

\begin{wrapfigure}{r}{0.4\textwidth}  
\vspace{-0.5cm}
\setlength{\abovecaptionskip}{0.15cm}
\centering
\includegraphics[width=0.4\textwidth]{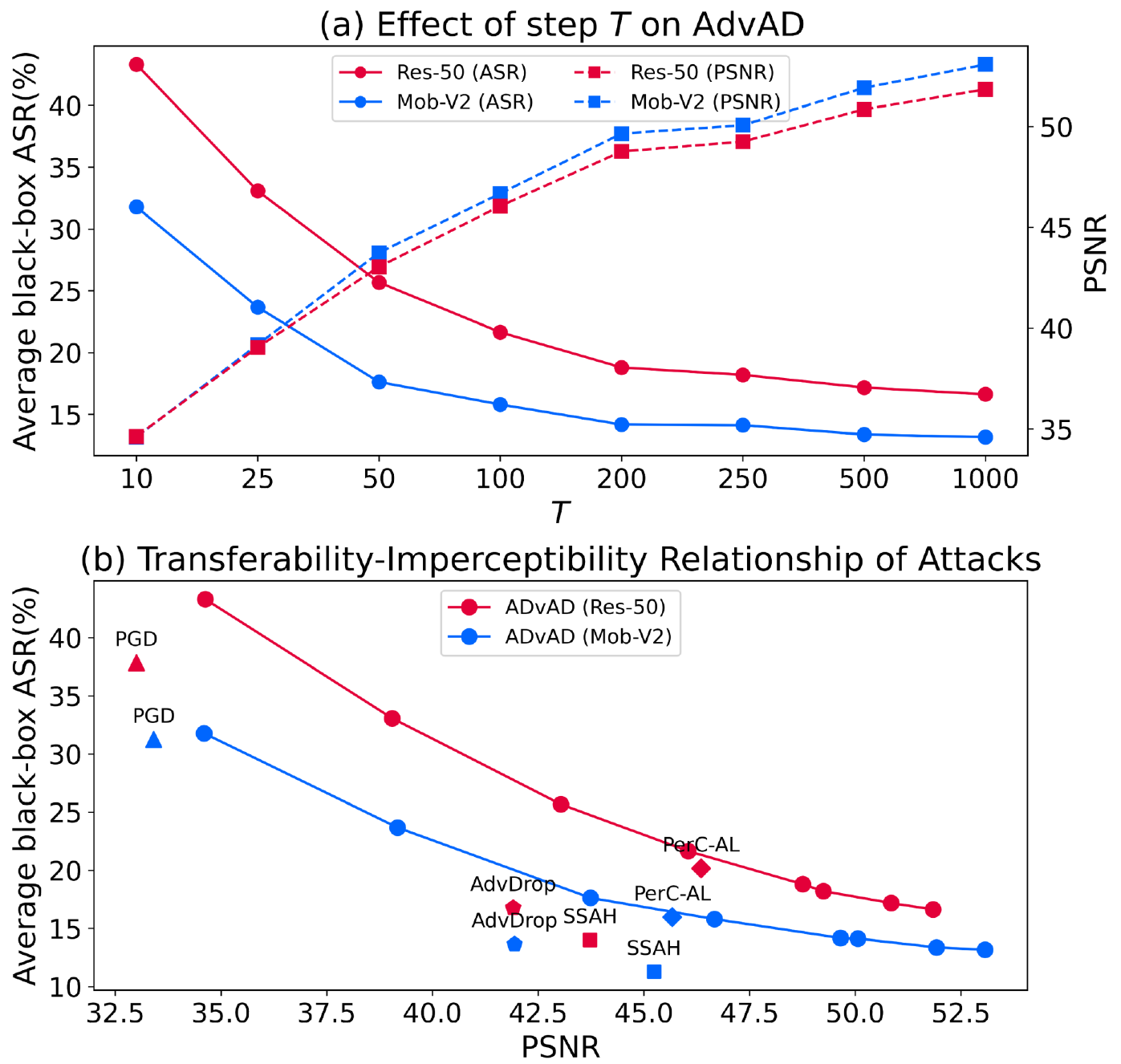}
\caption{More results of (a) effect of step $T$ on AdvAD and (b) transferability-imperceptibility relationship of attacks.}
\label{fig:4}
\vspace{-1cm}
\end{wrapfigure}

We suppose the robustness of AdvAD mainly benefits from two aspects. Firstly, AdvAD performs attacks during a unique non-parametric diffusion process with adversarial guidance, which may be easier to break through existing adversarial training models using common attack paradigms. On the other hand, the inherently lower perturbation crafted by AdvAD is spread across the images more uniformly rather than gathering in some areas as can be seen in the visualization, making it more difficult to be eliminated. For AdvAD-X, it is anticipated to exhibit weak robustness since the extremely low perturbation in the ideal scenario is easy to defense.

\begin{table}[t]
    \caption{Transferability and effect of $T$ of the proposed AdvAD. $*$ means white-box ASR. }
    \label{tab:tab3}
    \centering
    {
    \setlength{\tabcolsep}{5.3pt}
    \scriptsize\begin{tabular}{clccccccccc}
        \toprule
        \makecell{Model}    & Attack Method     & Res-50    & Mob-V2    & Inc-V3    & VGG-19    & $l_2$ $\downarrow$    & \;PSNR $\uparrow$     & \;SSIM $\uparrow$     & \;FID $\downarrow$    & \;LPIPS $\downarrow$    \\
        \midrule
        \multirow{7}{*}{\makecell{Res-50 \\ \cite{he2016deep}}}      & SSAH \cite{luo2022frequency}              & $\text{\textbf{99.7}}^*$      & 15.5      & 20.4      & 12.7      & 2.65      & 43.73     & 0.9911    & 4.48      & 0.0021 \\
                                                & AdvAD ($T$=1000)    & $\text{\textbf{99.7}}^*$      & \textbf{18.3}      & \textbf{22.6}      & \textbf{15.1}      & \textbf{1.06}      & \textbf{51.84}     & \textbf{0.9980}    & \textbf{2.42}      & \textbf{0.0005} \\
        \cmidrule(lr){2-11}
                                                & AdvDrop \cite{duan2021advdrop}           & $\text{96.8}^*$      & 17.3      & 23.1      & 15.8      & 3.17      & 41.91     & 0.9872    & \textbf{5.57}      & 0.0061 \\
                                                & PerC-AL \cite{zhao2020towards}           & $\text{98.8}^*$      & 22.4      & 23.8      & 17.4      & 2.05      & \textbf{46.35}     & 0.9894    & 8.62      & 0.0029 \\
                                                & AdvAD ($T$=100)     & $\text{\textbf{100.0}}^*$     & \textbf{23.5}      & \textbf{24.9}      & \textbf{19.9}      & \textbf{1.97}      & 46.04     & \textbf{0.9912}    & 7.15      & \textbf{0.0026} \\
        \cmidrule(lr){2-11}                                        
                                                & PGD \cite{madry2018towards}               & $\text{98.6}^*$      & 41.4      & 36.7      & 36.0      & 8.17      & 33.53     & 0.8830    & 35.25     & \textbf{0.0517} \\
                                                & AdvAD ($T$=10)      & $\text{\textbf{100.0}}^*$     & \textbf{44.3}      & \textbf{37.6}      & \textbf{42.9}      & \textbf{7.21}      & \textbf{34.63}     & \textbf{0.9015}    & \textbf{30.84}     & 0.0547 \\
        \midrule
        \multirow{7}{*}{\makecell{Mob-V2 \\ \cite{sandler2018mobilenetv2}}}      & SSAH \cite{luo2022frequency}              & 7.7       & $\text{97.8}^*$      & 19.8      & 11.6      & 2.18      & 45.24     & 0.9930    & 2.95      & 0.0016 \\
                                                & AdvAD ($T$=1000)    & \textbf{9.7}       & $\text{\textbf{99.7}}^*$      & \textbf{21.3}      & \textbf{14.8}      & \textbf{0.94}      & \textbf{53.08}     & \textbf{0.9982}    & \textbf{1.46}      & \textbf{0.0004} \\
        \cmidrule(lr){2-11}
                                                & AdvDrop \cite{duan2021advdrop}           & 9.7       & $\text{97.7}^*$      & 22.7      & 15.0      & 3.16      & 41.94     & 0.9873    & 4.88      & 0.0064 \\
                                                & PerC-AL \cite{zhao2020towards}           & \textbf{12.7}      & $\text{99.8}^*$      & 23.3      & 17.8      & 2.16      & 45.67     & 0.9879    & 8.77      & 0.0032 \\
                                                & AdvAD ($T$=100)     & 12.2      & $\text{\textbf{100.0}}^*$    & \textbf{23.4}      & \textbf{17.9}      & \textbf{1.83}      & \textbf{46.68}     & \textbf{0.9919}    & \textbf{4.73}      & \textbf{0.0020} \\
        \cmidrule(lr){2-11}                                        
                                                & PGD \cite{madry2018towards}               & 29.9      & $\text{99.9}^*$      & \textbf{35.3}      & 37.9      & 8.29      & 33.41     & 0.8803    & 34.57     & 0.0500 \\
                                                & AdvAD ($T$=10)      & \textbf{30.6}     & $\text{\textbf{100.0}}^*$      & \textbf{35.3}      & \textbf{38.5}      & \textbf{7.23}      & \textbf{34.60}     & \textbf{0.9006}    & \textbf{27.25}     & \textbf{0.0480} \\
        \bottomrule
    \end{tabular}}
    \vspace{-0.5cm}
\end{table}

\subsection{Transferability and Effect of Step $T$ on AdvAD}
Table~\ref{tab:tab3} reports the ASRs of black-box attacks and the corresponding results of imperceptibility. We also test AdvAD with different step of $T$ for comprehensive evaluation. Consistent with the diffusion models, a larger $T$ denotes a finer decomposition granularity of the entire process, corresponding to the strength of adversarial guidance at each step. Thus, AdvAD with a larger $T$ exhibits better imperceptibility, while a smaller $T$ implies stronger black-box transferability. Notbly, though there is a clear negative correlation between imperceptibility and transferability, our AdvAD exceeds all comparison attacks in both of transferability and imperceptibility at different comparable levels, demonstrating the effectiveness of the proposed novel modeling framework. 

To further elaborate the relationship between transferability and imperceptibility of AdvAD, as well as the optimal trade-off in practice, we plot two line graphs in Figure~\ref{fig:4} under more values of $T$. As shown in Figure~\ref{fig:4} (a), as the value of $T$ on the horizontal axis changes, the relationship between imperceptibility and transferability shows a clear proportional trend as mentioned above, consistent across different surrogate models. For the optimal trade-off, we consider that the intersection point of the two curves represents a balance between imperceptibility and transferability. Accordingly, for the ResNet-50 and MobileNetV2 models, the optimal values of $T$ are 50 and 25, respectively. Moreover, Figure~\ref{fig:4} (b) illustrates more direct curves of this relationship and the positions of other comparison methods within it. Note that, all the other comparison methods are located to the lower left of the curve of AdvAD. This indicates that our method consistently achieves the best results in both transferability and imperceptibility compared with other state-of-the-art restricted imperceptible attacks, demonstrating the effectiveness of our AdvAD as a new attack framework with flexibility through the proposed non-parametric diffusion process.

\begin{figure}[t]
    \centering
    \begin{minipage}[t]{0.45\textwidth}
        \setlength{\abovecaptionskip}{0cm}
        \centering
        \includegraphics[width=0.95\textwidth]{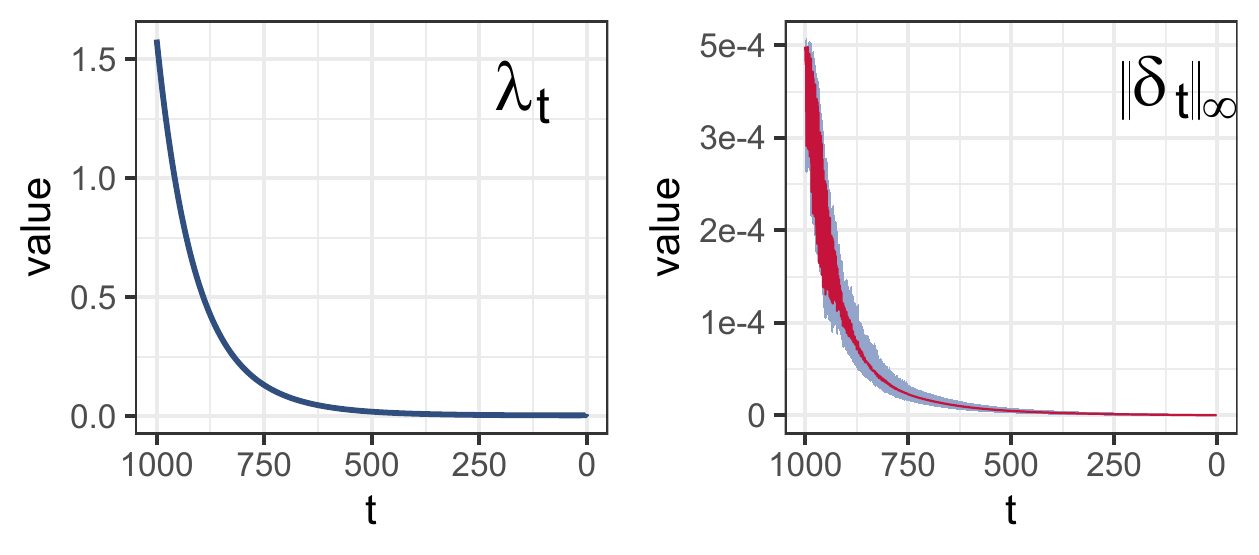}
        \caption{Values of $\lambda_t$ (left) and $\|\boldsymbol{\delta}_t\|_\infty$ (right) of Eq.~\eqref{maineq:12} throughout the diffusion process.}
        \label{fig:5}
        \vspace{-0.2599cm}
    \end{minipage}%
    \hfill
    \begin{minipage}[t]{0.5\textwidth}
        \vspace{-2.5cm}
        \centering
        \captionsetup{type=table}
        \caption{Results of AdvAD and AdvAD-X with smaller $\xi$ against Res-50. Step $T$ is fixed as $1000$.}
        \label{tab:tab4}
        \setlength{\tabcolsep}{2.5pt}
        \scriptsize\begin{tabular}{clccccc}
            \toprule
            $\xi$      & Attack     & ASR$\uparrow$       & $l_2$$\downarrow$    & PSNR$\uparrow$    & SSIM$\uparrow$     & FID$\downarrow$  \\
            \midrule
            \multirow{3}{*}{4/255\;}      
                                        & AdvAD            & {98.6}      & {0.93}      & {53.27}     & {0.9986}        & {1.78}    \\
                                        & $\text{AdvAD-X}^{\textcolor{blue}{\boldsymbol{\dag}}}$        & \textcolor{blue}{100.0}     & \textcolor{blue}{0.29}      & \textcolor{blue}{65.07}     & \textcolor{blue}{0.9998}        & \textcolor{blue}{0.18}     \\
            \midrule
            \multirow{3}{*}{2/255\;}      
                                        & AdvAD            & {96.1}      & {0.82}      & {54.85}     & {0.9989}        & {1.33}    \\
                                        & $\text{AdvAD-X}^{\textcolor{blue}{\boldsymbol{\dag}}}$        & \textcolor{blue}{99.4}      & \textcolor{blue}{0.27}      & \textcolor{blue}{65.95}     & \textcolor{blue}{0.9998}        & \textcolor{blue}{0.15}    \\
            \midrule
            \multirow{3}{*}{1/255\;}      
                                        & AdvAD            & {87.4}      & {0.66}      & {57.87}     & {0.9993}        & {0.77}    \\
                                        & $\text{AdvAD-X}^{\textcolor{blue}{\boldsymbol{\dag}}}$        & \textcolor{blue}{94.8}      & \textcolor{blue}{0.26}      & \textcolor{blue}{66.42}     & \textcolor{blue}{0.9998}        & \textcolor{blue}{0.14}    \\
            \bottomrule
        \end{tabular}
        \vspace{-0.2599cm}
    \end{minipage}
    \vspace{-0.2599cm}
\end{figure}

\subsection{Analysis}
\label{sec:4.5}

\textbf{Eq.~\eqref{maineq:12} in Practice.} With the derived analytical formulation of Proposition~\ref{mainprop1}, in Figure~\ref{fig:5}, we illustrate the actual values of $\lambda_t$ and $\|\boldsymbol{\delta}_t\|_\infty$ of Eq.~\eqref{maineq:12} using 100 randomly selected images. While Proposition~\ref{mainprop1} indicates that the upper bound of $\|\boldsymbol{\delta}_t\|_\infty$ is invariant with respect to step $t$, the actual strength of the adversarial guidance produced by AMG rapidly decreases as the process progresses, which validates the unique property given at the end of Sec. 3.2. With the similarly decreasing $\lambda_{t}$, the whole term of $\lambda_{t}\|\boldsymbol{\delta}_t\|_\infty$ representing $l_\infty$ distance of the guidance at step $t$ also decreases from about 0.0008 to 0, supporting that the proposed modeling framework performs imperceptible attacks with inherently small perturbation strength. \textbf{Performance with Smaller $\xi$.}  The results of AdvAD and AdvAD-X with smaller $\xi$ for PC module are shown in Table~\ref{tab:tab4}. As $\xi$ decreases from $8$ to $2$, the imperceptibility is naturally improved because of the upper bound of perturbation becomes lower, yet the ASR of 94.8\% only drops slightly. When $\xi=1/255$, AdvAD still holds 87.4\% ASR with 57.87 PSNR and 0.9993 SSIM, which means a large number of examples still can fool the DNN with a maximum of $\pm1$ modification for each pixel, demonstrating the effectiveness of the adversarial guidance injected in the proposed diffusion process for attacks. Moreover, we provide the ablation study of AdvAD-X and additional discussions in \textbf{Appendix}~\ref{app:C3}, \ref{app:C4}.

\section{Conclusion and Outlook}
In this paper, we propose a novel, fundamental modeling framework distinct from existing paradigms to tackle the challenge of imperceptible attacks. By exploring and deriving basic theory of diffusion models, the proposed AdvAD performs attacks through a non-parametric diffusion process with adversarial guidance, achieving inherently lower overall perturbation strength with high attack efficacy from a modeling perspective. Besides, the proposed AdvAD-X evaluates the extreme of this novel modeling framework and further reduces the perturbation strength to an extremely low level in an ideal scenario. Extensive experimental results support the effectiveness and progressiveness of the proposed methods. 
Beyond imperceptibility, AdvAD holds the potential to become a general and extensible attack paradigm thanks to the solid theoretical foundation and the innovative, controllable diffusion-based process for attacks. 
In addition, we also hope the new observation that AdvAD-X can successfully attack with extremely small perturbation using floating-point raw data can bring inspiration for revealing the robustness and interpretability (e.g., decision boundaries) of DNNs.

\begin{ack}
This work was supported by NSFC (Grant No. 62072484), Natural Science Foundation of Guangdong Province (Grant No. 2514050000889) and Guangdong Key Laboratory of Information Security (No. 2023B1212060026).
\end{ack}


\small
\bibliography{references}



\newpage

\normalsize
\appendix

\renewcommand{\arraystretch}{0.6}
\setlength{\aboverulesep}{1.25pt} 
\setlength{\belowrulesep}{1.25pt}

\setcounter{theorem}{0} 
\setcounter{lemma}{0}   
\setcounter{proposition}{0}

\tableofcontents

\newpage


\section{Related Work} \label{app:A}

 Beginning with the attack paradigm of Fast Gradient Sign Method (FGSM) \cite{goodfellow2014explaining}, 
there are numerous great works focusing on the imperceptibility of adversarial attacks have been proposed \cite{carlini2017towards,luo2018towards,zhao2020towards,laidlaw2021perceptual,duan2021advdrop,chen2023imperceptible,jia2022exploring}.   
In contrast to another line of attacks aimed at improving the attack success rate and the transferability for black-box models with a more lenient limitation of perturbation strength \cite{dong2018boosting,zhang2022enhancing,wei2023enhancing}, imperceptible adversarial attacks are dedicate to accomplish attacks using as minimal perturbation as possible while deceiving human perception. 
Among them, PerC-AL \cite{zhao2020towards} improves the imperceptibility by alternating between the classification loss and perceptual color difference when updating perturbations.
AdvDrop \cite{duan2021advdrop} uses Discrete Cosine Transform (DCT) to discard details in images that are imperceptible for humans. SSAH \cite{luo2022frequency} limits perturbation to high-frequency components using Discrete Wavelet Transform (DWT) to make it undetectable. Similarly, AdvINN \cite{chen2023imperceptible} also utilizes the DWT and exploits invertible neural networks to specially perform targeted attacks. In addition, with an unrestriced setting \cite{song2018constructing}, some recent works have incorporated the capabilities of generative models (e.g., diffusion models \cite{ho2020denoising}) into common attack frameworks to make adversarial examples more natural and enhance the imperceptibility. DiffAttack \cite{chen2023diffusion} and ACA \cite{chen2024content} combine the optimization of adversarial losses with the Stable Diffusion \cite{rombach2022high} to generate unrestricted adversarial examples, while Diff-PGD \cite{xue2023diffusion} and AdvDiffuser \cite{chen2023advdiffuser} incorporate the classic PGD method \cite{madry2018towards} into the diffusion steps to make the adversarial examples undergo denoising processing. 

Compared to these traditional restricted imperceptible attacks or the recent unrestricted imperceptible attacks (e.g., diffusion-based), the proposed AdvAD is a completely novel approach distinct from existing attack paradigms. It is the first pilot framework which innovatively conceptualizes attacking as a non-parametric diffusion process by theoretically exploring fundamental modeling approach of diffusion models rather than using their denoising or generative abilities, achieving high attack efficacy and imperceptibility with intrinsically lower perturbation strength. Following the setting of restricted attack, the modeling of AdvAD is theoretically derived from conditional sampling of diffusion models, supporting its attack performance and imperceptibility, and does not require any loss functions, optimizers, or additional neural networks.

\section{Derivations and Proofs} \label{app:B}
\label{sec:1}
In this section, we first introduce the specific straightforward Pixel-level Constraint (PC) for $\boldsymbol{\Hat{x}}_t$ that is simply mentioned at the beginning of Section 3.3 of the main text as an intuitive preliminary, then we provide the detailed proofs of Theorem 1 and Proposition 1, 2 given in the PC for $\boldsymbol{\Hat{\epsilon}}_t$.

\subsection{Straightforward PC for $\boldsymbol{\Hat{x}}_t$} \label{app:B1}
For each known $\boldsymbol{\Bar{x}_{t}}$ in the fixed diffusion trajectory of the original image $\boldsymbol{{x}}_{ori}$, and modified $\boldsymbol{\Hat{x}}_{t}$ in the attacking trajectory with the proposed Attacked Model Guidance (AMG) leading to the adversarial example $\boldsymbol{{x}}_{adv}$, the objective of PC is to control and constrain these two trajectories to be close, ensuring the final $\boldsymbol{\Hat{x}}_{0}$ (i.e., $\boldsymbol{{x}}_{adv}$) close to $\boldsymbol{\Bar{x}}_{0}$ (i.e., $\boldsymbol{{x}}_{ori}$). It is obvious that a straightforward way to achieve the goal by directly constrain every $\boldsymbol{\Hat{x}}_{t}$ using $\boldsymbol{\Bar{x}}_{t}$. Thus, in PC for $\boldsymbol{\Hat{x}}_{t}$,
we can utilize the restriction of adversarial examples and the relationship between $\boldsymbol{{x}}_{adv}$, $\boldsymbol{\Hat{x}}_{t}^{0}$ and $\boldsymbol{\Hat{x}}_{t}$ to derive the constraint for $\boldsymbol{\Hat{x}}_{t}$. Given the budget $\xi$, the desired restriction of adversarial examples is
\begin{equation}\tag{15}
\left\|\boldsymbol{x}_{adv}-\boldsymbol{x}_{ori}\right\|_\infty \leq \xi.
\label{apdx_eq:1}
\end{equation}
Next, since the $\boldsymbol{\Hat{\epsilon}}_{t}$ is unconstrained in the case of PC for $\boldsymbol{\Hat{x}}_{t}$, we adopt the initialized $\boldsymbol{{\epsilon}}_{0}$ to calculate  $\boldsymbol{\Hat{x}}_{t}^{0}$ approximating $\boldsymbol{x}_{adv}$ as:
\begin{equation}\tag{16}
\boldsymbol{x}_{adv} \approx \boldsymbol{\Hat{x}}_t^0(\boldsymbol{\epsilon}_{0}) = \frac{\boldsymbol{\Hat{x}}_t - \sqrt{1-\alpha_t}\boldsymbol{\epsilon}_0}{\sqrt{\alpha_t}}.
\label{apdx_eq:2}
\end{equation}
where $\alpha_0=1$, and $\alpha_{1:T}\in(0,1]^T$ is a pre-defined decreasing scalar sequence. And the $\boldsymbol{\Bar{x}}_t$ of the original trajectory is calculated as:
\begin{equation}\tag{17}
\boldsymbol{\Bar{x}}_t = \sqrt{\alpha_t}\boldsymbol{x}_{ori} + \sqrt{1-\alpha_t}\boldsymbol{\epsilon}_0
\label{apdx_eq:3}
\end{equation}
By substituting Eq.~\eqref{apdx_eq:2} and Eq.~\eqref{apdx_eq:3} into Eq.~\eqref{apdx_eq:1}, the constraint for $\boldsymbol{\Hat{x}}_t$ can be easily derived, denoted as:
\begin{equation}\tag{18}
\begin{split}
& \left\|\frac{\boldsymbol{\Hat{x}}_t - \sqrt{1-\alpha_t}\boldsymbol{\epsilon}_0}{\sqrt{\alpha_t}} - \frac{\boldsymbol{\Bar{x}}_t - \sqrt{1-\alpha_t}\boldsymbol{\epsilon}_0}{\sqrt{\alpha_t}}\right\|_\infty \leq \xi \\
\xLeftrightarrow{} &  \left\|\frac{\boldsymbol{\Hat{x}}_t}{\sqrt{\alpha_t}} - \frac{\boldsymbol{\Bar{x}}_t}{\sqrt{\alpha_t}}\right\|_\infty \leq \xi \\
\xLeftrightarrow{} & \left\|{\boldsymbol{\Hat{x}}_t} - {\boldsymbol{\Bar{x}}_t}\right\|_\infty \leq \sqrt{\alpha_t}\,\xi
\end{split}
\label{apdx_eq:4}
\end{equation}
In this way, the PC for $\boldsymbol{\Hat{x}}_t$ can be implemented  at the start of each step to achieve the basic restrictions by employing a projection operation to $\boldsymbol{\Hat{x}}_t$ according to Eq.~\eqref{apdx_eq:4}. However, this direct modification of $\boldsymbol{\Hat{x}}_t$ at each step obviously disrupts the entire diffusion trajectory from noise to our desired adversarial distribution, and can not achieve the final imperceptibility. Additionally, it is observed that the estimation of $\boldsymbol{\Hat{x}}_t^0$ in each step employs a fixed $\boldsymbol{\epsilon}_0$, impairing the adversarial guidance crafted by AMG.
\subsection{Proof of Theorem 1}  \label{app:B2}
Therefore, we carefully analysis the important noise term in our diffusion-based modeling approach for adversarial attacks, and present Theorem 1 to support our PC for $\boldsymbol{\Hat{\epsilon}}_t$ as described in the main text. The proof of Theorem 1 is provided as follow.

\begin{theorem}
\label{thm1}
    Given diffusion coefficients $\alpha_{T:0}\in(0,1]^T$, the $\boldsymbol{x}_{ori}$, $\boldsymbol{\Bar{x}}_{t}$, $\boldsymbol{\epsilon}_0$ from the original trajectory, $\boldsymbol{\Hat{x}}_{t}$, $\boldsymbol{\Hat{\epsilon}}_{t}$ from the modified trajectory, and a variable $\xi$, if $\boldsymbol{\Hat{\epsilon}}_{t}$ and $\boldsymbol{{\epsilon}}_{0}$ satisfies 
    \begin{equation}\tag{19}
        \|\boldsymbol{\Hat{\epsilon}}_{t} - \boldsymbol{\epsilon}_{0}\|_{\infty} \leq \frac{\sqrt{\alpha_T}}{\sqrt{1-\alpha_T}}\xi,
    \label{apdx_eq:5}
    \end{equation}
    for all $t\in[T:1]$, then it follows that 
        $\|\boldsymbol{\Hat{x}}_{t} - \boldsymbol{\Bar{x}}_{t}\|_{\infty} \leq (\sqrt{\alpha_{t}} - \sqrt{1 - \alpha_{t}} \frac{\sqrt{\alpha_T}}{\sqrt{1-\alpha_T}})\xi, \ \|\boldsymbol{\Hat{x}}_{t}^{0} - \boldsymbol{x}_{ori}\|_{\infty}\leq\xi, \ \text{and}\ \|\boldsymbol{\Hat{x}}_{0} - \boldsymbol{x}_{ori}\|_{\infty}\leq\xi$
    hold true.
\end{theorem}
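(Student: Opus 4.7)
The natural approach is downward induction on $t$, from $t=T$ to $t=0$, proving the first bound $\|\boldsymbol{\hat{x}}_t - \boldsymbol{\bar{x}}_t\|_\infty \leq (\sqrt{\alpha_t} - \sqrt{1-\alpha_t}\,\tfrac{\sqrt{\alpha_T}}{\sqrt{1-\alpha_T}})\xi$ as the main induction; the other two claims will then fall out as consequences. The base case $t=T$ is immediate: Algorithm~1 initializes $\boldsymbol{\hat{x}}_T := \boldsymbol{\bar{x}}_T$, so the left-hand side is $0$, and plugging $t=T$ into the right-hand side gives $(\sqrt{\alpha_T} - \sqrt{1-\alpha_T}\,\tfrac{\sqrt{\alpha_T}}{\sqrt{1-\alpha_T}})\xi = 0$ as well.

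For the inductive step, I would subtract the two backward updates. Using Eq.~\eqref{maineq:11} for $\boldsymbol{\hat{x}}_{t-1}$ and the analogous identity $\boldsymbol{\bar{x}}_{t-1} = \tfrac{\sqrt{\alpha_{t-1}}}{\sqrt{\alpha_t}}(\boldsymbol{\bar{x}}_t - \sqrt{1-\alpha_t}\,\boldsymbol{\epsilon}_0) + \sqrt{1-\alpha_{t-1}}\,\boldsymbol{\epsilon}_0$ derived by eliminating $\boldsymbol{x}_{ori}$ from Eq.~\eqref{maineq:4} written at level $t$, the cross terms in $\boldsymbol{x}_{ori}$ / the predicted image cancel cleanly and one obtains
\[
\boldsymbol{\hat{x}}_{t-1} - \boldsymbol{\bar{x}}_{t-1}
= \tfrac{\sqrt{\alpha_{t-1}}}{\sqrt{\alpha_t}}(\boldsymbol{\hat{x}}_t - \boldsymbol{\bar{x}}_t) + \Big(\sqrt{1-\alpha_{t-1}} - \tfrac{\sqrt{\alpha_{t-1}}\sqrt{1-\alpha_t}}{\sqrt{\alpha_t}}\Big)(\boldsymbol{\hat{\epsilon}}_t - \boldsymbol{\epsilon}_0).
\]
Taking $\|\cdot\|_\infty$, using the triangle inequality, substituting the inductive hypothesis for the first term and the theorem's hypothesis~\eqref{maineq:9} for the second, the coefficients telescope: the middle $\tfrac{\sqrt{\alpha_{t-1}}\sqrt{1-\alpha_t}}{\sqrt{\alpha_t}}\cdot\tfrac{\sqrt{\alpha_T}}{\sqrt{1-\alpha_T}}\xi$ contributions cancel, leaving exactly $(\sqrt{\alpha_{t-1}} - \sqrt{1-\alpha_{t-1}}\,\tfrac{\sqrt{\alpha_T}}{\sqrt{1-\alpha_T}})\xi$. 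The only minor subtlety is the sign of the second coefficient (which is negative because $\alpha_{t-1}>\alpha_t$), but the triangle inequality absorbs this automatically, so the telescoping is the whole content.

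The second claim then follows from the first in one line: writing $\boldsymbol{\hat{x}}_t^{0} - \boldsymbol{x}_{ori} = \tfrac{1}{\sqrt{\alpha_t}}(\boldsymbol{\hat{x}}_t - \boldsymbol{\bar{x}}_t) - \tfrac{\sqrt{1-\alpha_t}}{\sqrt{\alpha_t}}(\boldsymbol{\hat{\epsilon}}_{t+1} - \boldsymbol{\epsilon}_0)$ (using Eq.~\eqref{maineq:7} together with $\boldsymbol{\bar{x}}_t = \sqrt{\alpha_t}\boldsymbol{x}_{ori} + \sqrt{1-\alpha_t}\boldsymbol{\epsilon}_0$), the triangle inequality and the two available bounds give $\|\boldsymbol{\hat{x}}_t^0 - \boldsymbol{x}_{ori}\|_\infty \leq \xi - \tfrac{\sqrt{1-\alpha_t}}{\sqrt{\alpha_t}}\tfrac{\sqrt{\alpha_T}}{\sqrt{1-\alpha_T}}\xi + \tfrac{\sqrt{1-\alpha_t}}{\sqrt{\alpha_t}}\tfrac{\sqrt{\alpha_T}}{\sqrt{1-\alpha_T}}\xi = \xi$, where the boundary case $t=T$ uses $\boldsymbol{\hat{\epsilon}}_{T+1}=\boldsymbol{\epsilon}_0$ from Algorithm~1 line~5. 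Finally, the third claim is just the first at $t=0$: with $\alpha_0 = 1$ the bound collapses to $\xi$, and $\boldsymbol{\bar{x}}_0 = \boldsymbol{x}_{ori}$.

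The main obstacle I anticipate is purely algebraic rather than conceptual: picking the right formulation of $\boldsymbol{\bar{x}}_{t-1}$ so that the difference $\boldsymbol{\hat{x}}_{t-1} - \boldsymbol{\bar{x}}_{t-1}$ splits cleanly into the two pieces above. Once that rewriting is done, the telescoping of coefficients is exactly what makes the recursion preserve the closed-form bound, and every other step is a direct application of the triangle inequality.
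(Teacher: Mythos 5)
Your proposal is correct and follows essentially the same route as the paper's Appendix~B.2 proof: downward induction on $t$, subtracting the two deterministic backward updates so the difference splits into a $(\boldsymbol{\hat{x}}_t-\boldsymbol{\bar{x}}_t)$ term and a $(\boldsymbol{\hat{\epsilon}}_t-\boldsymbol{\epsilon}_0)$ term, applying the triangle inequality, and letting the coefficients telescope to the closed-form bound. The only cosmetic difference is that you carry a single induction on $\|\boldsymbol{\hat{x}}_t-\boldsymbol{\bar{x}}_t\|_\infty$ and read off the $\boldsymbol{\hat{x}}_t^0$ bound as a one-line corollary via Eq.~(7), whereas the paper propagates both bounds jointly through the induction using the identity $\boldsymbol{\hat{x}}_{k-1}^0=(\boldsymbol{\hat{x}}_k-\sqrt{1-\alpha_k}\,\boldsymbol{\hat{\epsilon}}_k)/\sqrt{\alpha_k}$ — an equivalent reorganization.
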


\begin{proof}
We prove Theorem 1 using mathematical induction. 
\paragraph{Initial case.} For trajectories of the adversarial example $\boldsymbol{x}_{adv}$ and original image $\boldsymbol{x}_{ori}$ that start from $\boldsymbol{\Hat{\epsilon}}_{T+1} = \boldsymbol{\epsilon}_0$, $\boldsymbol{\Hat{x}}_T = \boldsymbol{\Bar{x}}_T$ and $\boldsymbol{\Hat{x}}_T^0 = \boldsymbol{\Bar{x}}_T^0$, we can unfold the formula for computing the $\boldsymbol{\Hat{x}}_{T-1}^0$ with the updated $\boldsymbol{\Hat{\epsilon}}_{T}$ as: 
\begin{alignat}{3}
&\boldsymbol{\Hat{x}}_{T-1}^{0} && = \; && \frac{\boldsymbol{\Hat{x}}_{T-1}}{\sqrt{\alpha_{T-1}}} - \frac{\sqrt{1-\alpha_{T-1}}}{\sqrt{\alpha_{T-1}}}\boldsymbol{\Hat{\epsilon}}_{T} \notag \\
& && = \; && \frac{\sqrt{\alpha_{T-1}}\left(\frac{\boldsymbol{\Hat{x}}_T-\sqrt{1-\alpha_T}\boldsymbol{\Hat{\epsilon}}_T}{\sqrt{\alpha_T}}\right) + \sqrt{1-\alpha_{T-1}}\boldsymbol{\Hat{\epsilon}}_T}{\sqrt{\alpha_{T-1}}} \notag  - \frac{\sqrt{1-\alpha_{T-1}}}{\sqrt{\alpha_{T-1}}}\boldsymbol{\Hat{\epsilon}}_{T} \notag \\
& && = \; && \frac{\boldsymbol{\Hat{x}}_T-\sqrt{1-\alpha_T}\boldsymbol{\Hat{\epsilon}}_T}{\sqrt{\alpha_T}}. \tag{20} \label{apdx_eq:6}
\end{alignat}
For $\boldsymbol{\Bar{x}}_{T-1}^0$ from the fixed trajectory where $\boldsymbol{\Bar{x}}_{t}^0$ always equals to $\boldsymbol{x}_{ori}$, we have:
\begin{equation}\tag{21}
\boldsymbol{\Bar{x}}_{T-1}^{0} = \frac{\boldsymbol{\Bar{x}}_T-\sqrt{1-\alpha_T}\boldsymbol{{\epsilon}}_0}{\sqrt{\alpha_T}} = \boldsymbol{x}_{ori}
\label{apdx_eq:7}
\end{equation}
With  $\boldsymbol{\Hat{x}}_T = \boldsymbol{\Bar{x}}_T$, Eq.~\eqref{apdx_eq:6}, Eq.~\eqref{apdx_eq:7}, and the relationship of $\left\|\boldsymbol{\Hat{\epsilon}}_{T} - \boldsymbol{\epsilon}_0\right\|_\infty \leq \frac{\sqrt{\alpha_{T}}}{\sqrt{1-\alpha_{T}}} \, \xi$, we have:
\begin{alignat}{3}
&  && \left\|\boldsymbol{\Hat{\epsilon}}_{T} - \boldsymbol{\epsilon}_0\right\|_\infty \leq \frac{\sqrt{\alpha_{T}}}{\sqrt{1-\alpha_{T}}} \, \xi & \notag \\
& \xLeftrightarrow{} && \left\|\frac{\sqrt{1-\alpha_{T}}\boldsymbol{\Hat{\epsilon}}_{T}}{\sqrt{\alpha_{T}}} - \frac{\sqrt{1-\alpha_{T}}\boldsymbol{\epsilon}_0}{\sqrt{\alpha_{T}}}\right\|_\infty \leq \xi & \notag \\
& \xLeftrightarrow{} &&\left\|\frac{\boldsymbol{\Hat{x}}_{T} - \sqrt{1-\alpha_{T}}\boldsymbol{\Hat{\epsilon}}_{T}}{\sqrt{\alpha_{T}}} - \frac{\boldsymbol{\Bar{x}}_{T} - \sqrt{1-\alpha_{T}}\boldsymbol{\epsilon}_0}{\sqrt{\alpha_{T}}}\right\|_\infty \leq \xi & \notag  \\ 
& \xLeftrightarrow{} && \left\|\boldsymbol{\Hat{x}}_{T-1}^{0} -  \boldsymbol{\Bar{x}}_{T-1}^{0}\right\|_\infty \leq \xi & \tag{22} \label{apdx_eq:8}
\end{alignat}
Meanwhile, for the relationship between $\boldsymbol{\Hat{x}}_{T-1}$ and $\boldsymbol{\Bar{x}}_{T-1}$ at the initial step, we have:
\begin{alignat}{2}
\left\|\boldsymbol{\Hat{x}}_{T-1} - \boldsymbol{\Bar{x}}_{T-1}\right\|_\infty \notag 
&= \; && \left\|\sqrt{\alpha_{T-1}}\left(\frac{\boldsymbol{\Hat{x}}_T-\sqrt{1-\alpha_T}\boldsymbol{\Hat{\epsilon}}_T}{\sqrt{\alpha_T}}\right) + \sqrt{1-\alpha_{T-1}}\boldsymbol{\Hat{\epsilon}}_T \right. \notag \\
& && \left. - \sqrt{\alpha_{T-1}}\left(\frac{\boldsymbol{\Bar{x}}_T-\sqrt{1-\alpha_T}\boldsymbol{{\epsilon}}_0}{\sqrt{\alpha_T}}\right) - \sqrt{1-\alpha_{T-1}}\boldsymbol{{\epsilon}}_0 \right\|_\infty \notag \\
&= \; && \left\| \left(\sqrt{1-\alpha_{T-1}} - \frac{\sqrt{\alpha_{T-1}}\sqrt{1-\alpha_T}}{\sqrt{\alpha_T}}\right) \left(\boldsymbol{\Hat{\epsilon}}_T - \boldsymbol{{\epsilon}}_0\right) \right\|_\infty \notag \\
&= \; && {\left|\sqrt{1-\alpha_{T-1}} - \frac{\sqrt{\alpha_{T-1}}\sqrt{1-\alpha_T}}{\sqrt{\alpha_T}}\right|}\,\left\|\boldsymbol{\Hat{\epsilon}}_T - \boldsymbol{{\epsilon}}_0\right\|_\infty \tag{23} \label{apdx_eq:9}  \\
&= \; && {\left(\frac{\sqrt{\alpha_{T-1}}\sqrt{1-\alpha_T}}{\sqrt{\alpha_T}} - \sqrt{1-\alpha_{T-1}}\right)} \,\left\|\boldsymbol{\Hat{\epsilon}}_T - \boldsymbol{{\epsilon}}_0\right\|_\infty \tag{24} \label{apdx_eq:10} \\
&\leq \; && \left(\sqrt{\alpha_{T-1}} - \sqrt{1-\alpha_{T-1}}\frac{\sqrt{\alpha_{T}}}{\sqrt{1-\alpha_{T}}}\right)\xi, \tag{25} \label{apdx_eq:11}
\end{alignat}
where the transition from Eq.~\eqref{apdx_eq:9} to Eq.~\eqref{apdx_eq:10} is obtained with the real constant value of $\alpha_t$. At this point, it can be seen that Theorem 1 holds in the initial case.

\paragraph{Inductive step.} Assuming the theorem holds for some arbitrary step $k$, where $T \geq k > 1$, we have:
\begin{equation}\tag{26}
\|\boldsymbol{\Hat{\epsilon}}_{k+1} - \boldsymbol{\epsilon}_{0}\|_{\infty} \leq \frac{\sqrt{\alpha_T}}{\sqrt{1-\alpha_T}}\xi,
\label{apdx_eq:12}
\end{equation}
\begin{equation}\tag{27}
\|\boldsymbol{\Hat{x}}_{k}^{0} - \boldsymbol{x}_{ori}\|_{\infty} = \|\boldsymbol{\Hat{x}}_{k}^{0} - \boldsymbol{\Bar{x}}_{k}^{0}\|_{\infty} \leq\xi,
\label{apdx_eq:13}
\end{equation}
and
\begin{equation}\tag{28}
\|\boldsymbol{\Hat{x}}_{k} - \boldsymbol{\Bar{x}}_{k}\|_{\infty} \leq (\sqrt{\alpha_{k}} - \sqrt{1 - \alpha_{k}} \frac{\sqrt{\alpha_T}}{\sqrt{1-\alpha_T}})\xi.
\label{apdx_eq:14}
\end{equation}
Based on the inductive hypothesis, we next show the validity of the theorem at step $k-1$. Similar to Eq.\eqref{apdx_eq:6}, we unfold the calculation of $\boldsymbol{\Hat{x}}_{k-1}^{0}$ with $\boldsymbol{\Hat{x}}_{k}$ and $\boldsymbol{\Hat{\epsilon}}_{k}$ as:
\begin{alignat}{3}
& \boldsymbol{\Hat{x}}_{k-1}^{0} &&= \; && \frac{\boldsymbol{\Hat{x}}_{k-1}}{\sqrt{\alpha_{k-1}}} - \frac{\sqrt{1-\alpha_{k-1}}}{\sqrt{\alpha_{k-1}}}\boldsymbol{\Hat{\epsilon}}_{k} \notag \\
& &&= \; && \frac{\sqrt{\alpha_{k-1}}\left(\frac{\boldsymbol{\Hat{x}}_{k}-\sqrt{1-\alpha_{k}}\boldsymbol{\Hat{\epsilon}}_{k}}{\sqrt{\alpha_{k}}}\right) + \sqrt{1-\alpha_{k-1}}\boldsymbol{\Hat{\epsilon}}_{k}}{\sqrt{\alpha_{k-1}}} \notag  - \frac{\sqrt{1-\alpha_{k-1}}}{\sqrt{\alpha_{k-1}}}\boldsymbol{\Hat{\epsilon}}_{k} \notag \\
& &&= \; && \frac{\boldsymbol{\Hat{x}}_{k}-\sqrt{1-\alpha_{k}}\boldsymbol{\Hat{\epsilon}}_{k}}{\sqrt{\alpha_{k}}} \tag{29} \label{apdx_eq:15}.
\end{alignat}
And the $\boldsymbol{\Bar{x}}_{k-1}^0$ can be denoted as:
\begin{equation}\tag{30}
\boldsymbol{\Bar{x}}_{k-1}^{0} = \frac{\boldsymbol{\Bar{x}}_k-\sqrt{1-\alpha_k}\boldsymbol{{\epsilon}}_0}{\sqrt{\alpha_k}} = \boldsymbol{x}_{ori}
\label{apdx_eq:16}
\end{equation}
Consequently, by substituting Eq.~\eqref{apdx_eq:15} and Eq.~\eqref{apdx_eq:16} into $\left\|\boldsymbol{\Hat{x}}_{k-1}^{0} -  \boldsymbol{\Bar{x}}_{k-1}^{0}\right\|_\infty$, we have:
\begin{alignat}{2}
\left\|\boldsymbol{\Hat{x}}_{k-1}^{0} -  \boldsymbol{\Bar{x}}_{k-1}^{0}\right\|_\infty  \notag &  = && \left\|\frac{\boldsymbol{\Hat{x}}_{k} - \sqrt{1-\alpha_{k}}\boldsymbol{\Hat{\epsilon}}_{k}}{\sqrt{\alpha_{k}}} - \frac{\boldsymbol{\Bar{x}}_{k} - \sqrt{1-\alpha_{k}}\boldsymbol{\epsilon}_0}{\sqrt{\alpha_{k}}}\right\|_\infty \notag  \\
&= \;  && \left\| \frac{1}{\sqrt{\alpha_{k}}} \left(\boldsymbol{\Hat{x}}_{k} - \boldsymbol{\Bar{x}}_{k}\right) + \frac{\sqrt{1-\alpha_{k}}}{\sqrt{\alpha_{k}}} \left(\boldsymbol{{\epsilon}}_{0} - \boldsymbol{\Hat{\epsilon}}_{k}\right) \right\|_\infty \tag{31} \label{apdx_eq:17} \\ 
&\leq \;  && \frac{1}{\sqrt{\alpha_{k}}} \left \| \boldsymbol{\Hat{x}}_{k} - \boldsymbol{\Bar{x}}_{k} \right\|_\infty \notag  + \frac{\sqrt{1-\alpha_{k}}}{\sqrt{\alpha_{k}}} \left\| \boldsymbol{\Hat{\epsilon}}_{k} - \boldsymbol{{\epsilon}}_{0} \right\|_\infty \tag{32} \label{apdx_eq:18} \\
&\leq \;  && \left(1 - \frac{\sqrt{1-\alpha_{k}}}{\sqrt{\alpha_{k}}}\frac{\sqrt{\alpha_{T}}}{\sqrt{1-\alpha_{T}}}\right)\xi \notag  + \frac{\sqrt{1-\alpha_{k}}}{\sqrt{\alpha_{k}}} \left\| \boldsymbol{\Hat{\epsilon}}_{k} - \boldsymbol{{\epsilon}}_{0} \right\|_\infty, \tag{33} \label{apdx_eq:19}
\end{alignat}
where Eq.~\eqref{apdx_eq:17} to Eq.~\eqref{apdx_eq:18} utilizes the triangle inequality property of $l_p$-norm, and Eq.~\eqref{apdx_eq:19} is obtained with Eq.~\eqref{apdx_eq:14}. Then, given the imposed condition of Eq.~\eqref{apdx_eq:5}, we can get:
\begin{alignat}{2}
& && \left\| \boldsymbol{\Hat{\epsilon}}_{k} - \boldsymbol{{\epsilon}}_{0} \right\|_\infty \leq \frac{\sqrt{\alpha_{T}}}{\sqrt{1-\alpha_{T}}} \, \xi \notag \\
&\xLeftrightarrow{} && \left(1 - \frac{\sqrt{1-\alpha_{k}}}{\sqrt{\alpha_{k}}}\frac{\sqrt{\alpha_{T}}}{\sqrt{1-\alpha_{T}}}\right)\xi \notag  + \frac{\sqrt{1-\alpha_{k}}}{\sqrt{\alpha_{k}}} \left\| \boldsymbol{\Hat{\epsilon}}_{k} - \boldsymbol{{\epsilon}}_{0} \right\|_\infty \leq \;  \xi \notag \\
&\xLeftrightarrow{} && \left\|\boldsymbol{\Hat{x}}_{k-1}^{0} -  \boldsymbol{\Bar{x}}_{k-1}^{0}\right\|_\infty \leq \xi \notag \\
&\xLeftrightarrow{} && \left\|\boldsymbol{\Hat{x}}_{k-1}^{0} -  \boldsymbol{{x}}_{ori}\right\|_\infty \leq \xi \notag, \tag{34} \label{apdx_eq:20}
\end{alignat}
And the relationship between $\boldsymbol{\Hat{x}}_{k-1}$ and $\boldsymbol{\Bar{x}}_{k-1}$ at step $k-1$ can be expressed as:
\begin{alignat}{2}
& \; && \left\|\boldsymbol{\Hat{x}}_{k-1} - \boldsymbol{\Bar{x}}_{k-1}\right\|_\infty \notag \\ & = \; && {\left\|\sqrt{\alpha_{k-1}}\left(\frac{\boldsymbol{\Hat{x}}_{k}-\sqrt{1-\alpha_{k}}\boldsymbol{\Hat{\epsilon}}_{k}}{\sqrt{\alpha_{k}}}\right)  + \sqrt{1-\alpha_{k-1}}\boldsymbol{\Hat{\epsilon}}_{k} \right.}  \notag \\ & \; &&  \quad \quad \quad \quad  \quad \quad \quad  \quad \quad \quad  \quad \quad \quad  \quad{\left. - \sqrt{\alpha_{k-1}}\left(\frac{\boldsymbol{\Bar{x}}_{k}-\sqrt{1-\alpha_{k}}\boldsymbol{{\epsilon}}_{0}}{\sqrt{\alpha_{k}}}\right) - \sqrt{1-\alpha_{k-1}}\boldsymbol{{\epsilon}}_0  \right\|_\infty} \notag \\
& = \; &&\left\| \frac{\sqrt{\alpha_{k-1}}}{\sqrt{\alpha_{k}}} \left( \boldsymbol{\Hat{x}}_{k} - \boldsymbol{\Bar{x}}_{k} \right) \right. \notag  + {\left. \left(\sqrt{1-\alpha_{k-1}} - \frac{\sqrt{\alpha_{k-1}}\sqrt{1-\alpha_{k}}}{\sqrt{\alpha_{k}}}\right) \left(\boldsymbol{\Hat{\epsilon}}_{k} - \boldsymbol{{\epsilon}}_0\right) \right\|_\infty} \tag{35} \label{apdx_eq:21} \\
& \leq \; && \frac{\sqrt{\alpha_{k-1}}}{\sqrt{\alpha_{k}}} \left\| \boldsymbol{\Hat{x}}_{k} - \boldsymbol{\Bar{x}}_{k} \right\|_\infty \notag  + {\left|\sqrt{1-\alpha_{k-1}} - \frac{\sqrt{\alpha_{k-1}}\sqrt{1-\alpha_{k}}}{\sqrt{\alpha_{k}}}\right| \left\| \boldsymbol{\Hat{\epsilon}}_{k} - \boldsymbol{{\epsilon}}_0 \right\|_\infty} \tag{36} \label{apdx_eq:22} \\
& \leq \; && \frac{\sqrt{\alpha_{k-1}}}{\sqrt{\alpha_{k}}} \left(\sqrt{\alpha_{k}} - \sqrt{1-\alpha_{k}}\frac{\sqrt{\alpha_{T}}}{\sqrt{1-\alpha_{T}}}\right)\xi \notag  +  {\left(\frac{\sqrt{\alpha_{k-1}}\sqrt{1-\alpha_{k}}}{\sqrt{\alpha_{k}}} - \sqrt{1-\alpha_{k-1}}\right)\frac{\sqrt{\alpha_{T}}}{\sqrt{1-\alpha_{T}}} \, \xi } \tag{37} \label{apdx_eq:23} \\
& = \; && \left(\sqrt{\alpha_{k-1}} - \sqrt{1-\alpha_{k-1}}\frac{\sqrt{\alpha_{T}}}{\sqrt{1-\alpha_{T}}} \right) \xi \tag{38} \label{apdx_eq:24}
\end{alignat}
where the triangle inequality property is utilized again to obtain Eq.~\eqref{apdx_eq:22} from Eq.~\eqref{apdx_eq:21}, then Eq.~\eqref{apdx_eq:14} and Eq.~\eqref{apdx_eq:20} is substituted to obtain Eq.~\eqref{apdx_eq:23}. Obviously, for the case of step $k-1$, it is also consistent with the theorem.

\paragraph{Conclusion.} Therefore, by extending the truth of the theorem from arbitrary step $k$ to $k-1$, and given its established validity at the initial case, the principle of mathematical induction allows us to conclude that $\|\boldsymbol{\Hat{x}}_{t} - \boldsymbol{\Bar{x}}_{t}\|_{\infty} \leq (\sqrt{\alpha_{t}} - \sqrt{1 - \alpha_{t}} \frac{\sqrt{\alpha_T}}{\sqrt{1-\alpha_T}})\xi, \ \|\boldsymbol{\Hat{x}}_{t}^{0} - \boldsymbol{x}_{ori}\|_{\infty}\leq\xi$ hold true for every step $t\in[1:T]$. For $t=0$ and $\alpha_0=1$, we have $\boldsymbol{\Hat{x}}_{0}^{0} = \boldsymbol{\Hat{x}}_{0}$ and $\boldsymbol{\Bar{x}}_{0} = \boldsymbol{{x}}_{ori}$, thus it is obvious that $\|\boldsymbol{\Hat{x}}_{0} - \boldsymbol{x}_{ori}\|_{\infty}\leq\xi$. This concludes the proof of the whole Theorem 1.
\end{proof}

\subsection{Proof of Proposition 1}  \label{app:B3}
Next, we prove Proposition 1 about $\lambda_t$ and $\boldsymbol{\delta}_t$ by expanding and rearranging the recursive formulas in our diffusion process for attacks.

\setcounter{proposition}{0}
\begin{proposition}
\label{prop1}
Under the conditions of Theorem 1, by denoting constrained $\boldsymbol{\hat{\epsilon}}_t = \boldsymbol{\epsilon}_0 - \boldsymbol{\delta}_t$, we have
    \begin{equation}\tag{39}
    \boldsymbol{x}_{adv}=\boldsymbol{x}_{ori}+\sum_{t=1}^{T}\lambda_{t}\boldsymbol{\delta}_t,
    \label{apdx_eq:25}
    \end{equation}
    where $\lambda_{t} = \frac{\sqrt{1-\alpha_t}}{\sqrt{\alpha_t}} - \frac{\sqrt{1-{\alpha_{t-1}}}}{\sqrt{\alpha_{t-1}}}$, and $\|\boldsymbol{\delta}_t\|_{\infty} \leq \frac{\sqrt{\alpha_T}}{\sqrt{1-\alpha_T}}\xi$.
\end{proposition}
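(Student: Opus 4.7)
The plan is to prove the proposition by deriving a one-step recurrence for $\boldsymbol{\hat{x}}_t/\sqrt{\alpha_t}$, then telescoping and canceling the $\boldsymbol{\epsilon}_0$ contribution. Since $\boldsymbol{x}_{adv}=\boldsymbol{\hat{x}}_0$ and $\alpha_0=1$, the statement is really a claim about summing the incremental noise perturbations $\boldsymbol{\delta}_t=\boldsymbol{\epsilon}_0-\boldsymbol{\hat{\epsilon}}_t$ across the trajectory with the right coefficients.

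First, I would start from the backward update Eq.~\eqref{maineq:11} and rewrite it by dividing through by $\sqrt{\alpha_{t-1}}$. A short algebraic regrouping gives the clean recurrence
\begin{equation*}
\frac{\boldsymbol{\hat{x}}_{t-1}}{\sqrt{\alpha_{t-1}}} \;=\; \frac{\boldsymbol{\hat{x}}_t}{\sqrt{\alpha_t}} \;-\; \left(\frac{\sqrt{1-\alpha_t}}{\sqrt{\alpha_t}} - \frac{\sqrt{1-\alpha_{t-1}}}{\sqrt{\alpha_{t-1}}}\right)\boldsymbol{\hat{\epsilon}}_t \;=\; \frac{\boldsymbol{\hat{x}}_t}{\sqrt{\alpha_t}} - \lambda_t\,\boldsymbol{\hat{\epsilon}}_t,
\end{equation*}
which isolates the coefficient $\lambda_t$ that appears in the proposition. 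Substituting $\boldsymbol{\hat{\epsilon}}_t = \boldsymbol{\epsilon}_0 - \boldsymbol{\delta}_t$ splits the recurrence into an $\boldsymbol{\epsilon}_0$-part and a $\boldsymbol{\delta}_t$-part.

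Next I would telescope the recurrence from $t=T$ down to $t=1$, using $\alpha_0=1$ so that $\boldsymbol{\hat{x}}_0/\sqrt{\alpha_0}=\boldsymbol{\hat{x}}_0=\boldsymbol{x}_{adv}$, together with the initial condition $\boldsymbol{\hat{x}}_T=\sqrt{\alpha_T}\boldsymbol{x}_{ori}+\sqrt{1-\alpha_T}\boldsymbol{\epsilon}_0$ from Eq.~\eqref{maineq:4}. The key observation is that $\sum_{t=1}^{T}\lambda_t$ itself telescopes to $\frac{\sqrt{1-\alpha_T}}{\sqrt{\alpha_T}} - \frac{\sqrt{1-\alpha_0}}{\sqrt{\alpha_0}} = \frac{\sqrt{1-\alpha_T}}{\sqrt{\alpha_T}}$, and this cancels exactly against the $\frac{\sqrt{1-\alpha_T}}{\sqrt{\alpha_T}}\boldsymbol{\epsilon}_0$ term that falls out of $\boldsymbol{\hat{x}}_T/\sqrt{\alpha_T}$. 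What remains is precisely $\boldsymbol{x}_{ori}+\sum_{t=1}^{T}\lambda_t\boldsymbol{\delta}_t$, giving the identity in Eq.~\eqref{maineq:12}.

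The bound $\|\boldsymbol{\delta}_t\|_\infty \leq \tfrac{\sqrt{\alpha_T}}{\sqrt{1-\alpha_T}}\xi$ is then immediate: by the definition $\boldsymbol{\delta}_t = \boldsymbol{\epsilon}_0 - \boldsymbol{\hat{\epsilon}}_t$ and the PC constraint in Eq.~\eqref{maineq:9} (which is exactly the hypothesis of Theorem~\ref{mainthm1} used here), $\|\boldsymbol{\delta}_t\|_\infty = \|\boldsymbol{\hat{\epsilon}}_t - \boldsymbol{\epsilon}_0\|_\infty$ is controlled as claimed. There is no real obstacle in this proof; the only place requiring care is the bookkeeping when splitting $\boldsymbol{\hat{\epsilon}}_t$ into $\boldsymbol{\epsilon}_0$ and $\boldsymbol{\delta}_t$ and checking that the telescoped $\boldsymbol{\epsilon}_0$ contribution cancels the noise term coming from $\boldsymbol{\hat{x}}_T$. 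Once that cancellation is verified, the proposition follows in a couple of lines.
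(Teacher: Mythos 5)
Your proof is correct and follows essentially the same route as the paper's: both unroll the backward recursion Eq.~\eqref{maineq:11}, substitute $\boldsymbol{\hat{\epsilon}}_t=\boldsymbol{\epsilon}_0-\boldsymbol{\delta}_t$, and use $\alpha_0=1$ together with the forward initialization of $\boldsymbol{\hat{x}}_T$ to cancel the accumulated $\boldsymbol{\epsilon}_0$ terms, with the bound on $\|\boldsymbol{\delta}_t\|_\infty$ read off directly from the PC constraint. Your normalization $\boldsymbol{\hat{x}}_t/\sqrt{\alpha_t}$ turns the argument into a clean telescoping sum, which is tidier than the paper's explicit three-step unrolling followed by an informal ``observe the pattern'' generalization, but the underlying computation is identical.
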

\begin{proof}
 With Eq.~\eqref{apdx_eq:5} in Theorem 1, by denoting $\boldsymbol{\hat{\epsilon}}_t = \boldsymbol{\epsilon}_0 - \boldsymbol{\delta}_t$, we have: $\|\boldsymbol{\delta}_t\|_{\infty} = \|\boldsymbol{\Hat{\epsilon}}_{t} - \boldsymbol{\epsilon}_{0}\|_{\infty} \leq \frac{\sqrt{\alpha_T}}{\sqrt{1-\alpha_T}}\xi$, and the $\boldsymbol{\Hat{x}}_{T-1}$ in the initial step can be written as:
\begin{alignat}{2}
\boldsymbol{\Hat{x}}_{T-1} \notag & = \; && \frac{\sqrt{\alpha_{T-1}}}{\sqrt{\alpha_{T}}} \boldsymbol{\Hat{x}}_{T}\notag  - {\left(\frac{\sqrt{\alpha_{T-1}}\sqrt{1-\alpha_{T}}}{\sqrt{\alpha_{T}}} - \sqrt{1-\alpha_{T-1}}\right) \left( \boldsymbol{\Hat{\epsilon}}_T \right)} \notag \\
& = \; && \frac{\sqrt{\alpha_{T-1}}}{\sqrt{\alpha_{T}}} \boldsymbol{\Hat{x}}_{T} \notag  - {\sqrt{\alpha_{T-1}} \left(\frac{\sqrt{1-\alpha_{T}}}{\sqrt{\alpha_{T}}} - \frac{\sqrt{1-\alpha_{T-1}}}{\sqrt{\alpha_{T-1}}}\right) \left( \boldsymbol{\epsilon}_0 - \boldsymbol{\delta}_T \right)} \tag{40} \label{apdx_eq:26}
\end{alignat}
Applying the recursion formula twice, we have:
\begin{alignat}{2}
\boldsymbol{\Hat{x}}_{T-2} \notag  & = \; && \frac{\sqrt{\alpha_{T-2}}}{\sqrt{\alpha_{T-1}}} \left(\frac{\sqrt{\alpha_{T-1}}}{\sqrt{\alpha_{T}}} \boldsymbol{\Hat{x}}_{T} \right. \notag - {\left. \left(\frac{\sqrt{\alpha_{T-1}}\sqrt{1-\alpha_{T}}}{\sqrt{\alpha_{T}}} - \sqrt{1-\alpha_{T-1}}\right) \left( \boldsymbol{\Hat{\epsilon}}_T \right)\right)} \notag \\
& && \quad \quad \quad \quad \quad \quad  \quad \quad- {\left(\frac{\sqrt{\alpha_{T-2}}\sqrt{1-\alpha_{T-1}}}{\sqrt{\alpha_{T-1}}} - \sqrt{1-\alpha_{T-2}}\right) \left( \boldsymbol{\Hat{\epsilon}}_{T-1} \right)} \notag \\ \notag \\
& = \; &&\frac{\sqrt{\alpha_{T-2}}}{\sqrt{\alpha_{T}}} \boldsymbol{\Hat{x}}_{T} \notag  - { \sqrt{\alpha_{T-2}} \left(\frac{\sqrt{1-\alpha_{T}}}{\sqrt{\alpha_{T}}} - \frac{\sqrt{1-\alpha_{T-1}}}{\sqrt{\alpha_{T-1}}}\right) \left( \boldsymbol{\epsilon}_0 - \boldsymbol{\delta}_T \right)} \notag \\
& && \quad \quad \quad \quad - { \sqrt{\alpha_{T-2}} \left(\frac{\sqrt{1-\alpha_{T-1}}}{\sqrt{\alpha_{T-1}}} - \frac{\sqrt{1-\alpha_{T-2}}}{\sqrt{\alpha_{T-2}}}\right) \left( \boldsymbol{\epsilon}_0 - \boldsymbol{\delta}_{T-1} \right)} \tag{41} \label{apdx_eq:27}
\end{alignat}

Similarly, for $\boldsymbol{\Hat{x}}_{T-3}$, we have:
\begin{alignat}{2}
\boldsymbol{\Hat{x}}_{T-3}  \notag &  = \; && \frac{\sqrt{\alpha_{T-3}}}{\sqrt{\alpha_{T-2}}} \left( \frac{\sqrt{\alpha_{T-2}}}{\sqrt{\alpha_{T-1}}} \left(\frac{\sqrt{\alpha_{T-1}}}{\sqrt{\alpha_{T}}} \boldsymbol{\Hat{x}}_{T} \right. \right. \notag  -  {\left. \left(\frac{\sqrt{\alpha_{T-1}}\sqrt{1-\alpha_{T}}}{\sqrt{\alpha_{T}}} - \sqrt{1-\alpha_{T-1}}\right) \left( \boldsymbol{\Hat{\epsilon}}_T \right)\right)} \notag  \\
& && \quad \quad \quad \quad \quad \quad \quad \quad \quad \quad \quad \quad \quad  - {\left. \left(\frac{\sqrt{\alpha_{T-2}}\sqrt{1-\alpha_{T-1}}}{\sqrt{\alpha_{T-1}}} - \sqrt{1-\alpha_{T-2}}\right) \left( \boldsymbol{\Hat{\epsilon}}_{T-1} \right) \right)} \notag  \\ & && \quad \quad \quad \quad \quad \quad \quad \quad \quad \quad \quad \quad \quad - {\left(\frac{\sqrt{\alpha_{T-3}}\sqrt{1-\alpha_{T-2}}}{\sqrt{\alpha_{T-2}}} - \sqrt{1-\alpha_{T-3}}\right) \left( \boldsymbol{\Hat{\epsilon}}_{T-2} \right)} \notag  \\ \notag \\
& = \; && \frac{\sqrt{\alpha_{T-3}}}{\sqrt{\alpha_{T}}} \boldsymbol{\Hat{x}}_{T}   - {\sqrt{\alpha_{T-3}} \left(\frac{\sqrt{1-\alpha_{T}}}{\sqrt{\alpha_{T}}} - \frac{\sqrt{1-\alpha_{T-1}}}{\sqrt{\alpha_{T-1}}} \right) \left( \boldsymbol{\epsilon}_0 - \boldsymbol{\delta}_T \right)} \notag  \\
& && \quad \quad \quad \quad - {\sqrt{\alpha_{T-3}} \left(\frac{\sqrt{1-\alpha_{T-1}}}{\sqrt{\alpha_{T-1}}} - \frac{\sqrt{1-\alpha_{T-2}}}{\sqrt{\alpha_{T-2}}}\right) \left( \boldsymbol{\epsilon}_0 - \boldsymbol{\delta}_{T-1} \right) } \notag  \\
& && \quad \quad \quad \quad - {\sqrt{\alpha_{T-3}} \left(\frac{\sqrt{1-\alpha_{T-2}}}{\sqrt{\alpha_{T-2}}} - \frac{\sqrt{1-\alpha_{T-3}}}{\sqrt{\alpha_{T-3}}}\right) \left( \boldsymbol{\epsilon}_0 - \boldsymbol{\delta}_{T-2} \right)}  \tag{42} \label{apdx_eq:28}
\end{alignat}
It is obvious that the coefficients of each term exhibit a clear regular pattern related to the step $t$. Following this pattern, we can accordingly get the expression of $\boldsymbol{\Hat{x}}_{t}$ as:
\begin{alignat}{3}
&\boldsymbol{\Hat{x}}_{t} &&= \; && \frac{\sqrt{\alpha_{t}}}{\sqrt{\alpha_{T}}} \boldsymbol{\Hat{x}}_{T} \notag - {\sqrt{\alpha_{t}} \left(\frac{\sqrt{1-\alpha_{T}}}{\sqrt{\alpha_{T}}} - \frac{\sqrt{1-\alpha_{T-1}}}{\sqrt{\alpha_{T-1}}} \right) \left( \boldsymbol{\epsilon}_0 - \boldsymbol{\delta}_T \right)} \notag \\
& && && - {\sqrt{\alpha_{t}} \left(\frac{\sqrt{1-\alpha_{T-1}}}{\sqrt{\alpha_{T-1}}} - \frac{\sqrt{1-\alpha_{T-2}}}{\sqrt{\alpha_{T-2}}}\right) \left( \boldsymbol{\epsilon}_0 - \boldsymbol{\delta}_{T-1} \right) } \notag \\
& && && \vdots \notag \\
& && && - {\sqrt{\alpha_{t}} \left(\frac{\sqrt{1-\alpha_{t+2}}}{\sqrt{\alpha_{t+2}}} - \frac{\sqrt{1-\alpha_{t+1}}}{\sqrt{\alpha_{t+1}}}\right) \left( \boldsymbol{\epsilon}_0 - \boldsymbol{\delta}_{t+2} \right) } \notag \\
& && && -{\sqrt{\alpha_{t}} \left(\frac{\sqrt{1-\alpha_{t+1}}}{\sqrt{\alpha_{t+1}}} - \frac{\sqrt{1-\alpha_{t}}}{\sqrt{\alpha_{t}}}\right) \left( \boldsymbol{\epsilon}_0 - \boldsymbol{\delta}_{t+1} \right)} \tag{43} \label{apdx_eq:29}
\end{alignat}
And the final $\boldsymbol{\Hat{x}}_{0}$ can be expressed as: 
\begin{alignat}{3}
&\boldsymbol{\Hat{x}}_{0} &&= \; && \frac{\sqrt{\alpha_{0}}}{\sqrt{\alpha_{T}}} \boldsymbol{\Hat{x}}_{T} \notag - {\sqrt{\alpha_{0}} \left(\frac{\sqrt{1-\alpha_{T}}}{\sqrt{\alpha_{T}}} - \frac{\sqrt{1-\alpha_{T-1}}}{\sqrt{\alpha_{T-1}}} \right) \left( \boldsymbol{\epsilon}_0 - \boldsymbol{\delta}_T \right)} \notag \\
& && && - {\sqrt{\alpha_{0}} \left(\frac{\sqrt{1-\alpha_{T-1}}}{\sqrt{\alpha_{T-1}}} - \frac{\sqrt{1-\alpha_{T-2}}}{\sqrt{\alpha_{T-2}}}\right) \left( \boldsymbol{\epsilon}_0 - \boldsymbol{\delta}_{T-1} \right) } \notag \\
& && && \vdots \notag \\
& && && - {\sqrt{\alpha_{0}} \left(\frac{\sqrt{1-\alpha_{2}}}{\sqrt{\alpha_{2}}} - \frac{\sqrt{1-\alpha_{1}}}{\sqrt{\alpha_{1}}}\right) \left( \boldsymbol{\epsilon}_0 - \boldsymbol{\delta}_{2} \right) } \notag \\
& && && -{\sqrt{\alpha_{0}} \left(\frac{\sqrt{1-\alpha_{1}}}{\sqrt{\alpha_{1}}} - \frac{\sqrt{1-\alpha_{0}}}{\sqrt{\alpha_{0}}}\right) \left( \boldsymbol{\epsilon}_0 - \boldsymbol{\delta}_{1} \right)} \tag{44} \label{apdx_eq:30}
\end{alignat}
Note that in $\alpha_0=1$ Eq.~\eqref{apdx_eq:30}, and the coefficients of $\boldsymbol{\epsilon}_0$ can be mostly eliminated. Thus, by defining $\lambda_t$ as:
\begin{equation}\tag{45}
    \lambda_{t} = \frac{\sqrt{1-\alpha_{t}}}{\sqrt{\alpha_{t}}} - \frac{\sqrt{1-{\alpha_{t-1}}}}{\sqrt{\alpha_{t-1}}},
\label{apdx_eq:31}
\end{equation}
we can rearrange Eq.~\eqref{apdx_eq:30} into:
\begin{alignat}{3} \notag
&\boldsymbol{\Hat{x}}_{0}\; & = \; & \frac{\boldsymbol{\Hat{x}}_{T} - \sqrt{1-\alpha_{T}} \boldsymbol{\epsilon}_0}{\sqrt{\alpha_{T}}} + \sum_{t=1}^{T}\lambda_{t}\boldsymbol{\delta}_t \\
& & = \; & \frac{\boldsymbol{\bar{x}}_{T} - \sqrt{1-\alpha_{T}} \boldsymbol{\epsilon}_0}{\sqrt{\alpha_{T}}} + \sum_{t=1}^{T}\lambda_{t}\boldsymbol{\delta}_t  \notag \\
& & = \; & \boldsymbol{{x}}_{ori} + \sum_{t=1}^{T}\lambda_{t}\boldsymbol{\delta}_t  \tag{46}
\label{apdx_eq:32}
\end{alignat}
where $\boldsymbol{\Hat{x}}_{0}$ is the final output of the diffusing process and $\boldsymbol{{x}}_{adv} = \boldsymbol{\Hat{x}}_{0}$. This concludes the proof of Proposition 1.
\end{proof}

\subsection{Proof of Proposition 2}  \label{app:B4}
Finally, we prove Proposition 2 about the validity and convergence of the approximation $\boldsymbol{x}_{adv} \approx \boldsymbol{\Hat{x}}_t^0$.
\begin{proposition}
\label{prop2}
Under the conditions of Theorem 1 and Proposition 1, the upper bound on the error of the approximation in Eq.~\eqref{apdx_eq:8} can be expressed as
    \begin{equation}\tag{47}
        \left\|\boldsymbol{x}_{adv} - \boldsymbol{\hat{x}}_{t}^{0}\right\|_\infty \leq \;2 \cdot \frac{\sqrt{1 - \alpha_t}}{\sqrt{\alpha_t}} \cdot \frac{\sqrt{\alpha_T}}{\sqrt{1 - \alpha_T}}.
    \label{apdx_eq:33}
    \end{equation}
\end{proposition}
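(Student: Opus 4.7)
The plan is to leverage the closed-form expansions already developed in the proof of Proposition~\ref{prop1}. Specifically, Proposition~\ref{prop1} gives $\boldsymbol{x}_{adv} = \boldsymbol{x}_{ori} + \sum_{s=1}^{T}\lambda_{s}\boldsymbol{\delta}_{s}$ with each $\|\boldsymbol{\delta}_{s}\|_{\infty}\le\frac{\sqrt{\alpha_T}}{\sqrt{1-\alpha_T}}\xi$, so the strategy is to obtain an analogous telescoped expression for $\boldsymbol{\hat{x}}_{t}^{0}$, subtract the two, and bound the resulting $\boldsymbol{\delta}$-sum by the triangle inequality.

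First I would reuse Eq.~\eqref{apdx_eq:29}, the closed form for $\boldsymbol{\hat{x}}_t$ derived in the proof of Proposition~\ref{prop1}. Substituting $\boldsymbol{\hat{x}}_T = \bar{\boldsymbol{x}}_T = \sqrt{\alpha_T}\boldsymbol{x}_{ori}+\sqrt{1-\alpha_T}\boldsymbol{\epsilon}_0$ and using the telescoping identity $\sum_{s=t+1}^{T}\lambda_{s} = \frac{\sqrt{1-\alpha_T}}{\sqrt{\alpha_T}} - \frac{\sqrt{1-\alpha_t}}{\sqrt{\alpha_t}}$, the coefficients of $\boldsymbol{\epsilon}_0$ collapse and yield
\begin{equation*}
\boldsymbol{\hat{x}}_{t} \;=\; \sqrt{\alpha_t}\,\boldsymbol{x}_{ori}+\sqrt{1-\alpha_t}\,\boldsymbol{\epsilon}_0 + \sqrt{\alpha_t}\sum_{s=t+1}^{T}\lambda_{s}\boldsymbol{\delta}_{s}.
\end{equation*}
Plugging this into the definition $\boldsymbol{\hat{x}}_{t}^{0}=(\boldsymbol{\hat{x}}_t-\sqrt{1-\alpha_t}\,\boldsymbol{\hat{\epsilon}}_{t+1})/\sqrt{\alpha_t}$ and using $\boldsymbol{\hat{\epsilon}}_{t+1}=\boldsymbol{\epsilon}_0-\boldsymbol{\delta}_{t+1}$ gives the desired analogue
\begin{equation*}
\boldsymbol{\hat{x}}_{t}^{0} \;=\; \boldsymbol{x}_{ori} + \frac{\sqrt{1-\alpha_t}}{\sqrt{\alpha_t}}\,\boldsymbol{\delta}_{t+1} + \sum_{s=t+1}^{T}\lambda_{s}\boldsymbol{\delta}_{s}.
\end{equation*}

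Subtracting from the Proposition~\ref{prop1} formula for $\boldsymbol{x}_{adv}$ cancels the tail $\sum_{s=t+1}^{T}\lambda_s\boldsymbol{\delta}_s$ and leaves
\begin{equation*}
\boldsymbol{x}_{adv}-\boldsymbol{\hat{x}}_{t}^{0} \;=\; \sum_{s=1}^{t}\lambda_{s}\boldsymbol{\delta}_{s} \;-\; \frac{\sqrt{1-\alpha_t}}{\sqrt{\alpha_t}}\,\boldsymbol{\delta}_{t+1}.
\end{equation*}
Taking $\ell_\infty$ norm, applying the triangle inequality, and invoking $\sum_{s=1}^{t}\lambda_s = \frac{\sqrt{1-\alpha_t}}{\sqrt{\alpha_t}}$ (from $\alpha_0=1$) together with the uniform bound $\|\boldsymbol{\delta}_s\|_\infty\le\frac{\sqrt{\alpha_T}}{\sqrt{1-\alpha_T}}\xi$ immediately yields the stated bound with the factor of $2$ arising from summing the two contributions.

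The only delicate points are index bookkeeping (the range of the $\sum_{s=t+1}^{T}$ in the expansion of $\boldsymbol{\hat{x}}_t$ and the use of $\boldsymbol{\hat{\epsilon}}_{t+1}$ rather than $\boldsymbol{\hat{\epsilon}}_{t}$ inside $\boldsymbol{\hat{x}}_t^0$) and confirming the telescoping cancellations, so the main obstacle is algebraic rather than conceptual: the two key identities to verify carefully are $\sum_{s=t+1}^{T}\lambda_s = \frac{\sqrt{1-\alpha_T}}{\sqrt{\alpha_T}} - \frac{\sqrt{1-\alpha_t}}{\sqrt{\alpha_t}}$ and $\sum_{s=1}^{t}\lambda_s = \frac{\sqrt{1-\alpha_t}}{\sqrt{\alpha_t}}$. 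Once these are in place, the convergence statement ($2\xi\to 0$ as $t\to 0$) follows from $\alpha_t\to 1$.
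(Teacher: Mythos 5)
Your proposal is correct and follows essentially the same route as the paper's own proof: expand $\boldsymbol{\hat{x}}_t$ and hence $\boldsymbol{\hat{x}}_t^0$ via the closed form from Proposition~1, subtract from $\boldsymbol{x}_{adv}=\boldsymbol{x}_{ori}+\sum_{s=1}^{T}\lambda_s\boldsymbol{\delta}_s$ to get $\sum_{s=1}^{t}\lambda_s\boldsymbol{\delta}_s-\frac{\sqrt{1-\alpha_t}}{\sqrt{\alpha_t}}\boldsymbol{\delta}_{t+1}$, and bound by the triangle inequality plus the telescoping sum $\sum_{s=1}^{t}\lambda_s=\frac{\sqrt{1-\alpha_t}}{\sqrt{\alpha_t}}$. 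The only cosmetic difference is that you substitute $\boldsymbol{\hat{x}}_T=\sqrt{\alpha_T}\boldsymbol{x}_{ori}+\sqrt{1-\alpha_T}\boldsymbol{\epsilon}_0$ early to write everything relative to $\boldsymbol{x}_{ori}$, while the paper keeps $\boldsymbol{\hat{x}}_T$ until the final subtraction; you also correctly retain the factor $\xi$ that is dropped by typo in the appendix statement of the bound.
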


\begin{proof}
With Eq.~\eqref{apdx_eq:29} and the definitions of Proposition 1, $\boldsymbol{\hat{x}}_{t}$ and $\boldsymbol{\hat{x}}_{t}^0$ can be written as:
\begin{equation}
    \tag{48}
    \label{apdx_eq:34}
    \boldsymbol{\hat{x}}_{t} = \frac{\sqrt{\alpha_t}}{\sqrt{\alpha_T}}\boldsymbol{\hat{x}}_{T} - \sqrt{\alpha_t}\sum_{k=t+1}^{T}\lambda_k\left(\boldsymbol{\epsilon}_0 - \boldsymbol{\delta}_k\right),
\end{equation}
and 
\begin{equation}
    \tag{49}
    \label{apdx_eq:35}
    \boldsymbol{\hat{x}}_{t}^0 = \frac{\boldsymbol{\hat{x}}_t-\sqrt{1-\alpha_t}\boldsymbol{\hat{\epsilon}}_{t+1}}{\sqrt{\alpha_t}} =
    \frac{1}{\sqrt{\alpha_T}}\boldsymbol{\hat{x}}_{T} - \sum_{k=t+1}^{T}\lambda_k\left(\boldsymbol{\epsilon}_0 - \boldsymbol{\delta}_k\right) - \frac{\sqrt{1-\alpha_t}}{\sqrt{\alpha_t}}\left( \boldsymbol{\epsilon}_0 - \boldsymbol{\delta}_{t+1} \right).
\end{equation}
For $\boldsymbol{{x}}_{adv}$, we have:
\begin{equation}
    \tag{50}
    \label{apdx_eq:36}
    \boldsymbol{{x}}_{adv} = \boldsymbol{\hat{x}}_{0} = \frac{\boldsymbol{\Hat{x}}_{T} - \sqrt{1-\alpha_{T}} \boldsymbol{\epsilon}_0}{\sqrt{\alpha_{T}}} + \sum_{t=1}^{T}\lambda_{t}\boldsymbol{\delta}_t
\end{equation}
With Eq.~\eqref{apdx_eq:35} and Eq.~\eqref{apdx_eq:36}, we can obtain:
\begin{equation}
    \tag{51}
    \label{apdx_eq:37}
    \boldsymbol{{x}}_{adv} - \boldsymbol{\hat{x}}_{t}^0 = \sum_{k=1}^{t}\lambda_{k}\boldsymbol{\delta}_k - \frac{\sqrt{1-\alpha_t}}{\sqrt{\alpha_t}}\boldsymbol{\delta}_{t+1}
\end{equation}
Thus, we have:
\begin{alignat}{2}
\left\|\boldsymbol{{x}}_{adv} - \boldsymbol{\hat{x}}_{t}^0\right\|_\infty \notag 
&= \; && \left\|\sum_{k=1}^{t}\lambda_{k}\boldsymbol{\delta}_k - \frac{\sqrt{1-\alpha_t}}{\sqrt{\alpha_t}}\boldsymbol{\delta}_{t+1} \right\|_\infty \notag \\
&\leq \; && \left\|\sum_{k=1}^{t}\lambda_{k}\boldsymbol{\delta}_k\right\|_\infty + \left\|\frac{\sqrt{1-\alpha_t}}{\sqrt{\alpha_t}}\boldsymbol{\delta}_{t+1} \right\|_\infty \notag \\
&= \; && \sum_{k=1}^{t}\lambda_{k}\left\|\boldsymbol{\delta}_k\right\|_\infty + \frac{\sqrt{1-\alpha_t}}{\sqrt{\alpha_t}}\left\|\boldsymbol{\delta}_{t+1} \right\|_\infty \notag \\
&\leq \; && \sum_{k=1}^{t}\left(\lambda_{k}\cdot\frac{\sqrt{\alpha_T}}{\sqrt{1 - \alpha_T}}\xi\right) + \frac{\sqrt{1-\alpha_t}}{\sqrt{\alpha_t}}\frac{\sqrt{\alpha_T}}{\sqrt{1 - \alpha_T}}\xi \notag \\
&= \; && \left(\sum_{k=1}^{t}\lambda_{k}\right)\cdot\frac{\sqrt{\alpha_T}}{\sqrt{1 - \alpha_T}}\xi + \frac{\sqrt{1-\alpha_t}}{\sqrt{\alpha_t}}\frac{\sqrt{\alpha_T}}{\sqrt{1 - \alpha_T}}\xi \notag \\
&= \; && \left(\frac{\sqrt{1-\alpha_t}}{\sqrt{\alpha_t}} - \frac{\sqrt{1-\alpha_0}}{\sqrt{\alpha_0}}\right)\cdot\frac{\sqrt{\alpha_T}}{\sqrt{1 - \alpha_T}}\xi + \frac{\sqrt{1-\alpha_t}}{\sqrt{\alpha_t}}\frac{\sqrt{\alpha_T}}{\sqrt{1 - \alpha_T}}\xi \notag \\
&= \; && 2\cdot\frac{\sqrt{1-\alpha_t}}{\sqrt{\alpha_t}}\frac{\sqrt{\alpha_T}}{\sqrt{1 - \alpha_T}}\xi \tag{52} \label{apdx_eq:38}
\end{alignat}
This concludes the proof of Proposition 2.
\end{proof}

\section{Algorithm of AdvAD-X} \label{app:Algo}
\begin{figure}[h]
\centering
\vspace{-0.5cm}
\begin{minipage}{0.9\linewidth}
\begin{algorithm}[H]
    \footnotesize
    \caption{\textbf{AdvAD-X}}
    \label{alg:algorithm}
    \textbf{Input}: Attacked model $f(\cdot)$, image $\boldsymbol{x}_{ori}$ with label $y_{gt}$, budget $\xi$, step $T$;\\
    \textbf{Output}: Adversarial example $\boldsymbol{x}_{adv}$
    \begin{algorithmic}[1] 
        \STATE Initialize pre-defined diffusion coefficients $\alpha_{0:T}\in(0,1]^{T+1}$;
        \STATE Initialize $\boldsymbol{\epsilon}_{0} \sim \mathcal{N}(\boldsymbol{0}, \boldsymbol{\mathit{I}})$; \COMMENT{Initialize and fix diffusion noise $\boldsymbol{\epsilon}_{0}$.}
        \STATE Transform the range of $\boldsymbol{x}_{ori}$ to [-1, 1]; \COMMENT{Align with data range of diffusion process.}
        \STATE Calculate $\boldsymbol{\Bar{x}}_{T}$ via Eq.~\eqref{maineq:4}; \COMMENT{Forward process of adding noise $\boldsymbol{\epsilon}_{0}$ to $\boldsymbol{x}_{ori}$.}
        \STATE Set $\boldsymbol{\Hat{x}}_{T}:=\boldsymbol{\Bar{x}}_{T}$, $\boldsymbol{\Hat{\epsilon}}_{T+1}:=\boldsymbol{\epsilon}_{0}$; \COMMENT{Non-parametric diffusion process.}
        \STATE Calculate mask $\boldsymbol{m}$ of $\boldsymbol{x}_{ori}$ using GradCAM; \COMMENT{Mask $\boldsymbol{m}$ for the CA strategy.}
        \FOR{$t=T$ to $1$} 
            \STATE Calculate $\boldsymbol{\Hat{x}}_{t}^{0}$ via Eq.~\eqref{maineq:7}; \COMMENT{Approximation of $\boldsymbol{\Hat{x}}_t^0 \approx \boldsymbol{x}_{adv}$.}
            \STATE Transform the range of $\boldsymbol{\Hat{x}}_t^0$ to [0, 255]; \COMMENT{Align with data range of image.}
            \STATE \textit{// DGI strategy for performing AMG and PC dynamically.}
            \IF {$f(\boldsymbol{\Hat{x}}_{t}^{0}) == y_{gt}$}
                \STATE Calculate $\boldsymbol{\Hat{\epsilon}}_{t}'$ using  $\boldsymbol{m}$ with AMG via Eq.~\eqref{maineq:14}; \COMMENT{AMG module and CA strategy.}
                \STATE Calculate $\boldsymbol{\Hat{\epsilon}}_{t}$ with PC  via Eq.~\eqref{maineq:10}; \COMMENT{Same PC module as AdvAD. \,}
            \ELSE
                \STATE Set $\boldsymbol{\Hat{\epsilon}}_{t}=\boldsymbol{\epsilon}_{0}$; \COMMENT{Skip the operations of current step.}
            \ENDIF
            \STATE Calculate $\boldsymbol{\Hat{x}}_{t-1}$ via Eq.~\eqref{maineq:11}; \COMMENT{One step backward from $t$ to $t-1$.\;}
        \ENDFOR
        \STATE Transform the range of $\boldsymbol{\Hat{x}}_{0}$ to [0, 255]; \COMMENT{Endpoint of the process.}
        \STATE \textcolor{blue}{\textbf{return} $\boldsymbol{x}_{adv} = \boldsymbol{\Hat{x}}_{0}$;} \COMMENT{\textcolor{blue}{Directly return $\boldsymbol{x}_{adv}$ in raw floating-point data for the ideal scenario.}}
    \end{algorithmic}
\label{alg:2}
\end{algorithm}
\end{minipage}
\end{figure}

\section{Additional Experiments}  \label{app:C}
\subsection{Additional Quantitative Comparisons} \label{app:C1}
Table~\ref{supp_table} reports the untargeted attack performance and imperceptibility of ten methods on Vgg-19, MobileNet-V2, and WideResNet-50 models. The results indicate that the proposed AdvAD and AdvAD-X, leveraging a novel modeling framework, consistently achieve superior performance. These findings further underscore the effectiveness of the proposed approach.
\begin{table*}[!h]
    \caption{Additional results of untargeted white-box attack success rate (ASR) and other evaluation metrics for imperceptibility when employing different attacks and attacked models. The reported running times are obtained using a RTX 3090 GPU on a same machine. \textcolor{blue}{$\boldsymbol{\dag}$} and \textcolor{blue}{blue} mean the results of AdvAD-X are obtained with floating-point data type in the ideal scenario as described in Sec 3.4.}
    \label{tab:tab1}
    \centering
    {
    \setlength{\tabcolsep}{3.2pt}
       \renewcommand{\arraystretch}{1.25}
    \scriptsize
    \renewcommand{\arraystretch}{0.7}
    \begin{tabular}{cp{2.2cm}ccccccccc}
        \toprule
        \makecell{Model} & Attack Method    & Time (s) $\downarrow$  & ASR ($\%$) $\uparrow$ & $l_\infty$ $\downarrow$   & $l_2$ $\downarrow$     & \;PSNR $\uparrow$  & \;SSIM $\uparrow$   & \;FID $\downarrow$   & \;LPIPS $\downarrow$  & \;MUSIQ $\uparrow$   \\
        \midrule
        \multirow{10}{*}{Vgg-19}  
                                    & PGD           & 47     & \textbf{100.0}     & {0.031}     & 8.47      & 33.23     & 0.8771   & {43.15}   & {0.0508}   & 53.51 \\
                                    & NCF           & 3288  & 92.8      & 0.794     & 75.21     & 14.77     & 0.6391   & 57.45   & 0.3077   & 49.27 \\
                                    & ACA     & 83123  & 93.4      & 0.832     & 51.22     & 18.22     & 0.5767   & 66.78   & 0.3277   & 55.61 \\
                                    & DiffAttack    & 34163  & 97.0      & 0.769     & 31.92     & 22.23     & 0.6632   & 59.08   & 0.1235   & \textbf{57.22} \\
                                    & DiffPGD & 5770   & 93.9      & 0.246     & 11.46     & 30.93     & {0.8888}   & 20.72   & {0.0317}   & 55.23 \\
                                    & AdvDrop       & 268    & 97.5      & {0.062}     & {3.23}      & 41.79     & 0.9867   & 5.90    & 0.0061   & 54.93 \\
                                    & PerC-AL       & 8671   & \textbf{100.0}     & 0.142     & \underline{2.12}      & \underline{45.92}     & 0.9885   & 10.78   & 0.0028   & 55.91 \\
                                    & SSAH  & 948    & 85.5      & \underline{0.027}     & 2.35      & 44.62     & \underline{0.9920}   & \underline{4.25}    & \underline{0.0017}   & 55.45 \\
                                     \cmidrule(lr){2-11}
                                    & AdvAD (ours)       & {4370}   & \underline{99.5}      & \textbf{0.009}     & \textbf{1.05}      & \textbf{52.13}     & \textbf{0.9979}   & \textbf{2.62}    & \textbf{0.0005}   & \underline{56.31} \\
                                    & $\text{AdvAD-X}^{\textcolor{blue}{\boldsymbol{\dag}}} \text{(ours)}$ & {1967}  & \textcolor{blue}{\textbf{99.9}}      & \textcolor{blue}{\textbf{0.001}}     & \textcolor{blue}{\textbf{0.32}}      & \textcolor{blue}{\textbf{64.76}}     & \textcolor{blue}{\textbf{0.9997}}   & \textcolor{blue}{\textbf{0.27}}    & \textcolor{blue}{\textbf{0.0001}}   & \textcolor{blue}{\textbf{56.56}} \\
        \bottomrule
             \multirow{10}{*}{MobileNet-V2}  
                                    & PGD           & {10}     & \textbf{99.9}     & {0.031}     & 8.29      & 33.41     & 0.8803   & 34.57   & 0.0500   & {52.00} \\
                                    & NCF           & 2503  & 92.5     & 0.784     & 76.02     & 14.69     & 0.6373   & 56.23   & 0.3090   & 49.37 \\
                                    & ACA     & 83118  & 92.8     & 0.835     & 50.70     & 18.30     & 0.5786   & 64.90   & 0.3254   & \underline{56.17} \\
                                    & DiffAttack    & 34723  & {98.2}     & 0.739     & 30.51     & 22.61     & 0.6733   & 55.77   & 0.1143   & 56.01 \\
                                    & DiffPGD & 5941   & 92.6     & {0.246}     & 11.43     & 30.95     & {0.8887}   & {19.22}   & {0.0309}   & 54.87 \\
                                    & AdvDrop       & 116    & 97.7     & {0.063}     & {3.16}      & {41.94}     & 0.9873   & {4.88}    & 0.0064   & 54.91 \\
                                    & PerC-AL       & 3187   & 99.8     & 0.118     & \underline{2.16}      & \underline{45.67}     & 0.9879   & 8.77    & 0.0032   & 55.59 \\
                                    & SSAH  & 265    & 97.8     & \underline{0.026}     & 2.18      & 45.24     & \underline{0.9930}   & \underline{2.94}    & \underline{0.0016}   & 55.78 \\
                                     \cmidrule(lr){2-11}
                                    & AdvAD (ours)       & {992}    & \underline{99.6}     & \textbf{0.008}     & \textbf{0.94}      & \textbf{53.07}     & \textbf{0.9982}   & \textbf{1.46}    & \textbf{0.0004}   & \textbf{56.37} \\
                                    & $\text{AdvAD-X}^{\textcolor{blue}{\boldsymbol{\dag}}} \text{(ours)}$ & {388}  & \textcolor{blue}{\textbf{100.0}}   & \textcolor{blue}{\textbf{0.001}}   & \textcolor{blue}{\textbf{0.24}}    & \textcolor{blue}{\textbf{66.8}}    & \textcolor{blue}{\textbf{0.9998}}   & \textcolor{blue}{\textbf{0.11}}    & \textcolor{blue}{\textbf{0.0001}}   & \textcolor{blue}{\textbf{56.59}} \\
        \bottomrule
           \multirow{10}{*}{WideResNet-50}  
                                    & PGD           & 42     & {96.0}     & {0.031}     & {8.2}       & 33.5      & 0.8830   & 35.594    & 0.0521   & 52.43 \\
                                    & NCF           & 2971  & 89.7     & 0.777     & 74.05     & 14.98     & 0.6473   & 56.01   & 0.2965   & 49.45 \\
                                    & ACA     & 84163  & 88.0     & 0.838     & 53.17     & 17.89     & 0.5619   & 68.27   & 0.3442   & 55.47 \\
                                    & DiffAttack    & 34072  & 95.1     & 0.747     & 30.61     & 22.60     & 0.6737   & 54.71   & 0.1137   & 55.68 \\
                                    & DiffPGD & 5965   & 91.4     & {0.245}     & {11.44}     & 30.95     & 0.8905   & 21.24   & {0.0317}   & 55.16 \\
                                    & AdvDrop       & 353    & {96.5}     & 0.062     & 3.28      & {41.64}     & {0.9863}   & {6.21}    & {0.0060}   & 54.917 \\
                                    & PerC-AL       & 6655   & \underline{97.8}     & {0.133}     & \underline{1.91}      & \underline{46.80}     & {0.9906}   & 9.28    & {0.0025}   & \underline{56.07} \\
                                    & SSAH  & 738   & 95.7     & \underline{0.028}     & 2.21      & 45.21     & \underline{0.9933}   & \underline{3.95}    & \underline{0.0015}   & 55.88 \\
                                    \cmidrule(lr){2-11}
                                    & AdvAD (ours)       & 3845   & \textbf{99.9}     & \textbf{0.010}     & \textbf{1.10}      & \textbf{51.54}     & \textbf{0.9979}   & \textbf{2.84}    & \textbf{0.0006}   & \textbf{56.33} \\
                                    & $\text{AdvAD-X}^{\textcolor{blue}{\boldsymbol{\dag}}} \text{(ours)}$ & 1477  & \textcolor{blue}{\textbf{100.0}}   & \textcolor{blue}{\textbf{0.002}}   & \textcolor{blue}{\textbf{0.38}}    & \textcolor{blue}{\textbf{62.54}}    & \textcolor{blue}{\textbf{0.9996}}   & \textcolor{blue}{\textbf{0.33}}    & \textcolor{blue}{\textbf{0.0001}}   & \textcolor{blue}{\textbf{56.58}} \\
        \bottomrule
    \end{tabular}}
    \label{supp_table}
\end{table*}

\subsection{Additional Visualizations} \label{app:C2}
More visualizations of adversarial examples and perturbations under the attacked model of ResNet-50 are displayed in Figure \ref{fig:supp}. The visualizations provide a clear insight into how different methods accomplish imperceptible attacks. Our AdvAD and AdvAD-X methods execute imperceptible attacks with lower overall intensity of perturbations. Notably, for images with salient objects (e.g., the bridge in the second examples), the perturbation intensity of AdvAD also naturally increases in the object regions during the gradient-based adversarial guidance calculation, while AdvAD-X, equipped with the dynamic strategy, still shows uniform and lower overall perturbation intensity.

\begin{figure*}[htbp]
\centering
\begin{subfigure}
    \centering
    \includegraphics[width=1\textwidth]{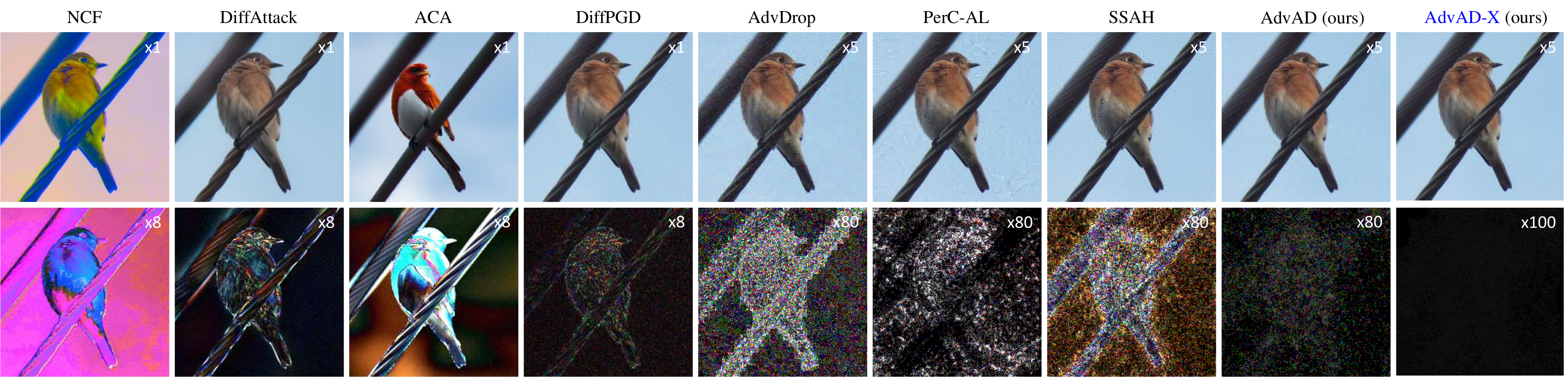}
    \label{fig:subfig1}
    \vspace{-0.5cm}
\end{subfigure}
\begin{subfigure}
    \centering
    \includegraphics[width=1\textwidth]{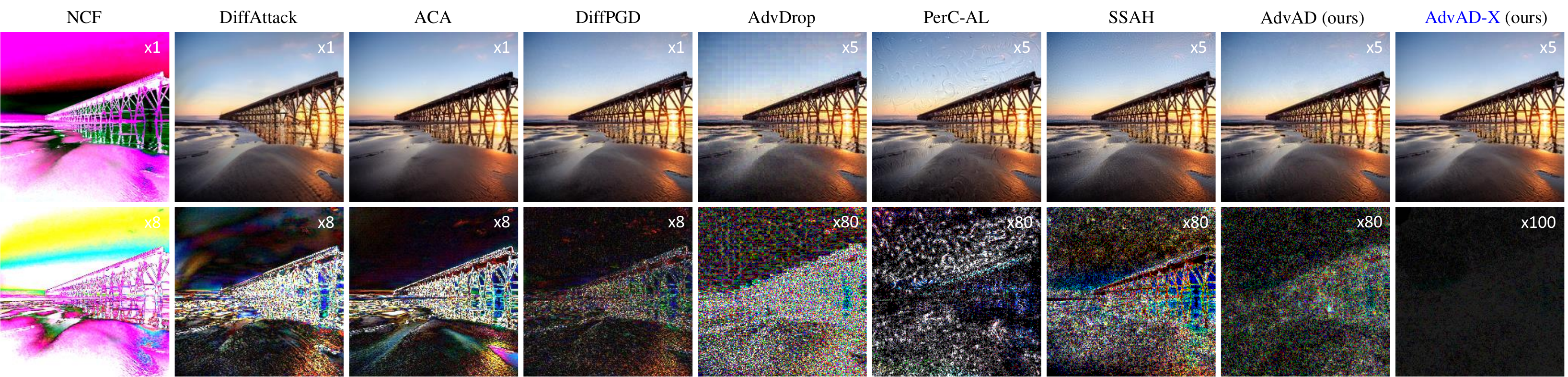}
    \label{fig:subfig2}
    \vspace{-0.5cm}
\end{subfigure}
\begin{subfigure}
    \centering
    \includegraphics[width=1\textwidth]{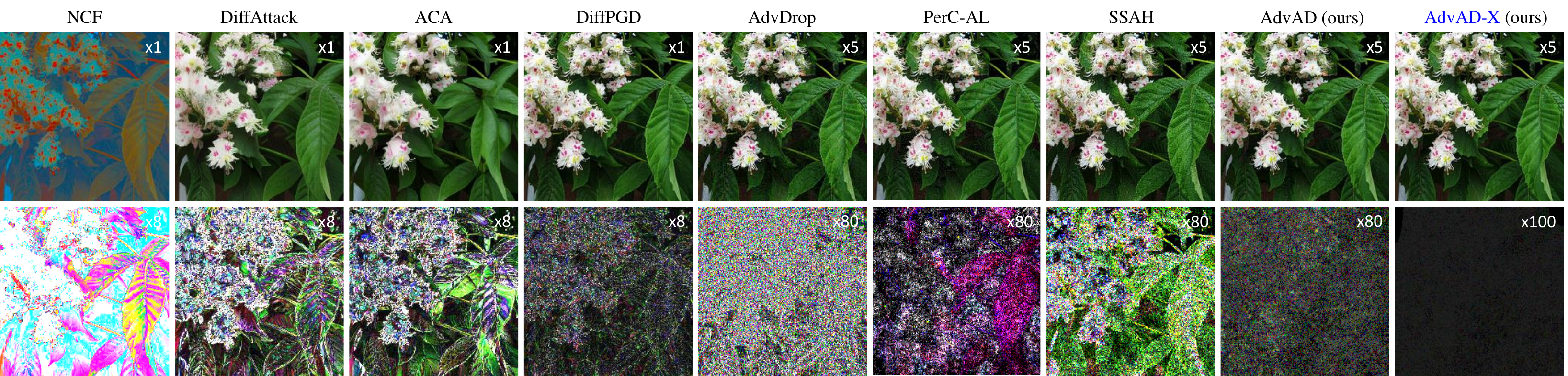}
    \label{fig:subfig3}
    \vspace{-0.5cm}
\end{subfigure}
\begin{subfigure}
    \centering
    \includegraphics[width=1\textwidth]{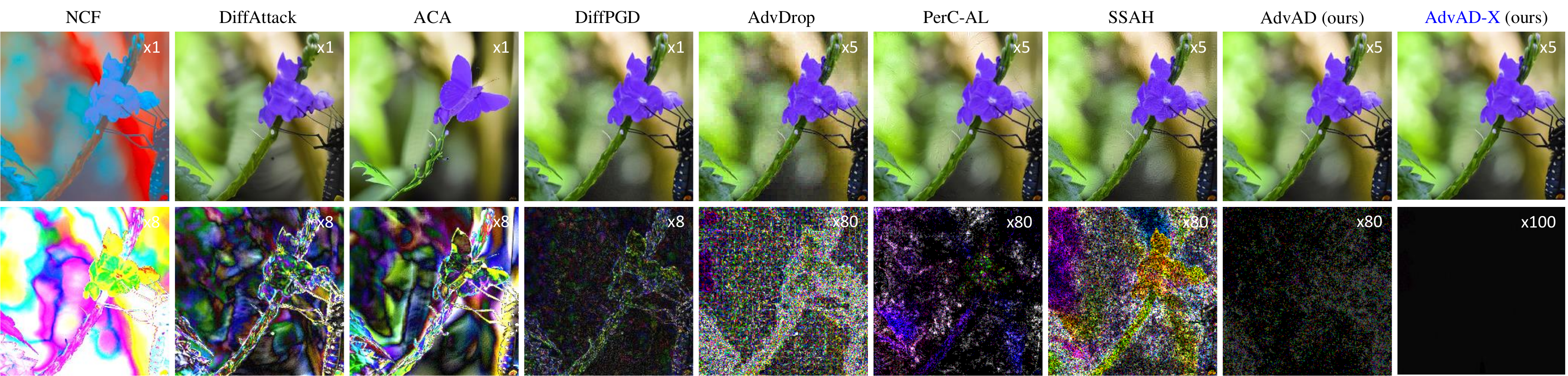}
    \label{fig:subfig4}
    \vspace{-0.5cm}
\end{subfigure}
\begin{subfigure}
    \centering
    \includegraphics[width=1\textwidth]{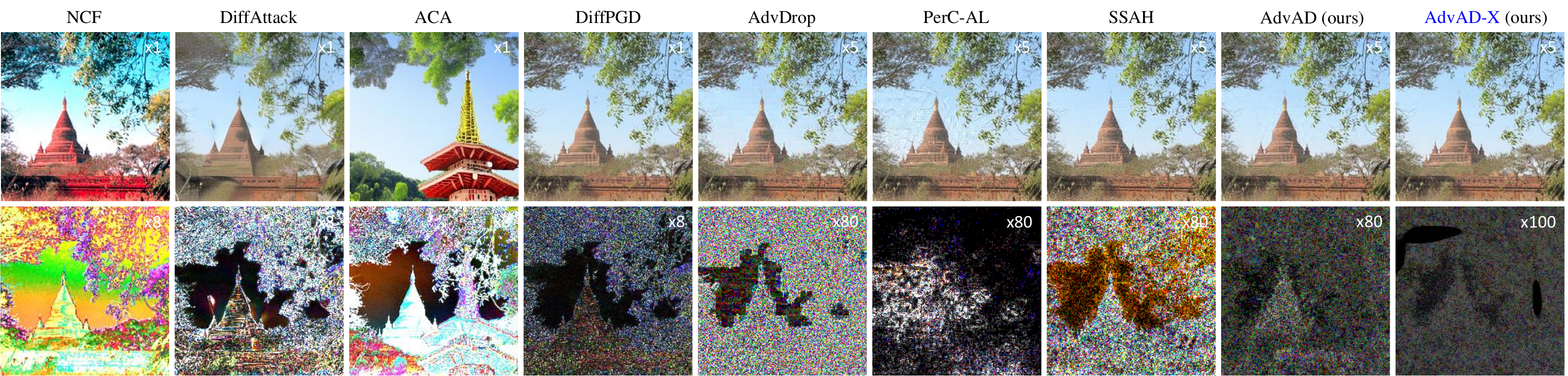}
    \label{fig:subfig5}
    \vspace{-0.5cm}
\end{subfigure}
\begin{subfigure}
    \centering
    \includegraphics[width=1\textwidth]{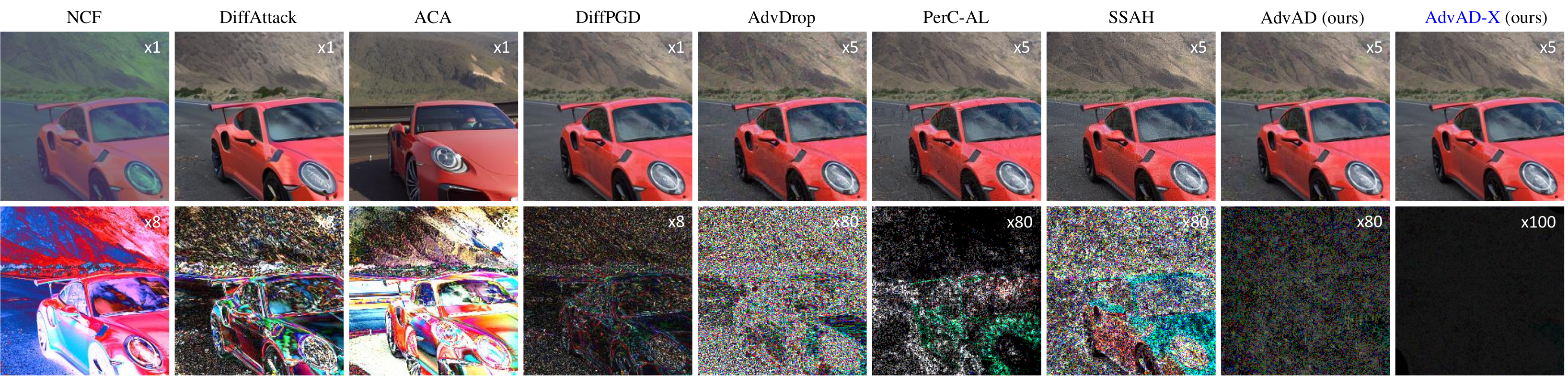}
    \label{fig:subfig6}
    \vspace{-0.5cm}
\end{subfigure}
\caption{Additional imperceptible adversarial examples and corresponding perturbations generated by various methods.Perturbations are amplified and shown for the convenience of observation. Please zoom in to observe the details of the images.}
\label{fig:supp}
\end{figure*}


\subsection{Ablation Study of AdvAD-X}  \label{app:C3}

From AdvAD to AdvAD-X, Table~\ref{tab:tab6} shows the effect of the two strategies of CA and DGI. It can be observed that adding CA in each step of AdvAD slightly improves impercepbility while maintaining the attack success rate of 100\%. However, the DGI strategy significantly reduces the iterations of performing AMG and PC from 1000 to 3.97, which indicates that our framework theoretically only requires very little injected  adversarial guidance to successfully perform attacks, proving the performance of our modeling method as well as the effectiveness of the adversarial guidance. In AdvAD-X, which finally uses both CA and DGI, the guidance strength in each step is further suppressed, resulting in a slight increase in the total number of iterations required adaptively, but the final perturbation strength continues to decrease to a more extreme level.

\begin{table}[h]
    \caption{Ablation study of the proposed CAM Assistance (CA) and Dynamic Gradient Injection (DGI) strategies in AdvAD-X. As marked with \textcolor{blue}{$\boldsymbol{\dag}$}, all the results in this experiment are obtained with attacking normal ResNet-50 using the floating-point raw data to align with the setting of AdvAD-X. The term of Iter. indicates the number of iterations that the AMG and PC are performed.}
    \vspace{0.2cm}
    \label{tab:tab6}
    \renewcommand{\arraystretch}{0.7}
    \centering
    {
    \setlength{\tabcolsep}{5pt}
    \scriptsize
    \begin{tabular}{lccccccc}
        \toprule
        Attack   & Iter.    & ASR$\uparrow$   & $l_2$$\downarrow$     & PSNR$\uparrow$      & SSIM$\uparrow$      & FID$\downarrow$            \\
        \midrule
        $\text{AdvAD}^{\textcolor{blue}{\boldsymbol{\dag}}}$     & 1000      & \textbf{100.0}     & 0.97      & 52.60     & 0.9984    & 2.3894           \\
        $\text{AdvAD+CA}^{\textcolor{blue}{\boldsymbol{\dag}}}$  & 1000      & \textbf{100.0}     & 0.89      & 53.27     & 0.9987    & 2.2033          \\
        $\text{AdvAD+DGI}^{\textcolor{blue}{\boldsymbol{\dag}}}$ & \textbf{3.97}      & \textbf{100.0}     & \textbf{0.34}      & 63.60     & \textbf{0.9997}    & 0.2317         \\
        $\text{AdvAD-X}^{\textcolor{blue}{\boldsymbol{\dag}}}$   & 4.05      & \textbf{100.0}     & \textbf{0.34}      & \textbf{63.62}     & \textbf{0.9997}    & \textbf{0.2301}         \\
        \bottomrule
    \end{tabular}}
\end{table}

\subsection{Additional Discussions}  \label{app:C4}

\textbf{Discussion on Proposition 1.} In the previous sections, we obtained Proposition 1 through extensive derivations, which reformulates the AdvAD attack process using $\lambda_t$ and $\boldsymbol{\delta}_t$. While this formulation does not represent the actual attack procedure, it enables post-analysis after the completion of attacks. In Proposition 1, although the upper bound of $\boldsymbol{\delta}_t$ is theoretically independent of the step $t$, both $\boldsymbol{\delta}_t$ and the coefficient $\lambda_t$ gradually decrease with $t$ in quantitative results of Figure~\ref{fig:5} due to the unique properties of AdvAD. Thus, it emerges a hypothesis that whether modifying the coefficient of gradient term of traditional attacks like PGD to decay incrementally could also achieve the imperceptibility. To isolate the impact of this hypothesis, we conduct experiments with a modified version of PGD with step size decay as:
\begin{equation}
    \tag{53}
    \label{apdx_eq:39}
    \boldsymbol{x}_{t-1} = \Pi_\xi\{\boldsymbol{x}_{t} + \lambda_t \cdot \eta \cdot \text{sign}(\nabla_{\boldsymbol{x}_t}\mathcal L_\text{CE}(f(\boldsymbol{x}_t), y_{gt}))\},
\end{equation}
where $\lambda_t$ is the same coefficient as in Eq.~\eqref{apdx_eq:25} of Proposition 1 for alignment, and $\eta$ is a fixed small factor for the initial step size.  We have searched a lot of values of $\eta$ to determine the optimal range, and the results of attacking three models with different architectures under three typical values of $\eta$ are presented in Table~\ref{tab:tab8}.

\begin{table*}[h]
 \caption{\footnotesize Results of PGD + step size decay strategy and the proposed AdvAD.}
 \label{tab:tab8}
 \renewcommand{\arraystretch}{0.7}
 \setlength{\tabcolsep}{4pt}
 \centering{
 \scriptsize
\begin{tabular}{clccccccc}
    \toprule 
    \makecell{Model}    & Attack Method & Param. & Time & ASR & $l_{\infty}$ &  $l_2$ & PSNR $\uparrow$  & SSIM $\uparrow$  \\
    \midrule
    \multirow{5}{*}{ResNet-50} 
    & PGD + Step size decay in Eq.~\eqref{apdx_eq:39}, $\eta$ = 5e-5 & \multirow{5}{*}{\makecell{$T$=1000, \\$\xi$=8/255}} & 2272 & \textbf{99.9} & 0.016 & 1.80 & 46.75 & 0.9947 \\
    & PGD + Step size decay in Eq.~\eqref{apdx_eq:39}, $\eta$ = 3e-5 & & 2228 & 99.0 & \textbf{0.008} & \underline{1.17} & \underline{50.41} & \underline{0.9974} \\
    & PGD + Step size decay in Eq.~\eqref{apdx_eq:39}, $\eta$ = 1e-5 & & 2306 & \textcolor{red}{7.1} & - & - & - & - \\
    \cmidrule(lr){2-2} 
    & AdvAD (ours) & & 2201 & \underline{99.7} & \underline{0.010} & \textbf{1.06} & \textbf{51.84} & \textbf{0.998} \\
    \midrule
    \multirow{5}{*}{Swin-Base} 
    & PGD + Step size decay in Eq.~\eqref{apdx_eq:39}, $\eta$ = 5e-5 & \multirow{5}{*}{\makecell{$T$=1000, \\$\xi$=8/255}}  & 8725 & \underline{98.0} & \underline{0.008} & 1.28 & 49.88 & 0.9975 \\
    & PGD + Step size decay in Eq.~\eqref{apdx_eq:39}, $\eta$ = 3e-5 & & 8728 & \textcolor{orange}{89.1} & \textbf{0.004} & \textbf{0.94} & \textbf{52.47} & \textbf{0.9985} \\
    & PGD + Step size decay in Eq.~\eqref{apdx_eq:39}, $\eta$ = 1e-5 & & 8715 & \textcolor{red}{3.9 } & - & - & - & - \\
     \cmidrule(lr){2-2} 
    & AdvAD (ours) & & 9729 & \textbf{100} & 0.013 & \underline{1.19} & \underline{50.57} & \underline{0.9978}  \\
    \midrule
    \multirow{5}{*}{VisionMamba-Small}
    & PGD + Step size decay in Eq.~\eqref{apdx_eq:39}, $\eta$ = 5e-5 & \multirow{5}{*}{\makecell{$T$=1000, \\$\xi$=8/255}} & 6350 & \underline{89.2} & \underline{0.008} & 1.63 & 47.76 & 0.9959 \\
    & PGD + Step size decay in Eq.~\eqref{apdx_eq:39}, $\eta$ = 3e-5 & & 6393 & \textcolor{orange}{78.3} & \textbf{0.004} & \textbf{1.10} & \textbf{51.05} & \textbf{0.9979} \\
    & PGD + Step size decay in Eq.~\eqref{apdx_eq:39}, $\eta$ = 1e-5 & & 6348 & \textcolor{red}{2.5} & - & - & - & - \\
     \cmidrule(lr){2-2} 
    & AdvAD (ours) & & 6154 & \textbf{99.7} & 0.016 & \underline{1.62} & \underline{47.94} & \underline{0.9960} \\
 \bottomrule
    \end{tabular}}
\end{table*}

It can be observed that for PGD with this strategy, the ASR is clearly proportional to $\eta$, the imperceptibility is inversely proportional to $\eta$. However, regardless of how $\eta$ is adjusted, this strategy can not simultaneously match AdvAD in both ASR and imperceptibility. Firstly, for $\eta$ = 5e-5, when attacking VisionMamba, ASR of this strategy is 10.5$\%$ lower than AdvAD with close PSNR, and the strategy has a 0.2$\%$ higher ASR but a 5.09 dB lower PSNR for ResNet50. For $\eta$ = 3e-5, the ASR against VisionMamba and Swin further degrade, being 10.9$\%$ and 21.4$\%$ lower than AdvAD, respectively. Finally, for $\eta$ = 1e-5, the modified PGD with step size decay fails to attack all the models. Nevertheless, although this step size decay strategy performs worse than our AdvAD, it indeed enhances the imperceptibility of attacks compared to the original PGD in some cases, which further validates our motivation and modeling approach. This is because, while this strategy follows a completely different technical route than AdvAD, it similarly uses subtler perturbations to progressively push adversarial examples closer to the model's decision boundary. To this end, we leave further research on the potential of this strategy to future work.

\textbf{Limitation.} As the primary focus of AdvAD is the imperceptibility, although it achieves better transferability at lower perturbation strength compared with other restricted imperceptible attacks, its transferability is inevitably weaker than other black-box attack methods that operate in larger perturbation spaces and are specifically designed for transferability (like the unrestricted ones). However, the proposed AdvAD is essentially a general attack paradigm with a novel modeling approach and a solid theoretical foundation. By relaxing the constraint of perturbation strength and incorporating enhanced designs for the transferability into the proposed framework of non-parametric diffusion process, AdvAD also has significant potential to be modified into a specific black-box attack, and we also leave this aspect for future research.

\newpage
\section*{NeurIPS Paper Checklist}

\begin{enumerate}

\item {\bf Claims}
    \item[] Question: Do the main claims made in the abstract and introduction accurately reflect the paper's contributions and scope?
    \item[] Answer: \answerYes{} 
    \item[] Justification: We present the motivation, innovation, overview of methods, experimental performance and the main contributions of our paper in the abstract and introduction. These claims are further explained and verified in Sec. 3 and Sec. 4, and detailed proofs are given in Appendix.
    \item[] Guidelines:
    \begin{itemize}
        \item The answer NA means that the abstract and introduction do not include the claims made in the paper.
        \item The abstract and/or introduction should clearly state the claims made, including the contributions made in the paper and important assumptions and limitations. A No or NA answer to this question will not be perceived well by the reviewers. 
        \item The claims made should match theoretical and experimental results, and reflect how much the results can be expected to generalize to other settings. 
        \item It is fine to include aspirational goals as motivation as long as it is clear that these goals are not attained by the paper. 
    \end{itemize}

\item {\bf Limitations}
    \item[] Question: Does the paper discuss the limitations of the work performed by the authors?
    \item[] Answer: \answerYes{} 
    \item[] Justification: For the proposed AdvAD that mainly focuses on the imperceptibility, we point that there is a trade-off between the imperceptibility and transferability for the restricted attacks. The relevant discussions and experimental results are given in Sec. 3.4, Sec. 4.3, and Appendix D.4. Additionally, for the proposed AdvAD-X, although it shows amazing performance under an ideal scenario using raw floating-point data for attacking and brings theoretical value, the very small floating-point perturbations will be easily erased by the quantization process when the image is actually stored.
    \item[] Guidelines:
    \begin{itemize}
        \item The answer NA means that the paper has no limitation while the answer No means that the paper has limitations, but those are not discussed in the paper. 
        \item The authors are encouraged to create a separate "Limitations" section in their paper.
        \item The paper should point out any strong assumptions and how robust the results are to violations of these assumptions (e.g., independence assumptions, noiseless settings, model well-specification, asymptotic approximations only holding locally). The authors should reflect on how these assumptions might be violated in practice and what the implications would be.
        \item The authors should reflect on the scope of the claims made, e.g., if the approach was only tested on a few datasets or with a few runs. In general, empirical results often depend on implicit assumptions, which should be articulated.
        \item The authors should reflect on the factors that influence the performance of the approach. For example, a facial recognition algorithm may perform poorly when image resolution is low or images are taken in low lighting. Or a speech-to-text system might not be used reliably to provide closed captions for online lectures because it fails to handle technical jargon.
        \item The authors should discuss the computational efficiency of the proposed algorithms and how they scale with dataset size.
        \item If applicable, the authors should discuss possible limitations of their approach to address problems of privacy and fairness.
        \item While the authors might fear that complete honesty about limitations might be used by reviewers as grounds for rejection, a worse outcome might be that reviewers discover limitations that aren't acknowledged in the paper. The authors should use their best judgment and recognize that individual actions in favor of transparency play an important role in developing norms that preserve the integrity of the community. Reviewers will be specifically instructed to not penalize honesty concerning limitations.
    \end{itemize}

\item {\bf Theory Assumptions and Proofs}
    \item[] Question: For each theoretical result, does the paper provide the full set of assumptions and a complete (and correct) proof?
    \item[] Answer: \answerYes{} 
    \item[] Justification: The proposed attack method is built on a solid theoretical foundation, which is derived from the derivation of the diffusion models. The specific methods and theoretical properties are introduced in detail in Sec. 3, and the detailed proofs of the proposed theorem and two propositions are given in Appendix.
    \item[] Guidelines:
    \begin{itemize}
        \item The answer NA means that the paper does not include theoretical results. 
        \item All the theorems, formulas, and proofs in the paper should be numbered and cross-referenced.
        \item All assumptions should be clearly stated or referenced in the statement of any theorems.
        \item The proofs can either appear in the main paper or the supplemental material, but if they appear in the supplemental material, the authors are encouraged to provide a short proof sketch to provide intuition. 
        \item Inversely, any informal proof provided in the core of the paper should be complemented by formal proofs provided in appendix or supplemental material.
        \item Theorems and Lemmas that the proof relies upon should be properly referenced. 
    \end{itemize}

    \item {\bf Experimental Result Reproducibility}
    \item[] Question: Does the paper fully disclose all the information needed to reproduce the main experimental results of the paper to the extent that it affects the main claims and/or conclusions of the paper (regardless of whether the code and data are provided or not)?
    \item[] Answer: \answerYes{} 
    \item[] Justification: We present all the formulas, calculation processes, algorithms, and hyperparameters of the proposed methods in detail. And we have tested that by fixing the random seed, our method can produce consistent results with the same input on our machine, providing good reproducibility.
    \item[] Guidelines:
    \begin{itemize}
        \item The answer NA means that the paper does not include experiments.
        \item If the paper includes experiments, a No answer to this question will not be perceived well by the reviewers: Making the paper reproducible is important, regardless of whether the code and data are provided or not.
        \item If the contribution is a dataset and/or model, the authors should describe the steps taken to make their results reproducible or verifiable. 
        \item Depending on the contribution, reproducibility can be accomplished in various ways. For example, if the contribution is a novel architecture, describing the architecture fully might suffice, or if the contribution is a specific model and empirical evaluation, it may be necessary to either make it possible for others to replicate the model with the same dataset, or provide access to the model. In general. releasing code and data is often one good way to accomplish this, but reproducibility can also be provided via detailed instructions for how to replicate the results, access to a hosted model (e.g., in the case of a large language model), releasing of a model checkpoint, or other means that are appropriate to the research performed.
        \item While NeurIPS does not require releasing code, the conference does require all submissions to provide some reasonable avenue for reproducibility, which may depend on the nature of the contribution. For example
        \begin{enumerate}
            \item If the contribution is primarily a new algorithm, the paper should make it clear how to reproduce that algorithm.
            \item If the contribution is primarily a new model architecture, the paper should describe the architecture clearly and fully.
            \item If the contribution is a new model (e.g., a large language model), then there should either be a way to access this model for reproducing the results or a way to reproduce the model (e.g., with an open-source dataset or instructions for how to construct the dataset).
            \item We recognize that reproducibility may be tricky in some cases, in which case authors are welcome to describe the particular way they provide for reproducibility. In the case of closed-source models, it may be that access to the model is limited in some way (e.g., to registered users), but it should be possible for other researchers to have some path to reproducing or verifying the results.
        \end{enumerate}
    \end{itemize}

\item {\bf Open access to data and code}
    \item[] Question: Does the paper provide open access to the data and code, with sufficient instructions to faithfully reproduce the main experimental results, as described in supplemental material?
    \item[] Answer: \answerYes{} 
    \item[] Justification: We open-source the code on GitHub and give the Python environment requirements, dataset preparation, running commands, etc. Following our instructions, the experimental results given in the paper can be easily reproduced. 
    \item[] Guidelines:
    \begin{itemize}
        \item The answer NA means that paper does not include experiments requiring code.
        \item Please see the NeurIPS code and data submission guidelines (\url{https://nips.cc/public/guides/CodeSubmissionPolicy}) for more details.
        \item While we encourage the release of code and data, we understand that this might not be possible, so “No” is an acceptable answer. Papers cannot be rejected simply for not including code, unless this is central to the contribution (e.g., for a new open-source benchmark).
        \item The instructions should contain the exact command and environment needed to run to reproduce the results. See the NeurIPS code and data submission guidelines (\url{https://nips.cc/public/guides/CodeSubmissionPolicy}) for more details.
        \item The authors should provide instructions on data access and preparation, including how to access the raw data, preprocessed data, intermediate data, and generated data, etc.
        \item The authors should provide scripts to reproduce all experimental results for the new proposed method and baselines. If only a subset of experiments are reproducible, they should state which ones are omitted from the script and why.
        \item At submission time, to preserve anonymity, the authors should release anonymized versions (if applicable).
        \item Providing as much information as possible in supplemental material (appended to the paper) is recommended, but including URLs to data and code is permitted.
    \end{itemize}

\item {\bf Experimental Setting/Details}
    \item[] Question: Does the paper specify all the training and test details (e.g., data splits, hyperparameters, how they were chosen, type of optimizer, etc.) necessary to understand the results?
    \item[] Answer: \answerYes{} 
    \item[] Justification: One of the main contributions of our paper is to propose a novel adversarial attack modeling framework based on a non-parametric diffusion process. Benefiting from the proposed modeling approach, our attack method only needs two simple hyperparameters to accurately control the entire attack process, without the need for complex training or optimization processes in other paradigms.
    \item[] Guidelines:
    \begin{itemize}
        \item The answer NA means that the paper does not include experiments.
        \item The experimental setting should be presented in the core of the paper to a level of detail that is necessary to appreciate the results and make sense of them.
        \item The full details can be provided either with the code, in appendix, or as supplemental material.
    \end{itemize}

\item {\bf Experiment Statistical Significance}
    \item[] Question: Does the paper report error bars suitably and correctly defined or other appropriate information about the statistical significance of the experiments?
    \item[] Answer: \answerNo{} 
    \item[] Justification: Only Figure 5 reports a actual statistic numerical curve of a theoretical upper bound in the actual execution process, which is calculated from actual data using a confidence level of 0.85 and shows the confidence interval. In other experiments, due to limited computing resources, the error bar under multiple runs is not included. However, as mentioned above, the methods proposed in this paper can obtain consistent results in multiple runs by fixing the random seed.
    \item[] Guidelines:
    \begin{itemize}
        \item The answer NA means that the paper does not include experiments.
        \item The authors should answer "Yes" if the results are accompanied by error bars, confidence intervals, or statistical significance tests, at least for the experiments that support the main claims of the paper.
        \item The factors of variability that the error bars are capturing should be clearly stated (for example, train/test split, initialization, random drawing of some parameter, or overall run with given experimental conditions).
        \item The method for calculating the error bars should be explained (closed form formula, call to a library function, bootstrap, etc.)
        \item The assumptions made should be given (e.g., Normally distributed errors).
        \item It should be clear whether the error bar is the standard deviation or the standard error of the mean.
        \item It is OK to report 1-sigma error bars, but one should state it. The authors should preferably report a 2-sigma error bar than state that they have a 96\% CI, if the hypothesis of Normality of errors is not verified.
        \item For asymmetric distributions, the authors should be careful not to show in tables or figures symmetric error bars that would yield results that are out of range (e.g. negative error rates).
        \item If error bars are reported in tables or plots, The authors should explain in the text how they were calculated and reference the corresponding figures or tables in the text.
    \end{itemize}

\item {\bf Experiments Compute Resources}
    \item[] Question: For each experiment, does the paper provide sufficient information on the computer resources (type of compute workers, memory, time of execution) needed to reproduce the experiments?
    \item[] Answer: \answerYes{} 
    \item[] Justification: We have indicated that all the experiments are conducted on a single NVIDIA RTX 3090 GPU, and the running time required for all methods to attack different models are included in Table 1, Table 5, and Table 8 to compare the computational complexity while providing a reference.
    \item[] Guidelines:
    \begin{itemize}
        \item The answer NA means that the paper does not include experiments.
        \item The paper should indicate the type of compute workers CPU or GPU, internal cluster, or cloud provider, including relevant memory and storage.
        \item The paper should provide the amount of compute required for each of the individual experimental runs as well as estimate the total compute. 
        \item The paper should disclose whether the full research project required more compute than the experiments reported in the paper (e.g., preliminary or failed experiments that didn't make it into the paper). 
    \end{itemize}
    
\item {\bf Code Of Ethics}
    \item[] Question: Does the research conducted in the paper conform, in every respect, with the NeurIPS Code of Ethics \url{https://neurips.cc/public/EthicsGuidelines}?
    \item[] Answer: \answerYes{} 
    \item[] Justification: We ensure that the research conducted in the paper complies with the NeurIPS Code of Ethics in all respects.
    \item[] Guidelines: 
    \begin{itemize}
        \item The answer NA means that the authors have not reviewed the NeurIPS Code of Ethics.
        \item If the authors answer No, they should explain the special circumstances that require a deviation from the Code of Ethics.
        \item The authors should make sure to preserve anonymity (e.g., if there is a special consideration due to laws or regulations in their jurisdiction).
    \end{itemize}

\item {\bf Broader Impacts}
    \item[] Question: Does the paper discuss both potential positive societal impacts and negative societal impacts of the work performed?
    \item[] Answer: \answerYes{} 
    \item[] Justification: As a research on adversarial attacks of deep neural networks, the significance lies in revealing possible attack algorithms and the vulnerabilities of the models in advance, in order to help promote corresponding defense methods or the model robustness, and improve the safety of deep neural networks in real-world applications.
    \item[] Guidelines:
    \begin{itemize}
        \item The answer NA means that there is no societal impact of the work performed.
        \item If the authors answer NA or No, they should explain why their work has no societal impact or why the paper does not address societal impact.
        \item Examples of negative societal impacts include potential malicious or unintended uses (e.g., disinformation, generating fake profiles, surveillance), fairness considerations (e.g., deployment of technologies that could make decisions that unfairly impact specific groups), privacy considerations, and security considerations.
        \item The conference expects that many papers will be foundational research and not tied to particular applications, let alone deployments. However, if there is a direct path to any negative applications, the authors should point it out. For example, it is legitimate to point out that an improvement in the quality of generative models could be used to generate deepfakes for disinformation. On the other hand, it is not needed to point out that a generic algorithm for optimizing neural networks could enable people to train models that generate Deepfakes faster.
        \item The authors should consider possible harms that could arise when the technology is being used as intended and functioning correctly, harms that could arise when the technology is being used as intended but gives incorrect results, and harms following from (intentional or unintentional) misuse of the technology.
        \item If there are negative societal impacts, the authors could also discuss possible mitigation strategies (e.g., gated release of models, providing defenses in addition to attacks, mechanisms for monitoring misuse, mechanisms to monitor how a system learns from feedback over time, improving the efficiency and accessibility of ML).
    \end{itemize}
    
\item {\bf Safeguards}
    \item[] Question: Does the paper describe safeguards that have been put in place for responsible release of data or models that have a high risk for misuse (e.g., pretrained language models, image generators, or scraped datasets)?
    \item[] Answer: \answerNA{} 
    \item[] Justification: The paper poses no such risks.
    \item[] Guidelines:
    \begin{itemize}
        \item The answer NA means that the paper poses no such risks.
        \item Released models that have a high risk for misuse or dual-use should be released with necessary safeguards to allow for controlled use of the model, for example by requiring that users adhere to usage guidelines or restrictions to access the model or implementing safety filters. 
        \item Datasets that have been scraped from the Internet could pose safety risks. The authors should describe how they avoided releasing unsafe images.
        \item We recognize that providing effective safeguards is challenging, and many papers do not require this, but we encourage authors to take this into account and make a best faith effort.
    \end{itemize}

\item {\bf Licenses for existing assets}
    \item[] Question: Are the creators or original owners of assets (e.g., code, data, models), used in the paper, properly credited and are the license and terms of use explicitly mentioned and properly respected?
    \item[] Answer: \answerYes{} 
    \item[] Justification: The dataset we used is under the MIT License, and it has been properly cited in the paper with its URL.
    \item[] Guidelines:
    \begin{itemize}
        \item The answer NA means that the paper does not use existing assets.
        \item The authors should cite the original paper that produced the code package or dataset.
        \item The authors should state which version of the asset is used and, if possible, include a URL.
        \item The name of the license (e.g., CC-BY 4.0) should be included for each asset.
        \item For scraped data from a particular source (e.g., website), the copyright and terms of service of that source should be provided.
        \item If assets are released, the license, copyright information, and terms of use in the package should be provided. For popular datasets, \url{paperswithcode.com/datasets} has curated licenses for some datasets. Their licensing guide can help determine the license of a dataset.
        \item For existing datasets that are re-packaged, both the original license and the license of the derived asset (if it has changed) should be provided.
        \item If this information is not available online, the authors are encouraged to reach out to the asset's creators.
    \end{itemize}

\item {\bf New Assets}
    \item[] Question: Are new assets introduced in the paper well documented and is the documentation provided alongside the assets?
    \item[] Answer: \answerNA{} 
    \item[] Justification: The paper does not release new assets.
    \item[] Guidelines:
    \begin{itemize}
        \item The answer NA means that the paper does not release new assets.
        \item Researchers should communicate the details of the dataset/code/model as part of their submissions via structured templates. This includes details about training, license, limitations, etc. 
        \item The paper should discuss whether and how consent was obtained from people whose asset is used.
        \item At submission time, remember to anonymize your assets (if applicable). You can either create an anonymized URL or include an anonymized zip file.
    \end{itemize}

\item {\bf Crowdsourcing and Research with Human Subjects}
    \item[] Question: For crowdsourcing experiments and research with human subjects, does the paper include the full text of instructions given to participants and screenshots, if applicable, as well as details about compensation (if any)? 
    \item[] Answer: \answerNA{} 
    \item[] Justification: The paper does not involve crowdsourcing nor research with human subjects.
    \item[] Guidelines:
    \begin{itemize}
        \item The answer NA means that the paper does not involve crowdsourcing nor research with human subjects.
        \item Including this information in the supplemental material is fine, but if the main contribution of the paper involves human subjects, then as much detail as possible should be included in the main paper. 
        \item According to the NeurIPS Code of Ethics, workers involved in data collection, curation, or other labor should be paid at least the minimum wage in the country of the data collector. 
    \end{itemize}

\item {\bf Institutional Review Board (IRB) Approvals or Equivalent for Research with Human Subjects}
    \item[] Question: Does the paper describe potential risks incurred by study participants, whether such risks were disclosed to the subjects, and whether Institutional Review Board (IRB) approvals (or an equivalent approval/review based on the requirements of your country or institution) were obtained?
    \item[] Answer: \answerNA{} 
    \item[] Justification: The paper does not involve crowdsourcing nor research with human subjects.
    \item[] Guidelines:
    \begin{itemize}
        \item The answer NA means that the paper does not involve crowdsourcing nor research with human subjects.
        \item Depending on the country in which research is conducted, IRB approval (or equivalent) may be required for any human subjects research. If you obtained IRB approval, you should clearly state this in the paper. 
        \item We recognize that the procedures for this may vary significantly between institutions and locations, and we expect authors to adhere to the NeurIPS Code of Ethics and the guidelines for their institution. 
        \item For initial submissions, do not include any information that would break anonymity (if applicable), such as the institution conducting the review.
    \end{itemize}

\end{enumerate}

\end{document}